\newcommand{\algname}{\text{NeuralUCB}}
\newcommand{\ifcomments}{\iftrue}
\def \CC {}
\def \NEW {}
\newcommand{\dsfont}[1]{\texttt{#1}}
\def \la {\langle}
\def \ra {\rangle}
\icmltitlerunning{Neural Contextual Bandits with UCB-based Exploration}
\begin{document}

\twocolumn[
\icmltitle{Neural Contextual Bandits with UCB-based Exploration}



\icmlsetsymbol{equal}{*}

\begin{icmlauthorlist}
\icmlauthor{Dongruo Zhou}{ucla}
\icmlauthor{Lihong Li}{google}
\icmlauthor{Quanquan Gu}{ucla}
\end{icmlauthorlist}

\icmlaffiliation{ucla}{Department of Computer Science, University of California, Los Angeles, CA 90095, USA}
\icmlaffiliation{google}{Google Research, USA}

\icmlcorrespondingauthor{Quanquan Gu}{qgu@cs.ucla.edu}

\icmlkeywords{Machine Learning, ICML}

\vskip 0.3in
]



\printAffiliationsAndNotice{} 

\begin{abstract}
We study the stochastic contextual bandit problem, where the reward is generated from an unknown function with additive noise.  No assumption is made about the reward function other than boundedness.  We propose a new algorithm, NeuralUCB, which leverages the representation power of deep neural networks and uses a neural network-based random feature mapping to construct an upper confidence bound (UCB) of reward for efficient exploration. We prove that, under standard assumptions, NeuralUCB achieves $\tilde O(\sqrt{T})$ regret, 
where $T$ is the number of rounds. To the best of our knowledge, it is the first neural network-based contextual bandit algorithm with a near-optimal regret guarantee.
We also show the algorithm is empirically competitive against representative baselines in a number of benchmarks.
\end{abstract}

\section{Introduction}
The stochastic contextual bandit problem has been extensively studied in machine learning~\citep{langford08epoch,bubeck12regret,lattimore19bandit}: 
at round $t \in \{1,2,\ldots, T\}$, an agent is presented with a set of $K$
actions, each of which is associated with a $d$-dimensional feature vector. 
After choosing an action, the agent will receive a stochastic reward generated from some unknown distribution conditioned on 
the action's feature vector. The goal of the agent is to maximize the expected cumulative rewards over $T$ rounds. 
Contextual bandit algorithms have been applied to many real-world applications, such as personalized recommendation, advertising and Web search. 

The most studied model 
in the literature is linear contextual bandits~\citep{auer2002using,abe2003reinforcement,dani2008stochastic, rusmevichientong2010linearly}, 
which assumes that the expected reward at each round is linear in the feature vector. 
While successful in both theory and practice~\citep{li2010contextual,chu2011contextual,abbasi2011improved}, the linear-reward assumption it makes
often fails to hold in practice, which motivates the study of nonlinear or nonparametric contextual bandits~\citep{filippi2010parametric,srinivas2009gaussian,bubeck2011x,valko2013finite}.
However, they still require fairly restrictive assumptions on the reward function.  For instance, \citet{filippi2010parametric} make a generalized linear model assumption on the reward, \citet{bubeck2011x} require it to have a Lipschitz continuous property in a proper metric space, and \citet{valko2013finite} assume the reward function belongs to some Reproducing Kernel Hilbert Space (RKHS).

In order to overcome the above shortcomings, 
deep neural networks (DNNs) ~\citep{goodfellow2016deep} have been introduced to learn the underlying reward function in contextual bandit problem, thanks to their strong representation power.  
We call these approaches collectively as \emph{neural contextual bandit algorithms}.
Given the fact that DNNs enable the agent to 
make use of nonlinear models with less domain knowledge, existing work \citep{riquelme2018deep,zahavy2019deep} study \emph{neural-linear bandits}. 
That is, they use 
all but the last layers of a DNN as a feature map, which transforms contexts from the raw input space to a low-dimensional space, usually with better representation and less frequent updates. Then they learn a linear exploration
policy on top of the last hidden layer of the DNN with more frequent updates. These attempts have achieved great empirical success, but no regret guarantees are provided.

In this paper, we consider 
\emph{provably efficient} neural contextual bandit algorithms.  The new algorithm, \algname, uses a neural network 
to learn the unknown reward function, and follows a UCB strategy for exploration.  At the core of the algorithm is the novel use of DNN-based random feature mappings to construct the UCB.  Its regret analysis 
is built on recent advances on optimization and generalization of deep neural networks~\citep{jacot2018neural, arora2019exact, cao2019generalization2}. 
Crucially, the analysis makes no modeling assumptions about the reward function, other than that it be bounded. 
While the main focus of our paper is theoretical, we also show in a few benchmark problems the effectiveness of \algname, and demonstrate its benefits against several representative baselines.

\newcommand{\reducevspace}{\vspace{-2mm}}
Our main contributions are as follows:
\reducevspace
\begin{itemize}[leftmargin = *]
    \item We propose a neural contextual bandit algorithm 
    that can be regarded as an extension of existing (generalized) linear bandit algorithms~\citep{abbasi2011improved,filippi2010parametric,li2010contextual,li2017provably} to the case of arbitrary bounded reward functions.
    \reducevspace
    \item We prove that, under 
    standard assumptions, our algorithm is able to achieve $\tilde O(\tilde d \sqrt{T})$ regret,  where $\tilde d$ is the effective dimension of a neural tangent kernel matrix and $T$ is the number of rounds. The bound recovers the existing $\tilde O( d \sqrt{T})$ regret for linear contextual bandit as a special case \citep{abbasi2011improved}, where $d$ is the dimension of context.
    \reducevspace
    \item We demonstrate empirically the effectiveness of the algorithm in both synthetic and benchmark problems.
\end{itemize}
\reducevspace

\paragraph{Notation:} Scalars are denoted by lower case letters, vectors by lower case bold face letters, and matrices by upper case bold face letters. For a positive integer $k$, $[k]$ denotes $\{1,\dots,k\}$. For a vector $\btheta \in \RR^d$, we denote its $\ell_2$ norm by $\| \btheta \|_2 = \sqrt{\sum_{i=1}^d \theta_i^2}$ and its $j$-th coordinate by $[\btheta]_j$.  
For a matrix $\Ab \in \RR^{d \times d}$, we denote its spectral norm, Frobenius norm, and $(i,j)$-th entry by $\|\Ab\|_2$, $\|\Ab\|_F$, and $[\Ab]_{i,j}$, respectively.
We denote a sequence of vectors by $\{\btheta_j\}_{j=1}^t$, and similarly for matrices. 
For two sequences $\{a_n\}$ and $\{b_n\}$, we use $a_n = O(b_n)$ to denote that there exists some constant $C>0$ such that $a_n \leq Cb_n$; similarly, $a_n = \Omega(b_n)$ means there exists some constant $C'>0$ such that $a_n \geq C'b_n$. In addition, we use $\tilde O(\cdot)$ to hide logarithmic factors. We say a random variable $X$ is $\nu$-sub-Gaussian if $\EE \exp(\lambda(X - \EE X)) \leq\exp(\lambda^2\nu^2/2)$ for any $\lambda>0$.

\section{Problem Setting}
We consider the stochastic $K$-armed contextual bandit problem, where the total number of rounds $T$ is known. At round $t \in [T]$, the agent observes the context consisting of $K$ feature vectors: $\{\xb_{t,a} \in \RR^d~|~a \in [K]\}$. The agent selects an action $a_t$ 
and receives a reward $r_{t,a_t}$. For brevity, we denote by $\{\xb^i\}_{i=1}^{TK}$ the collection of $\{\xb_{1,1}, \xb_{1,2},\dots,\xb_{T,K}\}$. 
Our goal is to maximize the following \emph{pseudo regret} (or \emph{regret} for short):
\begin{align}
    R_T = \EE\bigg[\sum_{t=1}^T (r_{t,a_t^*} - r_{t,a_t})\bigg],\label{def:regret}
\end{align}
where $a_t^* = \argmax_{a \in[K]}\EE [r_{t,a}]$ is the optimal action at round $t$ that maximizes the expected reward. 

This work makes the following assumption about reward generation: for any round $t$,
\begin{align}
    r_{t,a_t} = h(\xb_{t,a_t}) + \xi_t,\label{eq:reward function}
\end{align}
where $h$ is an unknown function satisfying $0 \leq h(\xb) \leq 1$ for any $\xb$, and $\xi_t$ is $\nu$-sub-Gaussian noise conditioned on $\xb_{1,a_1},\dots,\xb_{t-1,a_{t-1}}$ satisfying $\EE \xi_t = 0$. The $\nu$-sub-Gaussian assumption for $\xi_t$ is standard in the stochastic bandit literature~\citep[e.g.,][]{abbasi2011improved,li2017provably}, and is satisfied by, for example, any bounded noise. 
The bounded $h$ assumption holds true when $h$ belongs to linear functions, generalized linear functions, Gaussian processes, and kernel functions with bounded RKHS norm over a bounded domain, among others.

In order to learn the reward function $h$ in \eqref{eq:reward function}, we propose to use a fully connected neural networks with depth $L \geq 2$:
\begin{align}
    &f(\xb; \btheta) = \sqrt{m}\Wb_L \sigma\Big(\Wb_{L-1}\sigma\big(\cdots \sigma (\Wb_1\xb)\big)\Big), \label{def:network}
\end{align}
where $\sigma(x) = \max\{x,0\}$ is the rectified linear unit (ReLU) activation function, $\Wb_1 \in \RR^{m \times d}, \Wb_i \in \RR^{m \times m}, 2 \leq i \leq L-1, \Wb_L \in \RR^{m \times 1}$, and $\btheta = [\text{vec}(\Wb_1)^\top,\dots,\text{vec}(\Wb_L)^\top]^\top \in \RR^{p}$ with $p = m+md+m^2 (L-1)$. 
Without loss of generality, we assume that the width of each hidden layer is the same (i.e., $m$) for convenience in analysis. 
We denote the gradient of the neural network function by $\gb(\xb; \btheta) = \nabla_{\btheta} f(\xb; \btheta) \in \RR^p$.

\section{The \algname\ Algorithm}\label{sec:algorithm}

The key idea of \algname\ (Algorithm~\ref{algorithm:2}) is to use a neural network $f(\xb; \btheta)$ to predict the reward of context $\xb$, 
and upper confidence bounds computed from the network to guide exploration \citep{auer2002using}.

\paragraph{Initialization} It initializes the network 
by randomly generating each entry of $\btheta$ from an appropriate Gaussian distribution: for $1 \leq l\leq L-1$, $\Wb_l$ is set to be $\begin{pmatrix}
    \Wb & \zero \\
    \zero & \Wb
    \end{pmatrix}$,
where each entry of $\Wb$ is generated independently from $N(0, 4/m)$; 
$\Wb_L$ is set to $(\wb^\top, -\wb^\top)$, where each entry of $\wb$ is generated independently from $N(0, 2/m)$.

\paragraph{Learning}
At round $t$, Algorithm \ref{algorithm:2} 
observes the contexts for all actions, $\{\xb_{t,a}\}_{a=1}^K$.  First, it computes an upper confidence bound $U_{t,a}$ for each action $a$, based on $\xb_{t,a}$, $\theta_{t-1}$ (the current neural network parameter), and a positive scaling factor $\gamma_{t-1}$.  It then chooses action $a_t$ with the largest $U_{t,a}$,
and receives the corresponding reward $r_{t,a_t}$.  At the end of round $t$, \algname\ updates $\btheta_t$ by applying Algorithm~\ref{algorithm:GD} to  (approximately) minimize
$L(\btheta)$ using gradient descent, 
and updates $\gamma_t$. 
We choose gradient descent in Algorithm~\ref{algorithm:GD} for the simplicity of analysis, although the training method can be replaced by stochastic gradient descent with a more involved analysis~\citep{allen2018convergence, zou2018stochastic}.

\begin{algorithm*}[h]
	\caption{\algname}\label{algorithm:2}
	\begin{algorithmic}[1]
	\STATE \textbf{Input:} Number of rounds $T$, regularization parameter $\lambda$, exploration parameter $\nu$, confidence parameter $\delta$, norm parameter $S$, step size $\eta$, number of gradient descent steps $J$, network width $m$, network depth $L$.
	\STATE \textbf{Initialization:} Randomly initialize $\btheta_0$ as described in the text
	\STATE Initialize $\Zb_0 =  \lambda\Ib$
    \FOR{$t = 1,\dots,T$}\STATE Observe $\{\xb_{t,a}\}_{a=1}^K$
    \FOR{$a = 1,\dots,K$}
    \STATE Compute 
        $U_{t,a} = f(\xb_{t,a}; \btheta_{t-1})  + \gamma_{t-1}\sqrt{\gb(\xb_{t,a}; \btheta_{t-1})^\top\Zb_{t-1}^{-1}\gb(\xb_{t,a}; \btheta_{t-1})/m}$
    \STATE Let $a_t =\argmax_{a \in[K]}U_{t,a}$
    \ENDFOR
    \STATE Play $a_t$ and observe reward $r_{t,a_t}$
    \STATE Compute $\Zb_t = \Zb_{t-1} + \gb(\xb_{t,a_t}; \btheta_{t-1})\gb(\xb_{t,a_t}; \btheta_{t-1})^\top/m$ 
    \STATE Let $\btheta_t =  \text{TrainNN}(\lambda, \eta, J, m, \{\xb_{i, a_i}\}_{i=1}^t, \{r_{i, a_i}\}_{i=1}^t, \btheta_0)$
    \STATE Compute 
    \begin{align}
        \gamma_t &= \sqrt{1+C_1m^{-1/6}\sqrt{\log m}L^4t^{7/6}\lambda^{-7/6}} \cdot \bigg(\nu\sqrt{\log\frac{\det \Zb_t}{\det \lambda \Ib} + C_2m^{-1/6}\sqrt{\log m} L^4t^{5/3}\lambda^{-1/6}-2\log\delta} + \sqrt{\lambda}S\bigg)\notag \\
    &\qquad + (\lambda + C_3 tL)\Big[(1- \eta m \lambda)^{J/2} \sqrt{t/\lambda} + m^{-1/6}\sqrt{\log m}L^{7/2}t^{5/3}\lambda^{-5/3}(1+\sqrt{t/\lambda})\Big].\notag
    \end{align}
        \ENDFOR
	\end{algorithmic}
\end{algorithm*}
\vspace{5mm}
\begin{algorithm}
	\caption{TrainNN($\lambda, \eta, U, m, \{\xb_{i, a_i}\}_{i=1}^t, \{r_{i, a_i}\}_{i=1}^t, \btheta^{(0)}$)} \label{algorithm:GD} 
	\begin{algorithmic}[1]
	\STATE \textbf{Input:} Regularization parameter $\lambda$, step size $\eta$, number of gradient descent steps $U$, network width $m$, contexts $\{\xb_{i, a_i}\}_{i=1}^t$, rewards $\{r_{i, a_i}\}_{i=1}^t$, initial parameter $\btheta^{(0)}$.
    \STATE Define $\cL (\btheta) = \sum_{i=1}^t ( f(\xb_{i, a_i}; \btheta) - r_{i,a_i})^2/2 + m\lambda\|\btheta - \btheta^{(0)}\|_2^2/2$.
    \FOR{$j = 0, \dots, J-1$}
    \STATE $\btheta^{(j+1)} = \btheta^{(j)} - \eta\nabla \cL(\btheta^{(j)})$
    \ENDFOR 
    \STATE \textbf{Return} $\btheta^{(J)}$.
	\end{algorithmic}
\end{algorithm}

\paragraph{Comparison with Existing Algorithms}
We compare \algname\ with other neural contextual bandit algorithms. \citet{allesiardo2014neural} proposed NeuralBandit which consists of $K$ neural networks. It uses a committee of networks to compute the score of each action and chooses an action with the $\epsilon$-greedy strategy. In contrast, our \algname\ uses upper confidence bound-based exploration, which is more effective than $\epsilon$-greedy. In addition, our algorithm only uses one neural network instead of $K$ networks, thus can be computationally more efficient.

\citet{lipton2018bbq} used Thompson sampling on deep neural networks (through variational inference) in reinforcement learning; a variant is proposed by \citet{azizzadenesheli2018efficient} that works well on a set of Atari benchmarks. 
\citet{riquelme2018deep} proposed NeuralLinear, which uses the first $L-1$ layers of a $L$-layer DNN to learn a representation, then applies Thompson sampling on the last layer to choose action. \citet{zahavy2019deep} proposed a NeuralLinear with limited memory (NeuralLinearLM), which also uses the first $L-1$ layers of a $L$-layer DNN to learn a representation and applies Thompson sampling on the last layer. Instead of computing the exact mean and variance in Thompson sampling, NeuralLinearLM only computes their approximation. Unlike NeuralLinear and NeuralLinearLM, \algname\ uses the entire DNN to learn the representation and constructs the upper confidence bound based on the random feature mapping defined by the neural network gradient.  Finally, \citet{kveton20randomized} studied the use of reward perturbation for exploration in neural network-based bandit algorithms.

\noindent\textbf{A Variant of \algname} called $\algname_0$ is described in Appendix~\ref{sec:original}.  It can be viewed as a simplified version of $\algname$ where only the first-order Taylor approximation of the neural network around the initialized parameter is updated through online ridge regression. 
In this sense, $\algname_0$ can be seen as KernelUCB~\citep{valko2013finite} specialized to the Neural Tangent Kernel~\citep{jacot2018neural}, or LinUCB~\citep{li2010contextual} with Neural Tangent Random Features~\citep{cao2019generalization2}.

While this variant has a comparable regret bound as \algname, we expect the latter to be stronger in practice. Indeed, as shown by \citet{allen2019can}, the Neural Tangent Kernel does not seem to completely realize the representation power of neural networks in supervised learning.
A similar phenomenon will be demonstrated for contextual bandit learning in Section~\ref{sec:experiments}.

\section{Regret Analysis}\label{sec:regret_analysis}

This section analyzes the regret of \algname. 
Recall that $\{\xb^i\}_{i=1}^{TK}$ is the collection of all $\{\xb_{t,a}\}$.  Our regret analysis is built upon the recently proposed neural tangent kernel matrix \citep{jacot2018neural}:

\begin{definition}[\citet{jacot2018neural,cao2019generalization2}]\label{def:ntk}
Let $\{\xb^i\}_{i=1}^{TK}$ be a set of contexts.  Define
\begin{align*}
    \tilde \Hb_{i,j}^{(1)} &= \bSigma_{i,j}^{(1)} = \la \xb^i, \xb^j\ra, ~~~~~~~~  \Ab_{i,j}^{(l)} = 
    \begin{pmatrix}
    \bSigma_{i,i}^{(l)} & \bSigma_{i,j}^{(l)} \\
    \bSigma_{i,j}^{(l)} & \bSigma_{j,j}^{(l)} 
    \end{pmatrix},\notag \\
    \bSigma_{i,j}^{(l+1)} &= 2\EE_{(u, v)\sim N(\zero, \Ab_{i,j}^{(l)})} \left[\sigma(u)\sigma(v)\right],\notag \\
    \tilde \Hb_{i,j}^{(l+1)} &= 2\tilde \Hb_{i,j}^{(l)}\EE_{(u, v)\sim N(\zero, \Ab_{i,j}^{(l)})} \left[\sigma'(u)\sigma'(v)\right] + \bSigma_{i,j}^{(l+1)}.\notag
\end{align*}
Then, $\Hb = (\tilde \Hb^{(L)} + \bSigma^{(L)})/2$ is called the \emph{neural tangent kernel (NTK)} matrix on the context set. 
\end{definition}
In the above definition, the Gram matrix $\Hb$ of the NTK on the contexts $\{\xb^i\}_{i=1}^{TK}$ for $L$-layer neural networks is defined recursively from the input layer all the way to the output layer of the network.  Interested readers are referred to \citet{jacot2018neural} for more details about neural tangent kernels.

With Definition \ref{def:ntk}, we may state the following assumption on the contexts: $\{\xb^i\}_{i=1}^{TK}$. 
\begin{assumption}\label{assumption:input}
$\Hb \succeq \lambda_0\Ib$. 
Moreover, for any $1 \leq i \leq TK$, $\|\xb^i\|_2 = 1$ and $[\xb^i]_j =[\xb^i]_{j+d/2}$.
\end{assumption}
The first part of the assumption says that the neural tangent kernel matrix is non-singular, a mild assumption commonly made in the related literature \citep{du2018gradientdeep,arora2019exact,  cao2019generalization2}.  
It can be satisfied as long as \emph{no} two contexts in $\{\xb^i\}_{i=1}^{TK}$ are parallel. 
The second part is also mild and is just for convenience in analysis: for any context $\xb, \|\xb\|_2 = 1$, 
we can always construct a new context $\xb' = [\xb^\top, \xb^\top]^\top/\sqrt{2}$ to satisfy Assumption \ref{assumption:input}. It can be verified that if $\btheta_0$ is initialized as in \algname,
then $f(\xb^i; \btheta_0) = 0$ for any $i \in [TK]$.

Next we define the effective dimension of the neural tangent kernel matrix. 
\begin{definition}\label{def:effective_dimension}
The effective dimension $\tilde d$ of the neural tangent kernel matrix on contexts $\{\xb^i\}_{i=1}^{TK}$ is defined as
\begin{align}
    \tilde d = \frac{\log \det (\Ib + \Hb/\lambda)}{\log (1+TK/\lambda)}.
\end{align}
\end{definition}
\begin{remark}\label{remark:effect}
The notion of effective dimension was first introduced by \citet{valko2013finite} for analyzing kernel contextual bandits, which was defined by the eigenvalues of any kernel matrix restricted to the given contexts. We adapt a similar but different definition of \citet{yang2019reinforcement}, which was used for the analysis of kernel-based Q-learning. Suppose the dimension of the reproducing kernel Hilbert space induced by the given kernel is $\hat d$ and the feature mapping $\bpsi: \RR^d \rightarrow \RR^{\hat d}$ induced by the given kernel satisfies $\|\bpsi(\xb)\|_2 \leq 1$ for any $\xb\in \RR^d$. Then, it can be verified that $\tilde d \leq \hat d$, as shown in Appendix \ref{sec:val}. 
\CC{Intuitively, $\tilde d$ measures how quickly the eigenvalues of $\Hb$ diminish, and only depends on $T$ logarithmically in several special cases~\citep{valko2013finite}.}
\end{remark}

Now we are ready to present the main result, which provides the regret bound  $R_T$ of Algorithm~\ref{algorithm:2}. 
\begin{theorem}\label{thm:newalgorithm}
Let $\tilde d$ be the effective dimension, and $\hb = [h(\xb^i)]_{i=1}^{TK} \in \RR^{TK}$.  There exist constant $C_1, C_2>0$, such that for any $\delta \in(0,1)$, if
\begin{align}
&m  \geq \text{poly}(T, L, K, \lambda^{-1}, \lambda_0^{-1}, S^{-1}, \log(1/\delta)), \\
&\eta = C_1( mTL +  m \lambda)^{-1},\notag 
\end{align}
$\lambda \geq \max\{1, S^{-2}\}$, and $S \geq \sqrt{2\hb^\top\Hb^{-1}\hb}$,  then with probability at least $1-\delta$,
the regret of Algorithm \ref{algorithm:2} satisfies
\begin{align}
     R_T 
     & \leq  3\sqrt{T}\sqrt{\tilde d \log (1+TK/\lambda) + 2}\notag \\
     &\qquad \cdot \bigg[\nu\sqrt{\tilde d \log (1+TK/\lambda) + 2-2\log\delta}\notag \\
     &\qquad  +  (\lambda +C_2TL)(1-   \lambda/(TL))^{J/2} \sqrt{T/\lambda}\notag \\
     &\qquad + + 2\sqrt{\lambda}S\bigg]+ 1.\label{corollary:regret-1}
 \end{align}
\end{theorem}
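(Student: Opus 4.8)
The plan is to follow the standard optimism-based regret decomposition for UCB algorithms, adapted to the neural setting in which the linear structure emerges through the neural tangent kernel. The crucial observation is that near the random initialization $\btheta_0$ the network $f(\xb;\btheta)$ is well approximated by its first-order Taylor expansion $\la \gb(\xb;\btheta_0), \btheta - \btheta_0\ra$ (recall $f(\xb^i;\btheta_0)=0$ under Assumption~\ref{assumption:input}), so $\algname$ behaves like a linear bandit in the random feature map $\gb(\xb;\btheta_0)/\sqrt{m}$, whose Gram matrix concentrates to the NTK matrix $\Hb$. First I would establish the linearization and stability guarantees justifying this view: invoking overparameterization results from NTK theory, I would show that the iterates satisfy $\|\btheta_t - \btheta_0\|_2 = O(\sqrt{t/(m\lambda)})$, that the gradient is nearly frozen so $\|\gb(\xb;\btheta_t) - \gb(\xb;\btheta_0)\|_2$ is small, and that the Taylor residual $|f(\xb;\btheta_t) - \la\gb(\xb;\btheta_t),\btheta_t - \btheta_0\ra|$ is negligible for $m$ large. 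These three facts are precisely what the $m^{-1/6}\sqrt{\log m}$ correction terms inside $\gamma_t$ are designed to absorb.

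The core step is to prove that $U_{t,a}$ is a valid upper confidence bound, i.e.\ that with probability at least $1-\delta$ we have $|h(\xb_{t,a}) - f(\xb_{t,a};\btheta_{t-1})| \le \gamma_{t-1}\sqrt{\gb(\xb_{t,a};\btheta_{t-1})^\top \Zb_{t-1}^{-1}\gb(\xb_{t,a};\btheta_{t-1})/m}$ simultaneously for all $t,a$. Here the condition $S \ge \sqrt{2\hb^\top\Hb^{-1}\hb}$ lets me represent $h$ (approximately) as a linear function in the NTK features with bounded norm, while the self-normalized martingale concentration inequality of \citet{abbasi2011improved} controls the sub-Gaussian noise contribution. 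The remaining terms in $\gamma_t$ account respectively for the optimization error of running only $J$ gradient-descent steps in TrainNN, contributing the $(1-\eta m\lambda)^{J/2}\sqrt{t/\lambda}$ factor, and for the linearization and gradient-drift errors from the previous step, contributing the $m^{-1/6}$ perturbation factors.

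Given optimism, the argument closes in the usual way. Since $a_t = \argmax_{a} U_{t,a}$, we have $U_{t,a_t} \ge U_{t,a_t^*} \ge h(\xb_{t,a_t^*})$, so the instantaneous regret is bounded by twice the confidence width at the played arm, $h(\xb_{t,a_t^*}) - h(\xb_{t,a_t}) \le 2\gamma_{t-1}\sqrt{\gb(\xb_{t,a_t};\btheta_{t-1})^\top\Zb_{t-1}^{-1}\gb(\xb_{t,a_t};\btheta_{t-1})/m}$. Summing over $t$, applying Cauchy--Schwarz and the elliptical potential (log-determinant) lemma gives $\sum_t \gb_t^\top\Zb_{t-1}^{-1}\gb_t/m \le 2\log(\det\Zb_T/\det\lambda\Ib)$, and finally I would bound $\log(\det\Zb_T/\det\lambda\Ib) \le \log\det(\Ib+\Hb/\lambda) + \text{(small)} = \tilde d\,\log(1+TK/\lambda) + \text{(small)}$ using Definition~\ref{def:effective_dimension} together with the concentration of the empirical gram matrix to $\Hb$. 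Substituting the monotone bound $\gamma_{t-1}\le\gamma_T$ and combining these pieces yields the stated $\tilde O(\tilde d\sqrt{T})$ regret.

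I expect the main obstacle to be the uniform validity of the confidence bound once the features are time-varying approximations of the fixed initialization features. In standard LinUCB the feature map is fixed, so the self-normalized bound applies verbatim; here $\gb(\xb;\btheta_{t-1})$ drifts with the trained parameter and the ``true'' linear representation of $h$ lives in the NTK feature space rather than in the instantaneous feature space. Carefully propagating the $m^{-1/6}$ perturbation bounds through both the design matrix $\Zb_t$ (so that $\gb_t^\top\Zb_{t-1}^{-1}\gb_t$ computed with trained gradients still matches the initialization-feature quantity) and through the noise and regression terms, while keeping all error contributions summable and simultaneously controllable by a single overparameterization requirement $m \ge \text{poly}(\cdot)$, is the most delicate part of the analysis.
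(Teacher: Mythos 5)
Your proposal is correct and follows essentially the same route as the paper: linearize $h$ via the NTK features at initialization with $\sqrt{m}\|\btheta^*-\btheta_0\|_2\le\sqrt{2\hb^\top\Hb^{-1}\hb}$, establish optimism through the self-normalized confidence bound of \citet{abbasi2011improved} with $\gamma_t$ absorbing the optimization and gradient-drift errors, then close with Cauchy--Schwarz, the elliptical potential lemma, and the bound $\log\det(\Ib+\Hb/\lambda)=\tilde d\log(1+TK/\lambda)$. The obstacle you flag (propagating the $m^{-1/6}$ perturbations between the trained-parameter features and the initialization features, through both $\Zb_t$ and the regression target) is precisely what the paper's auxiliary lemmas on $\|\Zb_t-\bar\Zb_t\|_F$ and $\|\btheta_t-\btheta_0-\bar\Zb_t^{-1}\bar\bbb_t/\sqrt{m}\|_2$ are devoted to.
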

\begin{remark}
It is worth noting that, simply applying results for linear bandits to our algorithm would lead to a linear dependence of $p$ or $\sqrt{p}$ in the regret.  Such a bound is vacuous since in our setting $p$ would be very large compared with the number of rounds $T$ and the input context dimension $d$. In contrast, our regret bound only depends on $\tilde d$, which can be much smaller than $p$. 
\end{remark}

\begin{remark}
\CC{Our regret bound \eqref{corollary:regret-1} has a term $(\lambda +C_2TL)(1-   \lambda/(TL))^{J/2} \sqrt{T/\lambda}$, which characterizes the optimization error of Algorithm \ref{algorithm:GD} after $J$ iterations. 
Setting
\begin{align}
    J = 2\log\frac{\lambda S}{\sqrt{T}(\lambda+C_2TL)}\frac{TL}{\lambda} = \tilde O(TL/\lambda),\label{eq:jchoose}
\end{align}
which is independent of $m$, we have $(\lambda +C_2TL)(1-   \lambda/(TL))^{J/2} \sqrt{T/\lambda} \leq \sqrt{\lambda}S$, so the optimization error is dominated by $\sqrt{\lambda}S$.  Hence, the order of the regret bound is not affected by the error of optimization.
} 
\end{remark}

\begin{remark}\label{remark:rkhsnorm}
With $\nu$ and $\lambda$ treated as constants, $S = \sqrt{2\hb^\top\Hb^{-1}\hb}$, and $J$ given in \eqref{eq:jchoose}, the regret bound  \eqref{corollary:regret-1} becomes $R_T = \tilde O\Big( \sqrt{\tilde d T}\sqrt{\max\{\tilde d, \hb^\top\Hb^{-1}\hb\}}\Big)$. Specifically, if $h$ belongs to the RKHS $\cH$ induced by the neural tangent kernel with bounded RKHS norm $\|h\|_{\cH}$, we have $\|h\|_{\cH} \geq  \sqrt{\hb^\top\Hb^{-1}\hb}$; \CC{see Appendix \ref{sec:RKHSnorm} for more details.}  Thus our regret bound can be further written as
\begin{equation}
R_T = \tilde O\Big( \sqrt{\tilde d T}\sqrt{\max\{\tilde d, \|h\|_{\cH}\}}\Big).
\end{equation}
\end{remark}

The high-probability result in Theorem \ref{thm:newalgorithm} can be used to obtain a bound on the expected regret.
\begin{corollary}\label{coro:expectation}
Under the same conditions in Theorem \ref{thm:newalgorithm}, there exists a positive constant $C$ such that
\begin{align}
    &\EE [R_T] \notag \\
     & \leq  2+3\sqrt{T}\sqrt{\tilde d \log (1+TK/\lambda) + 2}\notag \\
     &\qquad \cdot\bigg[\nu\sqrt{\tilde d \log (1+TK/\lambda) + 2+2\log T}\notag \\
     &\qquad + 2\sqrt{\lambda}S +  (\lambda + CTL)(1-  \lambda/(TL))^{J/2} \sqrt{T/\lambda}\bigg] .\notag
\end{align}
\end{corollary}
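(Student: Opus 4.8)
The plan is to obtain the expected-regret bound from the high-probability guarantee of Theorem~\ref{thm:newalgorithm} via a standard failure-event argument, and then to optimize over the confidence level $\delta$. First I would note that the quantity actually controlled in Theorem~\ref{thm:newalgorithm} is the \emph{realized} cumulative pseudo-regret $\bar R_T = \sum_{t=1}^T \big(h(\xb_{t,a_t^*}) - h(\xb_{t,a_t})\big)$, a random variable, and that $R_T = \EE[\bar R_T]$. This identity follows from the definition \eqref{def:regret} together with $\EE[r_{t,a}] = h(\xb_{t,a})$ from \eqref{eq:reward function}, using the tower rule and the fact that $a_t$ is measurable with respect to the history available before the noise $\xi_t$ is drawn, so that the zero-mean noise terms vanish in expectation.

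Next, let $\mathcal{E}$ denote the event on which the bound \eqref{corollary:regret-1} holds, and write $B(\delta)$ for its right-hand side; by the theorem, $\Pr[\mathcal{E}] \ge 1-\delta$. I would then split the expectation over $\mathcal{E}$ and its complement,
\begin{align*}
    \EE[\bar R_T] = \EE\big[\bar R_T\, \mathbf{1}\{\mathcal{E}\}\big] + \EE\big[\bar R_T\, \mathbf{1}\{\mathcal{E}^c\}\big] \le B(\delta) + T\cdot\Pr[\mathcal{E}^c] \le B(\delta) + \delta T.
\end{align*}
The only ingredient beyond the theorem itself is the deterministic worst-case bound $\bar R_T \le T$ used on $\mathcal{E}^c$: since $0 \le h \le 1$ by assumption and $a_t^*$ is the maximizer, each summand satisfies $0 \le h(\xb_{t,a_t^*}) - h(\xb_{t,a_t}) \le 1$, so the realized regret never exceeds $T$ regardless of the algorithm's behavior.

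Finally, I would choose $\delta = 1/T$ to balance the two terms. With this choice $\delta T = 1$, while inside $B(\delta)$ the quantity $-2\log\delta$ becomes $+2\log T$, matching exactly the replacement of $-2\log\delta$ by $2\log T$ under the square root in the corollary. Adding the extra $\delta T = 1$ to the standalone $+1$ already present in $B(\delta)$ produces the leading constant $2$ of the corollary, and all remaining factors carry over verbatim. I would also check that the width requirement $m \ge \text{poly}(T,L,K,\lambda^{-1},\lambda_0^{-1},S^{-1},\log(1/\delta))$ remains satisfiable at $\delta = 1/T$, which it is since $\log(1/\delta) = \log T$ is absorbed into the polynomial dependence. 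This step is elementary, so there is no serious obstacle; the only point requiring care is that the explicit $\delta$-dependence of $B(\delta)$ forbids taking the expectation directly, which is precisely why the failure-event split together with the boundedness bound $\bar R_T \le T$ is the right device.
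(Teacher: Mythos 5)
Your proposal is correct and matches the paper's own argument: condition on the high-probability event of Theorem~\ref{thm:newalgorithm}, use $0 \le h \le 1$ to bound the realized regret by $T$ on the failure event, and set $\delta = 1/T$ so that $-2\log\delta$ becomes $2\log T$ and the $\delta T = 1$ term combines with the existing $+1$ to give the leading $2$. Your extra remark distinguishing the realized pseudo-regret from its expectation (and noting the noise terms vanish by the tower rule) is a detail the paper's proof leaves implicit, but it does not change the route.
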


\section{Proof of Main Result}\label{section:newtheoryproof}

This section outlines the proof of Theorem \ref{thm:newalgorithm}, which has to deal with the following technical challenges:
\begin{itemize}[leftmargin = *]
    \item We do not make parametric assumptions on the reward function as some previous work~\citep{filippi2010parametric, chu2011contextual,abbasi2011improved}. 
    \item To avoid strong parametric assumptions, we use overparameterized neural networks, which implies $m$ (and thus $p$) is very large.  Therefore, we need to make sure the regret bound is independent of $m$. 
    \item Unlike the \emph{static} feature mapping used in kernel bandit algorithms~\citep{valko2013finite}, \algname\ uses a neural network $f(\xb; \btheta_t)$ and its gradient $\gb(\xb;\btheta_t)$ as a \emph{dynamic} feature mapping depending on $\btheta_t$. 
    This difference makes the analysis of \algname\ more difficult.
\end{itemize}

These challenges are addressed by the following technical lemmas, whose proofs are gathered in the appendix.
\begin{lemma}\label{lemma:equal}
There exists a positive constant $\bar C$ such that for any $\delta \in (0,1)$, if $m \geq \bar CT^4K^4L^6\log(T^2K^2L/\delta)/\lambda_0^4$, then with probability at least $1-\delta$, there exists a $\btheta^* \in \RR^p$ such that
\begin{align}
    &h(\xb^i) = \la \gb(\xb^i; \btheta_0), \btheta^* - \btheta_0\ra,\notag \\
    &\sqrt{m}\|\btheta^* - \btheta_0\|_2 \leq \sqrt{2\hb^\top\Hb^{-1}\hb}, \label{lemma:equal_0}
\end{align}
for all $i \in [TK]$. 
\end{lemma}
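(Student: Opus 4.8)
The plan is to exhibit the minimum-norm solution of a linear system and control its norm through a concentration argument that links the random feature Gram matrix at initialization to the population NTK matrix $\Hb$. Concretely, I would collect the gradient features into $\Gb = [\gb(\xb^1;\btheta_0),\dots,\gb(\xb^{TK};\btheta_0)] \in \RR^{p\times TK}$ and define the \emph{empirical} Gram matrix $\hat\Hb = \Gb^\top\Gb/m \in \RR^{TK\times TK}$, whose $(i,j)$ entry is $\la\gb(\xb^i;\btheta_0),\gb(\xb^j;\btheta_0)\ra/m$. The requirement $h(\xb^i)=\la\gb(\xb^i;\btheta_0),\btheta^*-\btheta_0\ra$ for all $i$ is exactly the linear system $\Gb^\top(\btheta^*-\btheta_0)=\hb$. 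In the overparameterized regime $p\ge TK$ this is underdetermined, so I would take the minimum-norm solution $\btheta^*-\btheta_0=\Gb(\Gb^\top\Gb)^{-1}\hb=\tfrac1m\Gb\hat\Hb^{-1}\hb$, which is well defined once $\hat\Hb$ is shown to be invertible. A direct computation then gives $m\|\btheta^*-\btheta_0\|_2^2=\hb^\top(\tfrac1m\Gb^\top\Gb)^{-1}\hb=\hb^\top\hat\Hb^{-1}\hb$, so the entire statement reduces to proving $\hb^\top\hat\Hb^{-1}\hb\le 2\hb^\top\Hb^{-1}\hb$.

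The reduction is completed by comparing $\hat\Hb$ with $\Hb$ in the positive-semidefinite order. If I can show $\|\hat\Hb-\Hb\|_2\le\lambda_0/2$, then Assumption~\ref{assumption:input} ($\Hb\succeq\lambda_0\Ib$) yields $\hat\Hb\succeq\Hb-(\lambda_0/2)\Ib\succeq\Hb-\tfrac12\Hb=\tfrac12\Hb$, which in particular shows $\hat\Hb\succ 0$ so the minimum-norm solution above exists. Inverting the order-reversing relation $\hat\Hb\succeq\tfrac12\Hb$ gives $\hat\Hb^{-1}\preceq 2\Hb^{-1}$, hence $\hb^\top\hat\Hb^{-1}\hb\le 2\hb^\top\Hb^{-1}\hb$, which is precisely \eqref{lemma:equal_0}. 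Note that the identity $f(\xb^i;\btheta_0)=0$ recorded after Assumption~\ref{assumption:input} is what makes the purely linear form $h(\xb^i)=\la\gb(\xb^i;\btheta_0),\btheta^*-\btheta_0\ra$ the correct representation target, since it is the first-order term of $f$ around $\btheta_0$ with a vanishing zeroth-order term.

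The technical heart, and the main obstacle, is the spectral bound $\|\hat\Hb-\Hb\|_2\le\lambda_0/2$, which is where the width requirement $m\ge\bar C\,T^4K^4L^6\log(T^2K^2L/\delta)/\lambda_0^4$ enters. I would establish it entrywise using the convergence of the random NTK features to the population kernel from the overparameterization literature \citep{arora2019exact,cao2019generalization2,jacot2018neural}: for a fixed pair $(i,j)$, concentration of $\la\gb(\xb^i;\btheta_0),\gb(\xb^j;\btheta_0)\ra/m$ around $\Hb_{i,j}$ gives $|\hat\Hb_{i,j}-\Hb_{i,j}|\le\epsilon$ with probability at least $1-\delta/(TK)^2$ provided $m=\tilde\Omega(L^6/\epsilon^4)$. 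Taking a union bound over all $(TK)^2$ pairs and bounding the spectral norm by the Frobenius norm, $\|\hat\Hb-\Hb\|_2\le\|\hat\Hb-\Hb\|_F\le TK\,\epsilon$, I would set $\epsilon=\lambda_0/(2TK)$; substituting this into the per-entry width requirement produces exactly the stated dependence $m=\tilde\Omega(T^4K^4L^6/\lambda_0^4)$ together with total failure probability at most $\delta$. The delicate points are tracking the correct polynomial-in-$L$ and logarithmic factors in the entrywise concentration and verifying that the specific initialization of \algname{} (the doubled, antisymmetric block structure) is covered by these results; both are best handled by invoking the cited analyses rather than reproving the kernel concentration from scratch.
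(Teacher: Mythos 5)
Your proposal is correct and follows essentially the same route as the paper: the minimum-norm solution $\Gb(\Gb^\top\Gb)^{-1}\hb$ you construct is exactly the paper's SVD-based choice $\Pb\Ab^{-1}\Qb^\top\hb$, and the norm bound is obtained identically via entrywise NTK concentration (Theorem 3.1 of \citet{arora2019exact}), a union bound over the $(TK)^2$ pairs, the Frobenius-norm bound with $\epsilon=\lambda_0/(2TK)$, and the resulting PSD ordering $\Gb^\top\Gb/m\succeq\Hb/2$.
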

Lemma \ref{lemma:equal} suggests that with high probability, \CC{the reward function restricted to $\{\xb^i\}_{i=1}^{TK}$ can be regarded as a linear function of $\gb(\xb^i; \btheta_0)$ parameterized by $\btheta^* - \btheta_0$, where $\btheta^*$ lies in a ball centered at $\btheta_0$. Note that here $\btheta^*$ is not a ground truth parameter for the reward function. Instead, it is introduced only for the sake of analysis.}
Equipped with Lemma \ref{lemma:equal}, we can utilize existing results on linear bandits \citep{abbasi2011improved} to  
show that with high probability, $\btheta^* $ lies in the sequence of confidence sets.
\begin{lemma}\label{lemma:newinset}
There exist positive constants $\bar C_1$ and $\bar C_2$ such that for any $\delta \in (0,1)$, if $\eta \leq \bar C_1(TmL + m\lambda)^{-1}$ and 
\begin{align*}
m \geq \bar C_2\max\big\{ & T^7\lambda^{-7}L^{21}(\log m)^3, \\ & \lambda^{-1/2}L^{-3/2}(\log(TKL^2/\delta))^{3/2}\big\},
\end{align*}
then with probability at least $1-\delta$,
we have $\|\btheta_t - \btheta_0\|_2 \leq 2\sqrt{t/(m\lambda)}$ and $\|\btheta^* - \btheta_t\|_{ \Zb_t} \leq \gamma_t/\sqrt{m}$ for all $t \in [T]$, where $\gamma_t$ is defined in Algorithm \ref{algorithm:2}.
\end{lemma}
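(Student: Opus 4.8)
The plan is to prove the two claims separately, treating the $\ell_2$ bound $\|\btheta_t - \btheta_0\|_2 \le 2\sqrt{t/(m\lambda)}$ as an optimization statement and the ellipsoid containment $\|\btheta^* - \btheta_t\|_{\Zb_t} \le \gamma_t/\sqrt m$ as a linear-bandit confidence statement that has been perturbed by the nonlinearity of the network. The conceptual backbone is Lemma~\ref{lemma:equal}: it lets us treat the mean reward exactly as the linear response $h(\xb^i) = \la \gb(\xb^i;\btheta_0), \btheta^* - \btheta_0\ra$ of the random feature map $\gb(\cdot;\btheta_0)/\sqrt m$, with the target vector $\sqrt m(\btheta^* - \btheta_0)$ having Euclidean norm at most $\sqrt{2\hb^\top\Hb^{-1}\hb} \le S$. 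Thus each observed reward obeys $r_{i,a_i} = \la \gb(\xb_{i,a_i};\btheta_0)/\sqrt m,\ \sqrt m(\btheta^*-\btheta_0)\ra + \xi_i$ with $\nu$-sub-Gaussian noise, which is precisely the regression model underlying the confidence sets of \citet{abbasi2011improved}; what is left is to pay for replacing this idealized linear problem by the actual neural-network quantities.

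I would first assemble the standard overparameterization toolbox (proved as separate lemmas in the appendix and holding on a $1-\delta/2$ event once $m$ exceeds the stated polynomial threshold): a uniform gradient bound $\|\gb(\xb^i;\btheta)\|_2 = O(\sqrt m)$ near initialization; gradient stability $\|\gb(\xb^i;\btheta) - \gb(\xb^i;\btheta_0)\|_2 = O(\sqrt m\cdot\mathrm{poly}(L)\,\omega^{1/3})$ whenever $\|\btheta - \btheta_0\|_2 \le \omega$; the almost-linearity estimate controlling $|f(\xb^i;\btheta) - \la \gb(\xb^i;\btheta_0),\btheta-\btheta_0\ra|$ by the same small quantity; and linear convergence of Algorithm~\ref{algorithm:GD} driven by the $m\lambda\|\btheta-\btheta_0\|_2^2/2$ regularizer. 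For the first claim I would then use that $f(\xb^i;\btheta_0)=0$, so $\cL(\btheta_0)=\tfrac12\sum_{i\le t} r_{i,a_i}^2 = O(t)$ with high probability; since $\cL$ is $m\lambda$-strongly convex in the relevant ball, its minimizer $\hat\btheta_t$ satisfies $\tfrac{m\lambda}{2}\|\hat\btheta_t-\btheta_0\|_2^2 \le \cL(\hat\btheta_t) \le \cL(\btheta_0)$, giving $\|\hat\btheta_t-\btheta_0\|_2 \le \sqrt{t/(m\lambda)}$, and the chosen $\eta$ makes $J$ gradient steps contract the distance to $\hat\btheta_t$ enough that $\btheta_t$ inherits the factor-two bound. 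I would carry this through an induction on $t$, since the radius controls the size of the perturbations at the next round.

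For the confidence-set claim I would introduce the matrix $\bar\Zb_t = \lambda\Ib + \sum_{i\le t}\gb(\xb_{i,a_i};\btheta_0)\gb(\xb_{i,a_i};\btheta_0)^\top/m$ built from initialization gradients, the exact minimizer $\tbtheta_t$ of the \emph{linearized} loss (the ridge estimator $\sqrt m(\tbtheta_t-\btheta_0)=\bar\Zb_t^{-1}\sum_{i\le t}\gb(\xb_{i,a_i};\btheta_0)r_{i,a_i}/\sqrt m$), and the exact minimizer $\hat\btheta_t$ of the true loss $\cL$. Scaling by $\sqrt m$ and using the triangle inequality in the $\Zb_t$-norm,
\begin{align*}
\sqrt m\,\|\btheta^*-\btheta_t\|_{\Zb_t}
&\le \sqrt m\,\|\btheta^*-\tbtheta_t\|_{\Zb_t}
+ \sqrt m\,\|\tbtheta_t-\hat\btheta_t\|_{\Zb_t}\\
&\quad+ \sqrt m\,\|\hat\btheta_t-\btheta_t\|_{\Zb_t}.
\end{align*}
The first term is the genuine linear-bandit piece: I would pass from $\Zb_t$ to $\bar\Zb_t$ (their spectral ratio is $1+O(m^{-1/6}\sqrt{\log m}\,L^4 t^{7/6}\lambda^{-7/6})$ by gradient stability, producing the $\sqrt{1+C_1(\cdots)}$ prefactor), pass from $\det\bar\Zb_t$ to $\det\Zb_t$ inside the logarithm (the $C_2 m^{-1/6}(\cdots)$ term), and then apply the self-normalized martingale bound of \citet{abbasi2011improved} to the $\nu$-sub-Gaussian regression above, yielding $\nu\sqrt{\log(\det\Zb_t/\det\lambda\Ib)+C_2(\cdots)-2\log\delta}+\sqrt\lambda S$. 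The remaining two terms are converted to $\ell_2$ via $\|\cdot\|_{\Zb_t}\le\|\Zb_t\|_2^{1/2}\|\cdot\|_2$ with $\|\Zb_t\|_2 \le \lambda + O(tL)$, supplying the $(\lambda+C_3 tL)$ prefactor: the optimization gap $\|\hat\btheta_t-\btheta_t\|_2$ contributes the linear-convergence factor $(1-\eta m\lambda)^{J/2}\sqrt{t/\lambda}$, while the linearization gap $\|\tbtheta_t-\hat\btheta_t\|_2$, bounded by the almost-linearity estimate, contributes the $m^{-1/6}\sqrt{\log m}L^{7/2}t^{5/3}\lambda^{-5/3}(1+\sqrt{t/\lambda})$ term. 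Collecting these reproduces $\gamma_t/\sqrt m$.

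The main obstacle is the middle of the decomposition: simultaneously controlling the gradient-descent optimization error and the network's deviation from its linearization, both measured in the \emph{data-dependent} $\Zb_t$-norm and with final bounds polynomial in $T,L,\lambda^{-1}$ but free of $m$. This is where overparameterization is indispensable: the almost-convexity / Polyak--{\L}ojasiewicz behavior of $\cL$ in a shrinking ball around $\btheta_0$ is what guarantees both that gradient descent converges linearly to $\hat\btheta_t$ and that $\hat\btheta_t$ and $\tbtheta_t$ stay $O(m^{-1/6})$-close, and the polynomial lower bound on $m$ is precisely what forces every correction term into lower order so the dominant contribution matches the linear-bandit rate. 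A secondary subtlety is that $\Zb_t$ and the feature vectors are built from the \emph{running} parameters $\btheta_{i-1}$ along an adaptively chosen action sequence; the self-normalized bound tolerates this because $\xi_i$ stays conditionally $\nu$-sub-Gaussian given the history, but the passage from $\Zb_t$ to $\bar\Zb_t$ must hold uniformly over $t\in[T]$, which is handled by the union bound absorbed into the $\log(TKL^2/\delta)$ requirement on $m$.
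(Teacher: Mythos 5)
Your proposal follows essentially the same route as the paper: Lemma~\ref{lemma:equal} to linearize the reward in the initialization features, the self-normalized bound of \citet{abbasi2011improved} applied to $\bar\Zb_t$ and the linearized ridge estimator $\btheta_0+\bar\Zb_t^{-1}\bar\bbb_t/\sqrt m$, a norm swap $\Zb_t\leftrightarrow\bar\Zb_t$ paying the $\sqrt{1+\|\Zb_t-\bar\Zb_t\|_2/\lambda}$ prefactor, a determinant swap inside the logarithm, and a $(\lambda+C_3tL)$-weighted $\ell_2$ bound on the residual combining optimization and linearization errors. All of these correspond directly to the paper's $I_1$/$I_2$ decomposition together with Lemmas~\ref{lemma:newboundreference} and~\ref{lemma:newboundz}.

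The one place your plan diverges is in how the residual $\|\btheta_t-\btheta_0-\bar\Zb_t^{-1}\bar\bbb_t/\sqrt m\|_2$ is controlled: you route through the exact minimizer $\hat\btheta_t$ of the nonconvex loss $\cL$ and assert $m\lambda$-strong convexity in the relevant ball, whereas the paper never introduces $\hat\btheta_t$. Instead, Lemma~\ref{lemma:newboundreference} couples the actual gradient-descent iterates $\btheta^{(j)}$ with an auxiliary sequence $\tilde\btheta^{(j)}$ generated by running the \emph{same} gradient descent on the linearized quadratic objective (which genuinely is strongly convex and converges to the ridge solution), and bounds $\|\btheta^{(j)}-\tilde\btheta^{(j)}\|_2$ by a step-by-step recursion using gradient stability and almost-linearity. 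Your version requires existence of, and linear convergence to, a minimizer of a nonconvex objective, plus a separate argument that this minimizer is $O(m^{-2/3}\cdot\mathrm{poly})$-close to the linearized ridge solution; each step is plausible in the NTK regime (and you correctly identify almost-convexity as the mechanism), but the coupling argument is cleaner and avoids having to make sense of $\hat\btheta_t$ at all. This is a matter of execution rather than a gap; the rest of the argument, including the induction maintaining $\|\btheta^{(j)}-\btheta_0\|_2\le\tau$ so that the perturbation lemmas keep applying, matches the paper.
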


\begin{lemma}\label{lemma:newboundh}
Let $a_t^* = \argmax_{a \in [K]}h(\xb_{t, a})$. There exists a positive constant $\bar C$ such that for any $\delta \in (0,1)$, if $\eta$ and $m$ satisfy the same conditions as in Lemma~\ref{lemma:newinset},  
then with probability at least $1-\delta$, we have
\begin{align}
    &h\big(\xb_{t,a_t^*}\big) - h\big(\xb_{t,a_t}\big)\notag \\
     & \leq 2\gamma_{t-1}\min\bigg\{\|\gb(\xb_{t,a_t}; \btheta_{t-1})/\sqrt{m}\|_{\Zb_{t-1}^{-1}}, 1\bigg\} \notag \\
     & \qquad+ \bar C\big(Sm^{-1/6}\sqrt{\log m}T^{7/6}\lambda^{-1/6}L^{7/2}\notag \\
     &\qquad + m^{-1/6}\sqrt{\log m} T^{5/3}\lambda^{-2/3}L^3\big).\notag 
\end{align}
\end{lemma}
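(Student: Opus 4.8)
The plan is to run the standard optimism argument for linear UCB on the \emph{dynamic} feature map $\gb(\cdot;\btheta_{t-1})$, and to absorb the mismatch between the neural network and its local linearization into the two additive error terms. Fix a round $t$ and, for a generic action $a$, abbreviate $\gb_0 = \gb(\xb_{t,a};\btheta_0)$ and $\gb_{t-1} = \gb(\xb_{t,a};\btheta_{t-1})$; let $\btheta^*$ be the vector from Lemma~\ref{lemma:equal}, so that $h(\xb_{t,a}) = \la \gb_0, \btheta^*-\btheta_0\ra$ with $\sqrt{m}\|\btheta^*-\btheta_0\|_2 \leq \sqrt{2\hb^\top\Hb^{-1}\hb} \leq S$.

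The central step is a two-sided bound showing $U_{t,a}$ is a valid confidence bound for $h(\xb_{t,a})$ up to a controlled error. Adding and subtracting terms gives the exact identity
\begin{align*}
 f(\xb_{t,a};\btheta_{t-1}) + \la \gb_{t-1}, \btheta^*-\btheta_{t-1}\ra - h(\xb_{t,a}) = \epsilon_0 + \la \gb_{t-1}-\gb_0,\, \btheta^*-\btheta_{t-1}\ra,
\end{align*}
where $\epsilon_0 := f(\xb_{t,a};\btheta_{t-1}) - \la \gb_0, \btheta_{t-1}-\btheta_0\ra$ is the first-order Taylor error of the network about $\btheta_0$ (using $f(\xb;\btheta_0)=0$). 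I would split the cross term as $\la \gb_{t-1}-\gb_0, \btheta^*-\btheta_0\ra + \la \gb_{t-1}-\gb_0, \btheta_0-\btheta_{t-1}\ra$, so that by Cauchy--Schwarz the whole right-hand side is at most $|\epsilon_0| + \|\gb_{t-1}-\gb_0\|_2\big(\|\btheta^*-\btheta_0\|_2 + \|\btheta_{t-1}-\btheta_0\|_2\big)$. Invoking the over-parameterized estimates for the gradient perturbation $\|\gb_{t-1}-\gb_0\|_2$ and the Taylor error $\epsilon_0$ inside the trust region $\|\btheta_{t-1}-\btheta_0\|_2 \leq 2\sqrt{(t-1)/(m\lambda)}$ of Lemma~\ref{lemma:newinset}, the $\|\btheta^*-\btheta_0\|_2 \leq S/\sqrt m$ piece yields the $S$-dependent term $S m^{-1/6}\sqrt{\log m}\,T^{7/6}\lambda^{-1/6}L^{7/2}$, while $\epsilon_0$ together with the $\|\btheta_{t-1}-\btheta_0\|_2$ piece yields the remaining $m^{-1/6}\sqrt{\log m}\,T^{5/3}\lambda^{-2/3}L^3$ term. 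Call their sum $\epsilon$.

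The linear term $\la \gb_{t-1}, \btheta^*-\btheta_{t-1}\ra$ is then controlled by the confidence ellipsoid: by Cauchy--Schwarz in the $\Zb_{t-1}$-geometry and Lemma~\ref{lemma:newinset} at index $t-1$ (the base case $\Zb_0=\lambda\Ib$ handled directly),
\begin{align*}
 |\la \gb_{t-1}, \btheta^*-\btheta_{t-1}\ra| \leq \|\gb_{t-1}\|_{\Zb_{t-1}^{-1}}\,\|\btheta^*-\btheta_{t-1}\|_{\Zb_{t-1}} \leq \gamma_{t-1}\big\|\gb_{t-1}/\sqrt{m}\big\|_{\Zb_{t-1}^{-1}}.
\end{align*}
Combined with the identity above this gives the sandwich $h(\xb_{t,a}) \leq U_{t,a} + \epsilon$ and $U_{t,a} \leq h(\xb_{t,a}) + 2\gamma_{t-1}\|\gb_{t-1}/\sqrt m\|_{\Zb_{t-1}^{-1}} + \epsilon$. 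Applying optimism at $a_t^*$, the greedy rule $U_{t,a_t} \geq U_{t,a_t^*}$, and the upper inequality at $a_t$, the errors add to give $h(\xb_{t,a_t^*}) - h(\xb_{t,a_t}) \leq 2\gamma_{t-1}\|\gb(\xb_{t,a_t};\btheta_{t-1})/\sqrt m\|_{\Zb_{t-1}^{-1}} + 2\epsilon$. To insert the truncation, I would use $h\in[0,1]$ so the left side is at most $1$, and note that $\lambda \geq S^{-2}$ forces $\sqrt\lambda S \geq 1$ and hence $\gamma_{t-1} \geq \sqrt\lambda S \geq 1$; therefore whenever $\|\gb(\xb_{t,a_t};\btheta_{t-1})/\sqrt m\|_{\Zb_{t-1}^{-1}} \geq 1$ the regret is already at most $2\gamma_{t-1}\cdot 1$, which matches the claimed $\min\{\cdot,1\}$ form.

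The main obstacle is the second paragraph: obtaining the stated $m^{-1/6}$-order bounds on the gradient perturbation $\|\gb_{t-1}-\gb_0\|_2$ and the linearization error $\epsilon_0$, uniformly over the rounds. This is precisely the ``dynamic feature map'' difficulty flagged in the text, and it is where the deep over-parameterization machinery (gradient stability and local almost-linearity of $f$ near initialization, cf.\ \citet{cao2019generalization2,allen2018convergence}) is required; keeping these errors at order $m^{-1/6}$ is what dictates the polynomial width requirement. Once the confidence set of Lemma~\ref{lemma:newinset} is in hand, the remaining UCB bookkeeping is routine.
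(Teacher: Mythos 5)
Your proposal is correct and follows essentially the same route as the paper: linearize $h$ via Lemma~\ref{lemma:equal}, pass from $\gb(\cdot;\btheta_0)$ to $\gb(\cdot;\btheta_{t-1})$ using the gradient-perturbation and Taylor-error estimates of \citet{cao2019generalization2,allen2018convergence}, run the UCB sandwich with the confidence ellipsoid of Lemma~\ref{lemma:newinset}, and insert the truncation via $h\in[0,1]$ and $\gamma_{t-1}\geq\sqrt{\lambda}S\geq 1$. The only (harmless) deviation is that by pairing $\gb_{t-1}-\gb_0$ with $\btheta^*-\btheta_{t-1}$ rather than with $\btheta^*-\btheta_0$ you pick up an extra cross term $\|\gb_{t-1}-\gb_0\|_2\|\btheta_{t-1}-\btheta_0\|_2$ of order $m^{-1/6}\sqrt{\log m}\,t^{2/3}\lambda^{-2/3}L^{7/2}$, which the paper avoids but which is absorbed into $\bar C$ at the same $m^{-1/6}$ order.
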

Lemma~\ref{lemma:newboundh} gives an upper bound for $h\big(\xb_{t,a_t^*}\big) - h\big(\xb_{t,a_t}\big)$, which can be used to bound the regret $R_T$. It is worth noting that $\gamma_t$ has a term $\log \det \Zb_t$. A trivial upper bound of $\log \det \Zb_t$ would result in a quadratic dependence on the network width $m$, since the dimension of $\Zb_t$ is $p = md+m^2(L-2)+m$.  Instead, we use the next lemma to establish an $m$-independent upper bound. 
The dependence on $\tilde{d}$ is similar to \citet[Lemma~4]{valko2013finite}, but the proof is different as our notion of effective dimension is different.

\begin{lemma}\label{lemma:newboundregret}
There exist positive constants $\{\bar C_i\}_{i=1}^3$ such that for any $\delta \in (0,1)$, if $m \geq \bar C_1\max\big\{T^7\lambda^{-7}L^{21}(\log m)^3, T^6K^6L^6(\log(TKL^2/\delta))^{3/2}\big\}$ and $\eta \leq \bar C_2(TmL + m\lambda)^{-1}$, 
then with probability at least $1-\delta$, we have
{\small
\begin{align*}
     &\sqrt{\sum_{t=1}^T\gamma_{t-1}^2\min\bigg\{\|\gb(\xb_{t,a_t}; \btheta_{t-1})/\sqrt{m}\|_{\Zb_{t-1}^{-1}}^2, 1\bigg\}} \notag \\
     & \leq  \sqrt{\tilde d\log(1+TK/\lambda) + \Gamma_1}\notag \\
     &\qquad \bigg[\Gamma_2 \bigg(\nu\sqrt{\tilde d\log(1+TK/\lambda)  + \Gamma_1-2\log\delta} + \sqrt{\lambda}S\bigg)\notag \\
    &\qquad +  (\lambda +\bar C_3tL)\Big[(1- \eta m \lambda)^{J/2} \sqrt{T/\lambda} + \Gamma_3(1+\sqrt{T/\lambda})\Big]\bigg]\notag,
\end{align*}
}
where 
\begin{align}
    &\Gamma_1 = 1+\bar C_3m^{-1/6}\sqrt{\log m} L^4T^{5/3}\lambda^{-1/6}, \notag\\
    &\Gamma_2 = \sqrt{1+\bar C_3m^{-1/6}\sqrt{\log m}L^4T^{7/6}\lambda^{-7/6}},\notag\\
    &\Gamma_3 = m^{-1/6}\sqrt{\log m}L^{7/2}T^{5/3}\lambda^{-5/3}.\notag
\end{align}
\end{lemma}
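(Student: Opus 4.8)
The plan is to decouple the nondecreasing confidence radius $\gamma_{t-1}$ from the ``elliptical potential'' sum and to show that both are governed by $\log(\det\Zb_T/\det\lambda\Ib)$, which I will then bound by the effective dimension $\tilde d$. Write $w_t=\|\gb(\xb_{t,a_t};\btheta_{t-1})/\sqrt m\|_{\Zb_{t-1}^{-1}}$. Every $t$-dependent ingredient of $\gamma_t$ in Algorithm~\ref{algorithm:2} is nondecreasing (the powers of $t$, the factor $\sqrt{t/\lambda}$, and $\log(\det\Zb_t/\det\lambda\Ib)$ since $\Zb_t\succeq\Zb_{t-1}$), so $\gamma_{t-1}\le\gamma_{T-1}$ and
\[
\sqrt{\sum_{t=1}^T \gamma_{t-1}^2\min\{w_t^2,1\}}\;\le\;\gamma_{T-1}\sqrt{\sum_{t=1}^T\min\{w_t^2,1\}}.
\]
Because $\Zb_t=\Zb_{t-1}+(\gb(\xb_{t,a_t};\btheta_{t-1})/\sqrt m)(\gb(\xb_{t,a_t};\btheta_{t-1})/\sqrt m)^\top$, the elliptical-potential argument of \citet{abbasi2011improved} applies verbatim with feature vectors $\gb(\xb_{t,a_t};\btheta_{t-1})/\sqrt m$, giving $\sum_{t=1}^T\min\{w_t^2,1\}\lesssim\log(\det\Zb_T/\det\lambda\Ib)$. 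The same quantity also sits inside $\gamma_{T-1}$, so a single $m$-independent bound on $\log(\det\Zb_T/\det\lambda\Ib)$ will feed both factors.

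The core step, and the part I expect to be the main obstacle, is to prove $\log(\det\Zb_T/\det\lambda\Ib)\le\tilde d\log(1+TK/\lambda)+\text{(error)}$ with the error at the $m^{-1/6}\sqrt{\log m}\,L^4T^{5/3}\lambda^{-1/6}$ scale appearing in $\Gamma_1$, and this I would carry out in three moves. First, pass from the \emph{dynamic} feature map to the one at initialization: setting $\bar\Zb_T=\lambda\Ib+\sum_t\gb(\xb_{t,a_t};\btheta_0)\gb(\xb_{t,a_t};\btheta_0)^\top/m$, bound $|\log\det\Zb_T-\log\det\bar\Zb_T|$ through $\log\det M\le\tr(M-\Ib)$ together with a gradient-perturbation estimate for $\|\gb(\xb;\btheta_{t-1})-\gb(\xb;\btheta_0)\|$ and the radius bound $\|\btheta_{t-1}-\btheta_0\|_2\le 2\sqrt{t/(m\lambda)}$ from Lemma~\ref{lemma:newinset}, which produces the $m^{-1/6}$-type error. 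Second, reduce dimension by the determinant identity $\det(\lambda\Ib_p+\Fb\Fb^\top)/\det(\lambda\Ib_p)=\det(\Ib_T+\Fb^\top\Fb/\lambda)$, where $\Fb$ is the $p\times T$ matrix whose columns are the initial features $\gb(\xb_{t,a_t};\btheta_0)/\sqrt m$, so that $\log(\det\bar\Zb_T/\det\lambda\Ib)=\log\det(\Ib_T+\Gb^0/\lambda)$ with the $T\times T$ Gram matrix $[\Gb^0]_{ij}=\gb(\xb_{i,a_i};\btheta_0)^\top\gb(\xb_{j,a_j};\btheta_0)/m$. Third, replace $\Gb^0$ by the neural tangent kernel: using the finite-width approximation that $\|\Gb^0-\bar\Hb\|$ is small (a further $m^{-1/6}$ error, from the estimates of \citet{arora2019exact,cao2019generalization2}), where $\bar\Hb$ is the NTK Gram matrix on the \emph{played} contexts $\{\xb_{t,a_t}\}_{t=1}^T$, and then invoking $\log\det(\Ib_T+\bar\Hb/\lambda)\le\log\det(\Ib_{TK}+\Hb/\lambda)=\tilde d\log(1+TK/\lambda)$, which holds because $\bar\Hb$ is a principal submatrix of the PSD matrix $\Hb$ (via the Schur-complement factorization of the determinant).

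Finally I would reassemble. Substituting the preceding bound for $\log(\det\Zb_{T-1}/\det\lambda\Ib)$ into the definition of $\gamma_{T-1}$, bounding its leading factor $\sqrt{1+C_1m^{-1/6}\cdots}$ by $\Gamma_2$ and its optimization-error bracket by $(\lambda+\bar C_3 TL)[(1-\eta m\lambda)^{J/2}\sqrt{T/\lambda}+\Gamma_3(1+\sqrt{T/\lambda})]$, shows $\gamma_{T-1}$ is at most the bracketed factor of the claim; the identical $\log\det$ estimate combined with $\sum_t\min\{w_t^2,1\}\lesssim\log(\det\Zb_T/\det\lambda\Ib)$ yields the prefactor $\sqrt{\tilde d\log(1+TK/\lambda)+\Gamma_1}$ once the constant and $m^{-1/6}$ errors are folded into $\Gamma_1$. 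The only genuine subtlety is to verify that all approximation errors incurred in the three moves are truly $m$-independent under the stated lower bound on $m$, i.e. that they are absorbed into $\Gamma_1,\Gamma_2,\Gamma_3$ rather than contributing any term growing in $m$; this is precisely where the overparameterization conditions on $m$ are consumed.
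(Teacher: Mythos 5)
Your proposal is correct and follows essentially the same route as the paper's proof: monotonicity of $\gamma_t$ via $\det \Zb_t \leq \det\Zb_T$, the elliptical potential lemma of \citet{abbasi2011improved}, the passage from $\Zb_T$ to $\bar\Zb_T$ via gradient-perturbation bounds, the determinant identity $\det(\Ib+\Fb\Fb^\top)=\det(\Ib+\Fb^\top\Fb)$, and NTK concentration to reach $\tilde d\log(1+TK/\lambda)$. The only (immaterial) difference is the order of the last two moves: the paper first enlarges the sum over played contexts to all $TK$ contexts in the $p$-dimensional space and then applies the identity and the NTK approximation on the full $TK\times TK$ Gram matrix, whereas you reduce to the $T\times T$ Gram matrix of played contexts first and then pass to the full matrix via the principal-submatrix/Schur-complement argument --- both are valid and incur errors of the same order.
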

We are now ready to prove the main result.
\begin{proof}[Proof of Theorem \ref{thm:newalgorithm}]
Lemma~\ref{lemma:newboundh} implies that the total regret $R_T$ can be bounded as follows with a constant $C_1>0$:
\begin{align}
    R_T &= \sum_{t=1}^T \big[h\big(\xb_{t,a_t^*}\big) - h\big(\xb_{t,a_t}\big)\big]\notag \\
     & \leq 2\sum_{t=1}^T\gamma_{t-1}\min\bigg\{\|\gb(\xb_{t,a_t}; \btheta_{t-1})/\sqrt{m}\|_{\Zb_{t-1}^{-1}}, 1\bigg\} \notag \\
     & \qquad+ C_1\big(Sm^{-1/6}\sqrt{\log m}T^{13/6}\lambda^{-1/6}L^{7/2}\notag \\
     &\qquad + m^{-1/6}\sqrt{\log m} T^{8/3}\lambda^{-2/3}L^3\big).\notag 
\end{align}
It can be further bounded as follows:
 \begin{align}
      R_T& \leq 2\sqrt{T\sum_{t=1}^T\gamma_{t-1}^2\min\bigg\{\|\gb(\xb_{t,a_t}; \btheta_{t-1})/\sqrt{m}\|_{\Zb_{t-1}^{-1}}^2, 1\bigg\}} \notag \\
     & \qquad+ C_1\big(Sm^{-1/6}\sqrt{\log m}T^{13/6}\lambda^{-1/6}L^{7/2}\notag \\
     &\qquad + m^{-1/6}\sqrt{\log m} T^{8/3}\lambda^{-2/3}L^3\big)\notag\\
    & \leq 
    2\sqrt{T}\cdot  \sqrt{\tilde d\log(1+TK/\lambda) + \Gamma_1}\notag \\
     &\qquad \bigg[\Gamma_2 \bigg(\nu\sqrt{\tilde d\log(1+TK/\lambda)  + \Gamma_1-2\log\delta} + \sqrt{\lambda}S\bigg)\notag \\
    &\qquad + (\lambda + C_2TL)\Big[(1- \eta m \lambda)^{J/2} \sqrt{T/\lambda}\notag \\
    &\qquad + \Gamma_3(1+\sqrt{T/\lambda})\Big]\bigg]\notag\\
    & \qquad+ C_1\big(Sm^{-1/6}\sqrt{\log m}T^{13/6}\lambda^{-1/6}L^{7/2}\notag \\
     &\qquad + m^{-1/6}\sqrt{\log m} T^{8/3}\lambda^{-2/3}L^3\big)\notag\\
     & \leq 3\sqrt{T}\sqrt{\tilde d \log (1+TK/\lambda) + 2}\notag \\
     &\qquad \cdot \bigg[\nu\sqrt{\tilde d \log (1+TK/\lambda) + 2-2\log\delta}\notag \\
     &\qquad  + (\lambda+C_3TL)(1- \eta m \lambda)^{J/2} \sqrt{T/\lambda}\notag \\
     &\qquad  + 2\sqrt{\lambda}S\bigg]+ 1,\notag
 \end{align}
 where $C_1, C_2, C_3$ are positive constants, the first inequality is due to Cauchy-Schwarz inequality, the second inequality due to Lemma~\ref{lemma:newboundregret}, and the third inequality holds for sufficiently large $m$. This completes our proof.
\end{proof}

\begin{figure*}[t]
	\centering
		\subfigure[$h_1(\xb) = 10(\xb^\top\ab)^2$]{\includegraphics[width=0.32\linewidth]{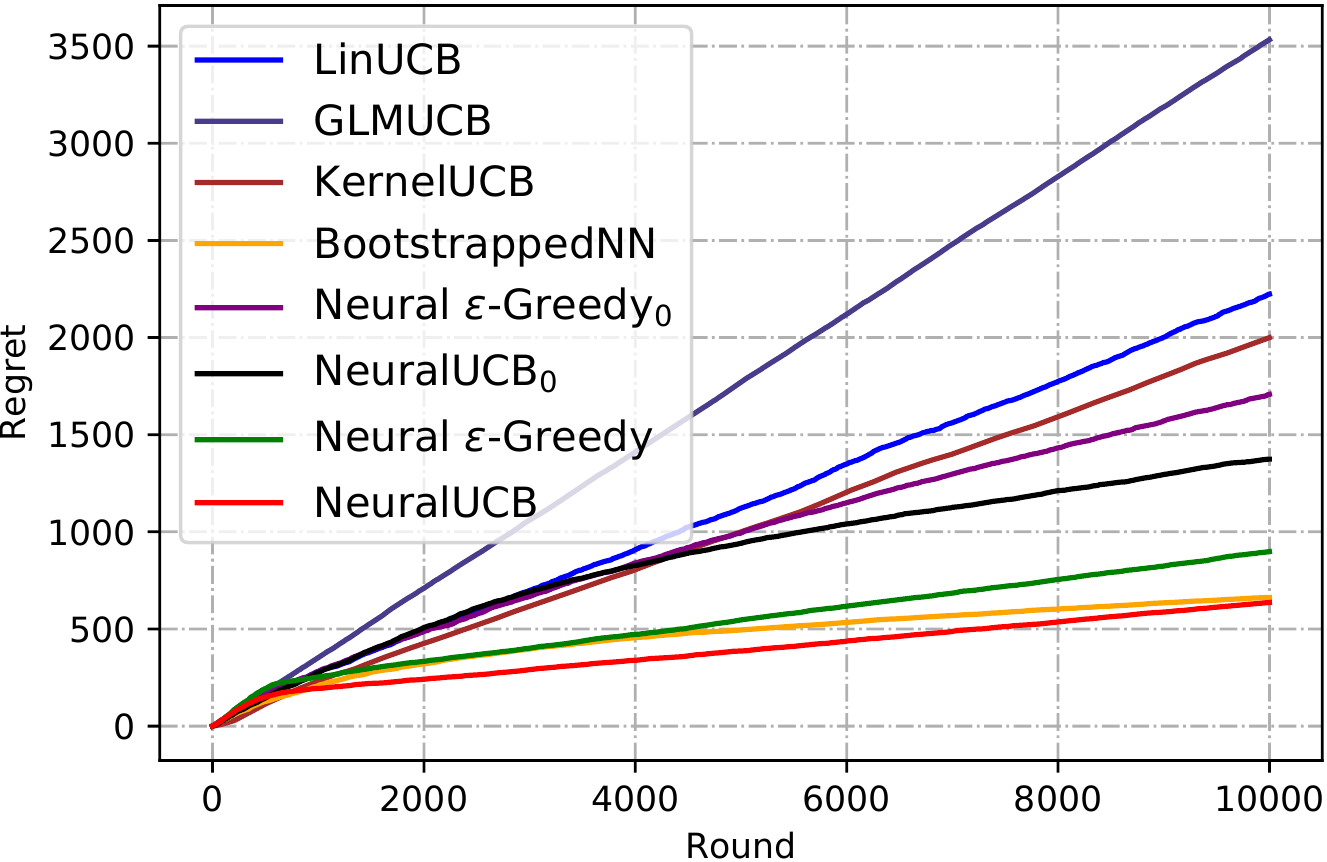}
    \label{fig:poly}}	
		\subfigure[$h_2(\xb) = \xb^\top\Ab^\top\Ab\xb$]{\includegraphics[width=0.32\linewidth]{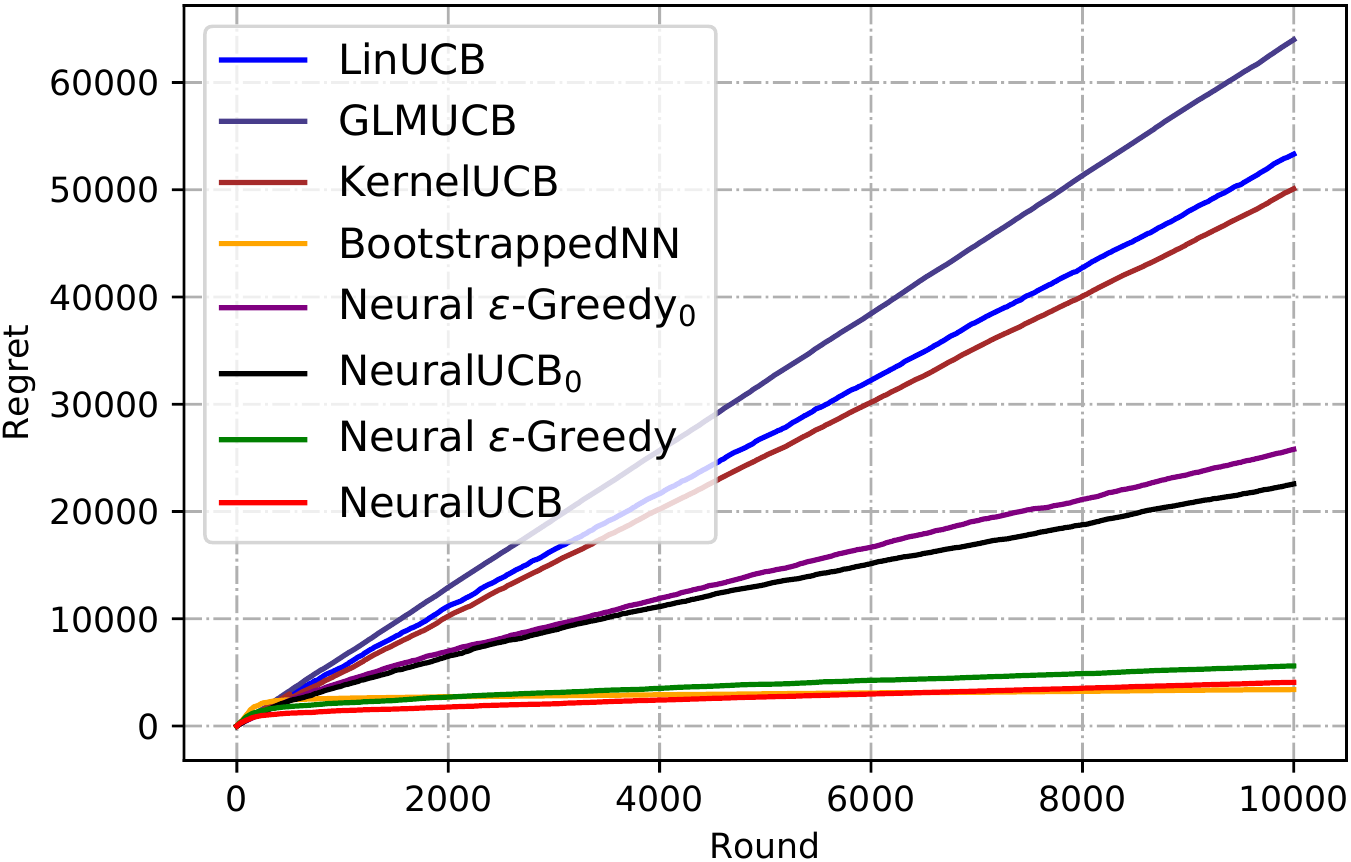}
		\label{fig:matrix}}
		\subfigure[$h_3(\xb) = \cos(3\xb^\top\ab)$]{\includegraphics[width=0.32\linewidth]{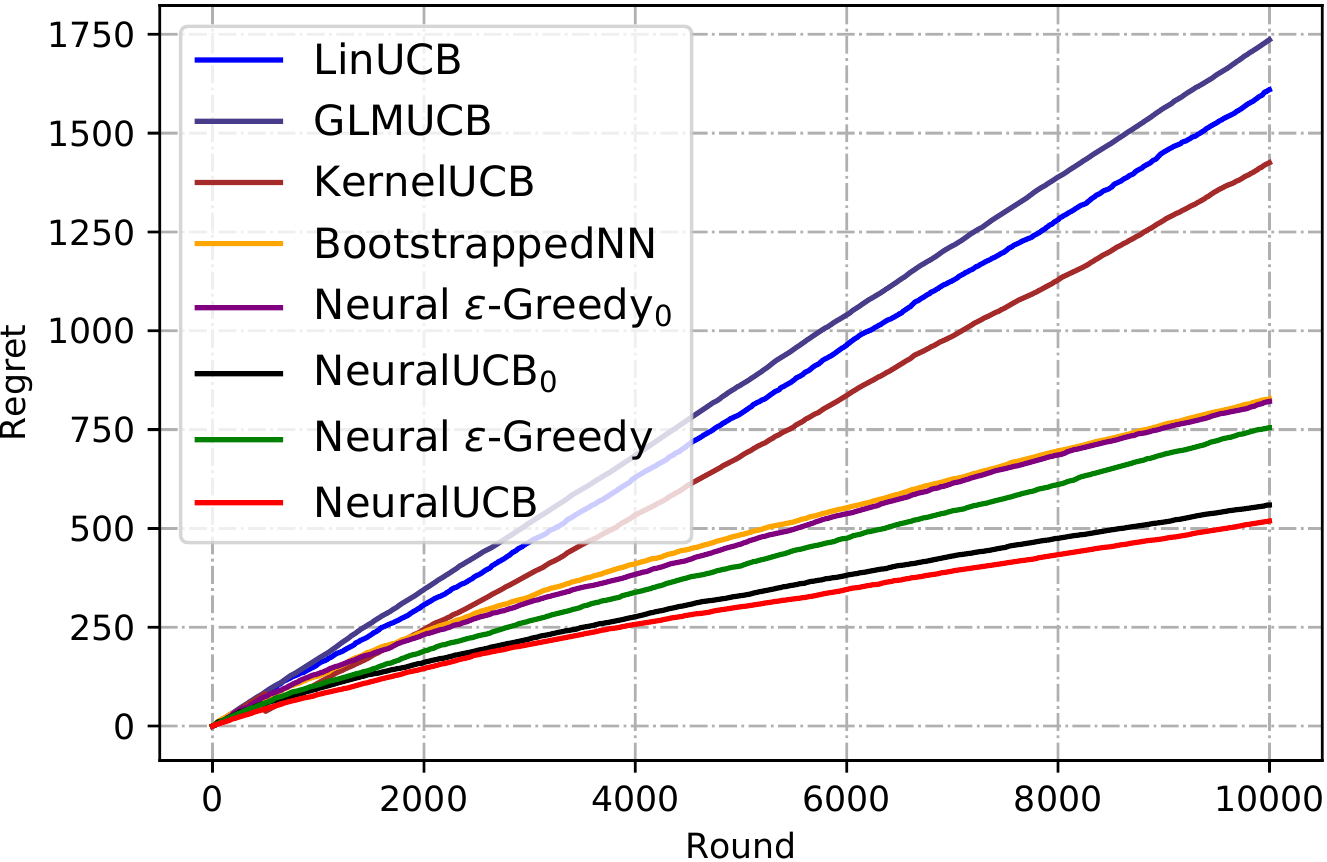}
		\label{fig:cos}}	
	\caption{Comparison of $\algname$ and baseline algorithms on synthetic datasets.}\label{fig1} 
\end{figure*}

\section{Related Work}

\paragraph{Contextual Bandits}
There is a line of extensive work on linear bandits~\citep[e.g.,][]{abe2003reinforcement,auer2002using,abe2003reinforcement,dani2008stochastic, rusmevichientong2010linearly,li2010contextual,chu2011contextual,abbasi2011improved}. 
Many of these algorithms are based on the idea of upper confidence bounds, and are shown to achieve near-optimal regret bounds.
Our algorithm is also based on UCB exploration, and the regret bound reduces to that of \citet{abbasi2011improved} in the linear case.

To deal with nonlinearity, a few authors have considered generalized linear bandits~\citep{filippi2010parametric, li2017provably,jun17scalable}, 
where the reward function is a composition of a linear function and a (nonlinear) link function.
Such models are special cases of what we study in this work.

More general nonlinear bandits without making strong modeling assumptions have also be considered.  One line of work is the family of expert learning algorithms~\citep{auer02nonstochastic,beygelzimer11contextual} that typically have a time complexity linear in the number of experts (which in many cases can be exponential in the number of parameters).

A second approach is to reduce a bandit problem to supervised learning, such as the epoch-greedy algorithm~\citep{langford08epoch} that has a non-optimal $O(T^{2/3})$ regret.  Later, \citet{agarwal14taming} develop an algorithm that enjoys a near-optimal regret, but relies on an oracle, whose implementation still requires proper modeling assumptions.

A third approach uses nonparametric modeling, such as \NEW{perceptrons \citep{kakade2008efficient}, random forests \citep{feraud2016random}, }Gaussian processes and kernels~\citep{kleinberg2008multi,srinivas2009gaussian,krause2011contextual,bubeck2011x}. 
The most relevant is by \citet{valko2013finite}, who assumed that the reward function lies in an RKHS with bounded RKHS norm and developed a UCB-based algorithm.
They also proved an $\tilde O(\sqrt{\tilde d T})$ regret, where $\tilde d$ is a form of effective dimension similar to ours.
Compared with these interesting works, our neural network-based algorithm avoids the need to carefully choose a good kernel or metric, and can be computationally more efficient in large-scale problems. \NEW{Recently, \citet{foster2020beyond} proposed contextual bandit algorithms with regression oracles which achieve a dimension-independent $O(T^{3/4})$ regret. Compared with \citet{foster2020beyond}, $\algname$ achieves a dimension-dependent $\tilde O(\tilde d\sqrt{T})$ regret with a better dependence on the time horizon.}

\paragraph{Neural Networks}
Substantial progress has been made to understand the expressive power of DNNs, in connection to the network depth~\citep{telgarsky2015representation,telgarsky2016benefits,liang2016deep,yarotsky2017error,yarotsky2018optimal,hanin2017universal}, as well as network width~\citep{lu2017expressive,hanin2017approximating}.  
The present paper on neural contextual bandit algorithms is inspired by these theoretical justifications and empirical evidence in the literature.

Our regret analysis for \algname\ makes use of recent advances in optimizing a DNN.  A series of works show that (stochastic) gradient descent can find global minima of the training loss \citep{li2018learning,du2018gradient,allen2018convergence,du2018gradientdeep, zou2018stochastic,zou2019improved}. 
For the generalization of DNNs, a number of authors~\citep{daniely2017sgd,cao2019generalization2,cao2019generalization1,arora2019exact, chen2019much} show that by using (stochastic) gradient descent, the parameters of a DNN are located in a particular regime and the generalization bound of DNNs can be characterized by the best function in the corresponding neural tangent kernel space \citep{jacot2018neural}.

\section{Experiments}
\label{sec:experiments}

In this section, we evaluate \algname\ empirically and compare it with seven representative baselines: (1) LinUCB, which is also based on UCB but adopts a linear representation; (2) GLMUCB \citep{filippi2010parametric}, which applies a nonlinear link function over a linear function; (3) KernelUCB \citep{valko2013finite}, a kernelised UCB algorithm which makes use of a predefined kernel function;
(4) BootstrappedNN \citep{efron1982jackknife, riquelme2018deep}, which simultaneously trains a set of neural networks using bootstrapped samples and at every round chooses an action based on the prediction of a randomly picked model; 
(5) Neural $\epsilon$-Greedy, which replaces the UCB-based exploration in Algorithm \ref{algorithm:2} by $\epsilon$-greedy; \CC{(6) $\algname_0$, as described in Section~\ref{sec:algorithm}; and (7) Neural $\epsilon$-Greedy$_0$, same as $\algname_0$ but with $\epsilon$-greedy exploration}.
We use the cumulative regret as the performance metric.

\begin{figure*}[t]
	\centering
		\subfigure[\dsfont{covertype}]{\includegraphics[width=0.32\linewidth]{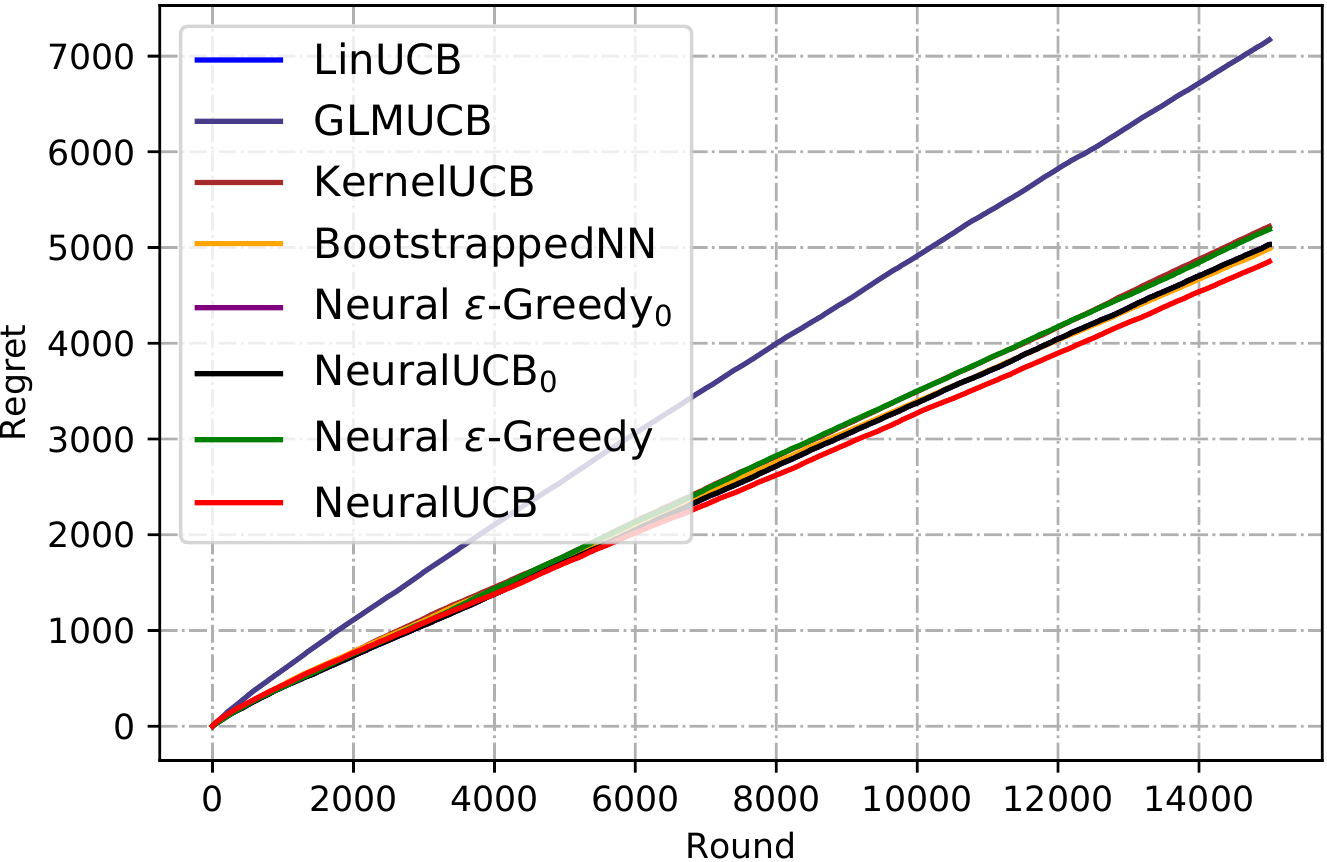}
		\label{fig:covtype}}
		\subfigure[\dsfont{magic}]{\includegraphics[width=0.32\linewidth]{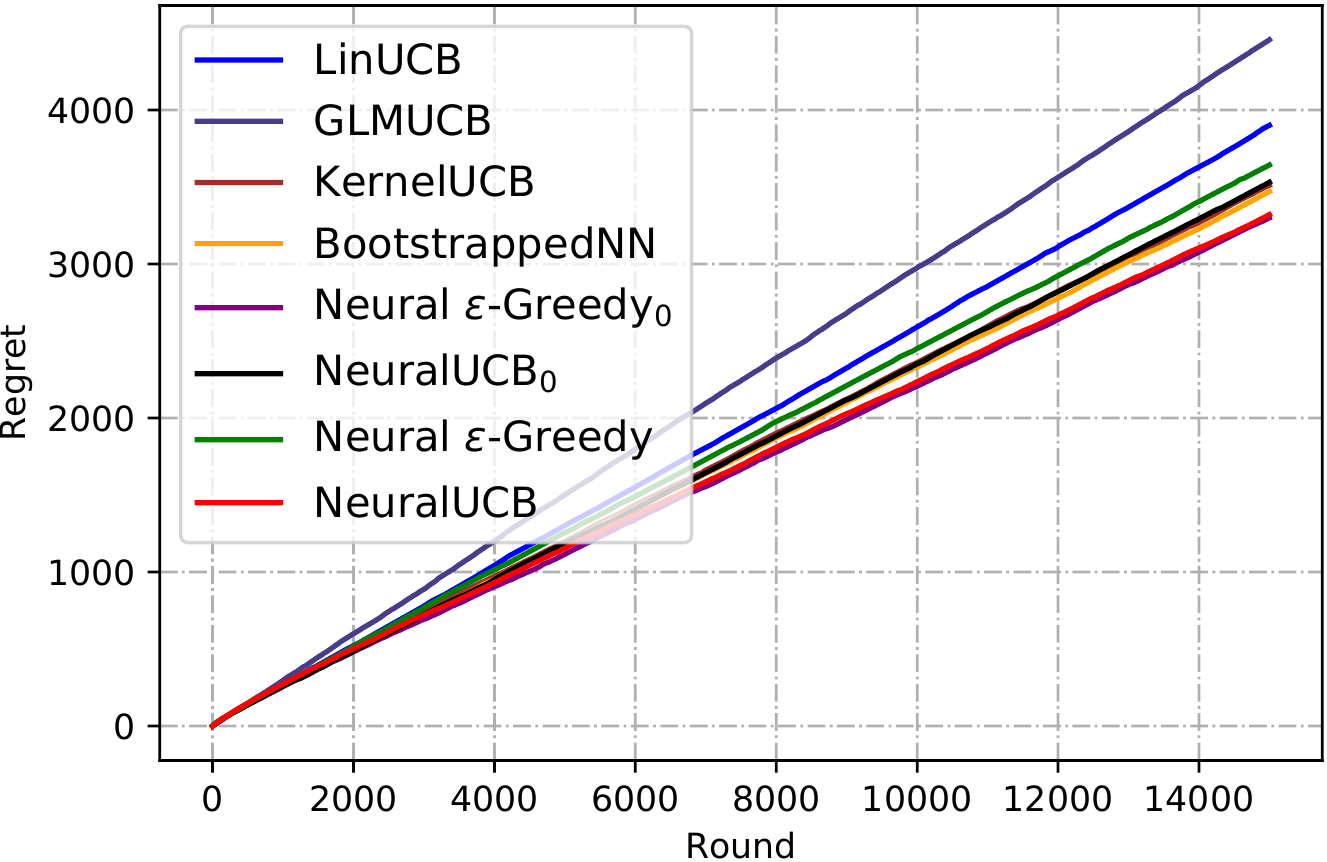}
		\label{fig:magic}}
		
			\subfigure[\dsfont{statlog}]{\includegraphics[width=0.32\linewidth]{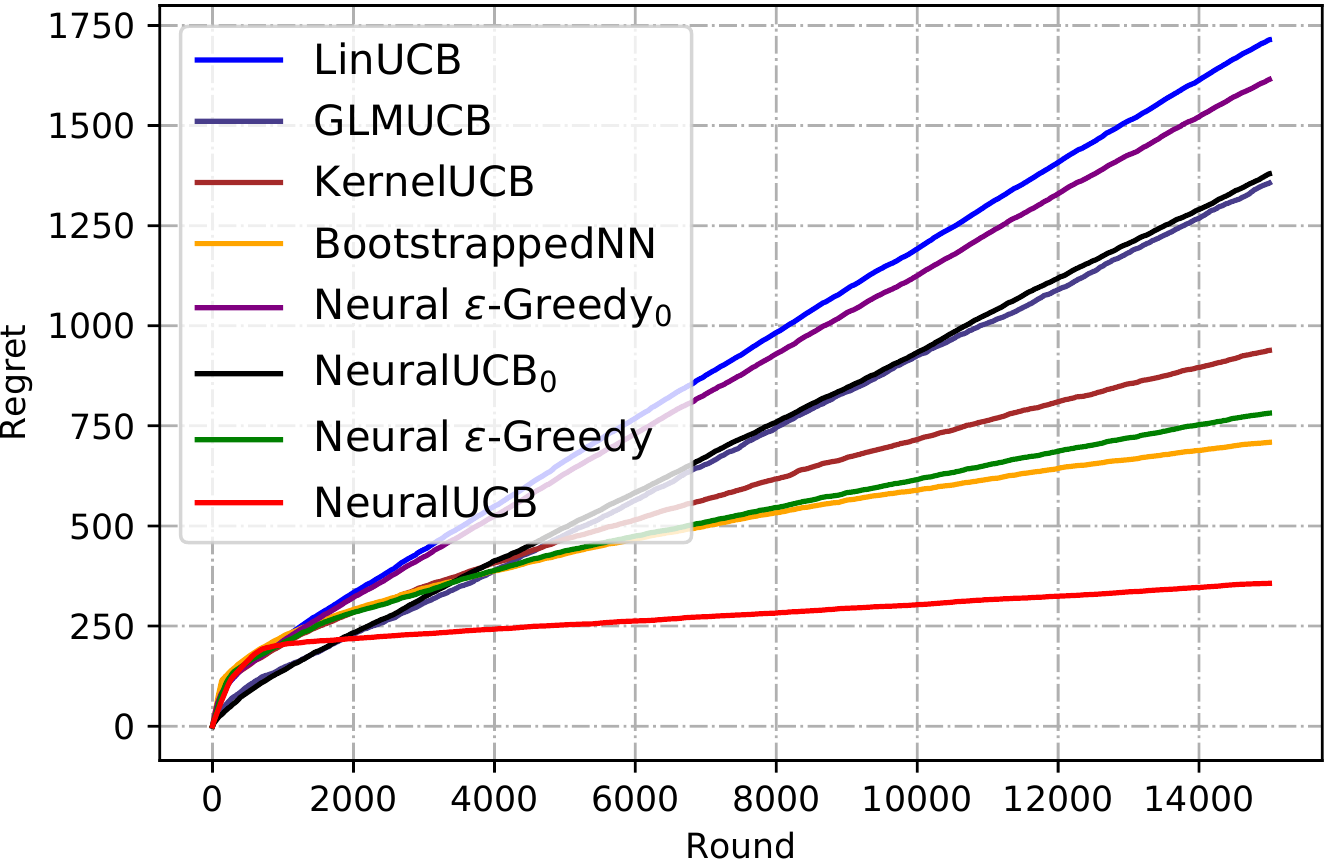}
		\label{fig:statlog}}
		\subfigure[\dsfont{mnist}]{\includegraphics[width=0.32\linewidth]{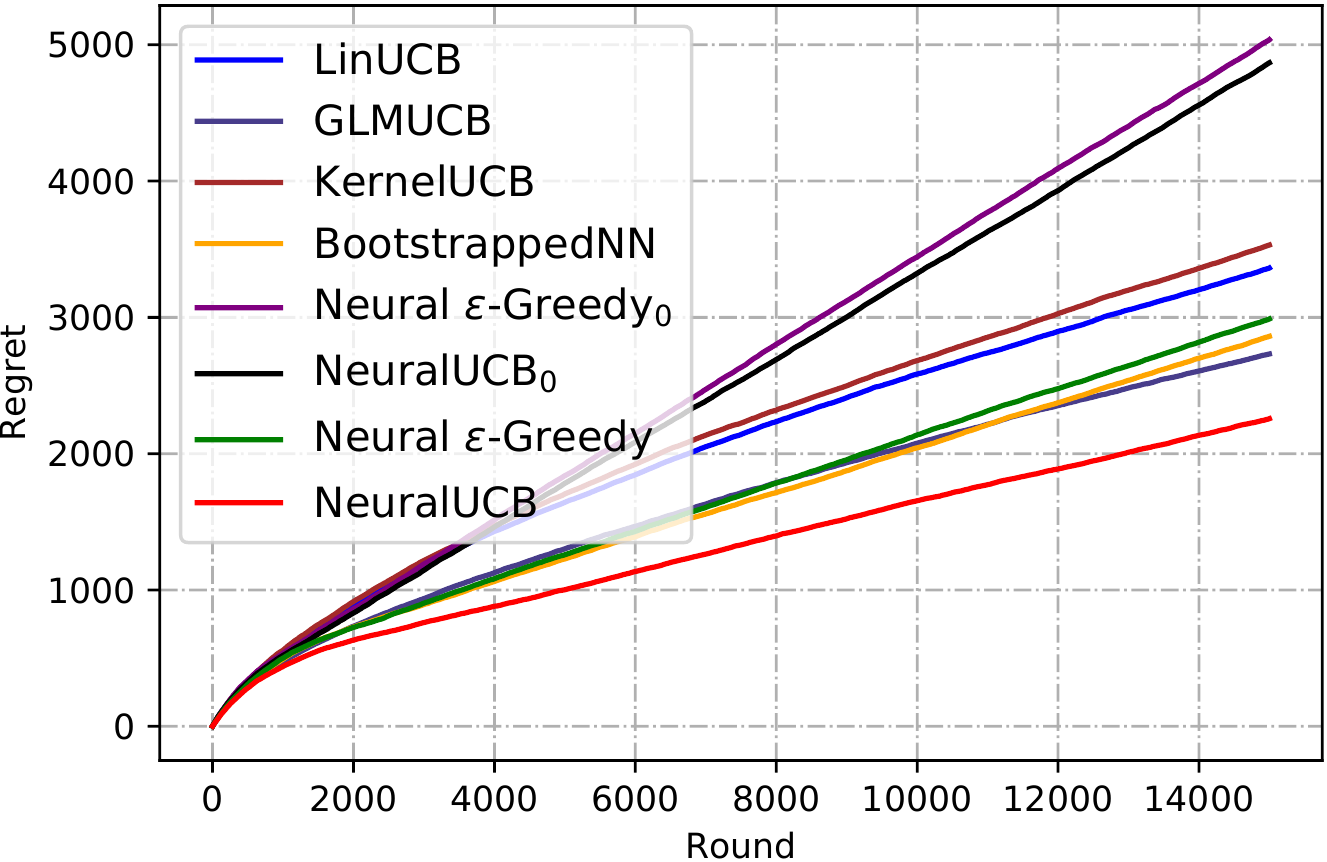}
		\label{fig:mnist}}
	\caption{Comparison of $\algname$ and baseline algorithms on real-world datasets.}\label{fig2} 
\end{figure*}

\subsection{Synthetic Datasets}\label{sec:simulated}

In the first set of experiments, we use contextual bandits with context dimension $d = 20$ and $K=4$ actions.  The number of rounds $T = 10~000$. The contextual vectors $\{\xb_{1,1},\dots,\xb_{T,K}\}$ are chosen uniformly at random from the unit ball.  The reward function $h$ is one of the following:
\begin{align*}
h_1(\xb) &= 10(\xb^\top\ab)^2, \\
h_2(\xb) &= \xb^\top\Ab^\top\Ab\xb, \\ h_3(\xb) &= \cos(3\xb^\top\ab)\,,
\end{align*}
where each entry of $\Ab \in \RR^{d \times d}$ is randomly generated from $N(0,1)$, $\ab$ is randomly generated from uniform distribution over unit ball.
For each $h_i(\cdot)$, 
the reward 
is generated by $r_{t,a} = h_i(\xb_{t,a}) + \xi_t$, where $\xi_t \sim N(0,1)$.

Following \citet{li2010contextual}, we implement LinUCB using a constant $\alpha$ (for the variance term in the UCB). We do a grid search for $\alpha$ over $\{0.01, 0.1, 1, 10\}$. For GLMUCB, we use the sigmoid function as the link function and adapt the online Newton step method to accelerate the computation \citep{zhang2016online, jun17scalable}. We do grid searches over $\{0.1, 1, 10\}$ for regularization parameter, $\{1, 10, 100\}$ for step size, $\{0.01, 0.1, 1\}$ for exploration parameter. For KernelUCB, we use the radial basis function (RBF) kernel with parameter $\sigma$, and set the regularization parameter to $1$.  Grid searches over $\{0.1,1, 10\}$ for $\sigma$ and $\{0.01, 0.1, 1, 10\}$ for the exploration parameter are done.
To accelerate the calculation, we stop adding contexts to KernelUCB after $1000$ rounds, following the same setting for Gaussian Process in \citet{riquelme2018deep}. 
For all five neural algorithms,
we choose a two-layer neural network $f(\xb; \btheta)= \sqrt{m}\Wb_2\sigma(\Wb_1\xb)$ with network width $m = 20$, where $\btheta = [\text{vec}(\Wb_1)^\top,\text{vec}(\Wb_2)^\top] \in \RR^{p}$ and $p = md+m=420$.\footnote{Note that the bound on the required network width $m$ is likely not tight.  Therefore, in experiments we choose $m$ to be relatively large, but not as large as theory suggests.}
Moreover, we set $\gamma_t=\gamma$ in \algname,
and do a grid search over $\{0.01,0.1,1,10\}$. For $\algname_0$, we do grid searches for $\nu$ over $\{0.1, 1, 10\}$, for $\lambda$ over $\{0.1, 1, 10\}$, for $\delta$ over $\{0.01, 0.1, 1\}$, for $S$ over 
$\{0.01,0.1,1,10\}$.
For Neural $\epsilon$-Greedy and Neural $\epsilon$-Greedy$_0$, we do a grid search for $\epsilon$ over $\{0.001,0.01,0.1,0.2\}$. For BootstrappedNN, we follow \citet{riquelme2018deep} to set the number of models to be $10$ and the transition probability to be $0.8$. 
To accelerate the training process, for BootstrappedNN, $\algname$ and Neural $\epsilon$-Greedy, we update the parameter $\btheta_t$ by \text{TrainNN} every $50$ rounds. We use stochastic gradient descent with batch size $50$, $J = t$ at round $t$, and do a grid search for step size $\eta$ over $\{0.001, 0.01, 0.1\}$.
For all grid-searched parameters, we choose the best of them for the comparison.
All experiments are repeated $10$ times, and the averaged results reported for comparison.

\subsection{Real-world Datasets}

\begin{table}[t]
\caption{Dataset statistics}
\label{tab:dataset}
\vskip 0.15in
\begin{center}
\begin{small}
\begin{sc}
\begin{tabular}{lccccr}
\toprule
     \multirow{2}{*}{Dataset}  &  \small{Cover-} & \multirow{2}{*}{magic} & \multirow{2}{*}{statlog} &\multirow{2}{*}{mnist}\\
    & \small{type} & & &\\
\midrule
    \small{feature}  &  \multirow{2}{*}{54} & \multirow{2}{*}{10} & \multirow{2}{*}{8} &\multirow{2}{*}{784}\\
    \small{dimension} & & & &\\
         \hline
    \small{number of}  & \multirow{2}{*}{7} & \multirow{2}{*}{2} & \multirow{2}{*}{7} & \multirow{2}{*}{10}\\
    \small{classes} & & & &\\
    \hline
    \small{number of} & \multirow{2}{*}{581012} & \multirow{2}{*}{19020} & \multirow{2}{*}{58000} & \multirow{2}{*}{60000}\\
    \small{instances} & & & &\\
\bottomrule
\end{tabular}
\end{sc}
\end{small}
\end{center}
\vskip -0.1in
\end{table}

We evaluate our algorithms on 
real-world datasets from the UCI Machine Learning Repository~\citep{Dua:2019}: \dsfont{covertype}, \dsfont{magic}, 
and \dsfont{statlog}. We also evaluate our algorithms on \dsfont{mnist} dataset \citep{lecun1998gradient}. 
These are all $K$-class classification datasets (Table~\ref{tab:dataset}), and are converted into $K$-armed contextual bandits~\citep{beygelzimer09offset}.
The number of rounds is set as $T = 15000$.
Following \citet{riquelme2018deep}, we create contextual bandit problems based on the prediction accuracy. 
In detail, to transform a classification problem with $k$-classes into a bandit problem, we adapts the disjoint model \citep{li2010contextual} which transforms each contextual vector $\xb \in \RR^d$ into $k$ vectors $\xb^{(1)} = (\xb, \zero,\dots, \zero), \dots, \xb^{(k)} = (\zero,\dots, \zero, \xb) \in \RR^{dk}$. The agent received regret $0$ if he classifies the context correctly, and $1$ otherwise. For all the algorithms, 
We reshuffle the order of contexts and repeat the experiment for $10$ runs. Averaged results are reported for comparison.

For LinUCB, GLMUCB and KernelUCB, we tune their parameters as Section \ref{sec:simulated} suggests. 
For BootstrappedNN, $\algname$, $\algname_0$, Neural $\epsilon$-Greedy and Neural $\epsilon$-Greedy$_0$, we choose a two-layer neural network 
with width $m = 100$.  For $\algname$ and $\algname_0$, since it is computationally expensive to store and compute a whole matrix $\Zb_t$, we use a diagonal matrix which consists of the diagonal elements of $\Zb_t$ to approximate $\Zb_t$. To accelerate the training process, for BootstrappedNN, $\algname$ and Neural $\epsilon$-Greedy, we update the parameter $\btheta_t$ by TrainNN every $100$ rounds starting from round 2000. We do grid searches for $\lambda$ over $\{10^{-i}\}, i = 1,2,3,4$, for $\eta$ over $\{2\times 10^{-i}, 5\times 10^{-i}\}, i=1,2,3,4$. We set $J = 1000$ and use stochastic gradient descent with batch size $500$ to train the networks. For the rest of parameters, we tune them as those in Section \ref{sec:simulated} and choose the best of them for comparison.

\subsection{Results}
\vskip -0.07in
Figures~\ref{fig1} and \ref{fig2} show the cumulative regret of all algorithms.
First, due to the nonlinearity of reward functions $h$, LinUCB fails to learn them for nearly all tasks. \NEW{GLMUCB is only able to learn the true reward functions for certain tasks due to its simple link function. }
In contrast, thanks to the neural network representation and efficient exploration, $\algname$ achieves 
a substantially lower regret. 
The performance of Neural $\epsilon$-Greedy is between the two. This suggests that while Neural $\epsilon$-Greedy can capture the nonlinearity of the underlying reward function, $\epsilon$-Greedy based exploration is not as effective as UCB based exploration. 
This confirms the effectiveness of \algname\ for contextual bandit problems with 
nonlinear reward functions. 
Second, it is worth noting that $\algname$ and Neural $\epsilon$-Greedy outperform $\algname_0$ and Neural $\epsilon$-Greedy$_0$.  This suggests that using deep neural networks to predict the reward function is better than using a fixed feature mapping associated with the Neural Tangent Kernel, which mirrors similar findings in supervised learning~\citep{allen2019can}.  Furthermore, we can see that KernelUCB is not as good as \algname, which suggests the limitation of simple kernels like RBF compared to flexible neural networks.
What's more, 
BootstrappedNN can be competitive, approaching the performance of \algname\ in some datasets. However, it requires to maintain and train multiple neural networks, so is computationally more expensive than our approach, especially in large-scale problems.

\section{Conclusion}
\vskip -0.07in
In this paper, we proposed \algname, a new algorithm for stochastic contextual bandits based on neural networks and upper confidence bounds. Building on recent advances in optimization and generalization of deep neural networks, we showed that for an arbitrary bounded reward function, our algorithm achieves an $\tilde O(\tilde d \sqrt{T})$ regret bound.  Promising empirical results on both synthetic and real-world data corroborated our theoretical findings, and suggested the potential of the algorithm in practice.

We conclude the paper with a suggested direction for future research.  Given the focus on UCB exploration in this work, a natural open question is provably efficient exploration based on randomized strategies, when DNNs are used.  These methods are effective in practice, but existing regret analyses are mostly for shallow (i.e., linear or generalized linear) models~\citep{chapelle2011empirical,agrawal2013thompson,russo18tutorial,kveton20randomized}.
Extending them to DNNs will be interesting. \NEW{Meanwhile, our current analysis of NeuralUCB is based on the NTK theory. While NTK facilitates the analysis, it has its own limitations, and we will leave the analysis of NeuralUCB beyond NTK as future work.} 

\section*{Acknowledgement}
We would like to thank the anonymous reviewers for their helpful comments. This research was sponsored in part by the National Science Foundation IIS-1904183 and IIS-1906169. The views and conclusions contained in this paper are those of the authors and should not be interpreted as representing any funding agencies.

\bibliography{reference}
\bibliographystyle{icml2020}

\newpage
\onecolumn
\appendix
\section{Proof of Additional Results in Section \ref{sec:regret_analysis}}

\subsection{Verification of Remark \ref{remark:effect}}\label{sec:val}
Suppose there exists a mapping $\bpsi: \RR^d \rightarrow \RR^{\hat d}$ satisfying $\|\bpsi(\xb)\|_2 \leq 1$ which maps any context $\xb \in \RR^d$ to the Hilbert space $\cH$ associated with the Gram matrix $\Hb \in \RR^{TK \times TK}$ over contexts $\{\xb^i\}_{i=1}^{TK}$. Then $\Hb = \bPsi^\top\bPsi$, where $\bPsi = [\bpsi(\xb^1),\dots,\bpsi(\xb^{TK})] \in \RR^{\hat d \times TK}$. Thus, we can bound the effective dimension $\tilde d$ as follows
\begin{align*}
    \tilde d = \frac{\log\det [\Ib + \Hb/\lambda]}{\log (1+TK/\lambda)} = \frac{\log\det \big[\Ib + \bPsi\bPsi^\top/\lambda\big]}{\log (1+TK/\lambda)} \leq \hat d\cdot\frac{\log \big\|\Ib + \bPsi\bPsi^\top/\lambda\big\|_2}{\log (1+TK/\lambda)}\,.
\end{align*}
where the second equality holds due to the fact that $\det(\Ib + \Ab^\top\Ab/\lambda) = \det(\Ib + \Ab\Ab^\top/\lambda)$ holds for any matrix $\Ab$, and the inequality holds since $\det \Ab \leq \|\Ab\|_2^{\hat d}$ for any $\Ab \in \RR^{\hat d \times \hat d}$.
Clearly, $\tilde d \le \hat d$ as long as $\big\|\Ib + \bPsi\bPsi^\top/\lambda\big\|_2 \le 1+TK/\lambda$.  Indeed,
\begin{align*}
    \big\|\Ib + \bPsi\bPsi^\top/\lambda\big\|_2 \leq 1+\big\|\bPsi\bPsi^\top\big\|_2/\lambda \leq 1 + \sum_{i=1}^{TK}\big\|\bpsi(\xb^i)\bpsi(\xb^i)^\top\big\|_2/\lambda \leq 1+TK/\lambda\,,
\end{align*}
where the first inequality is due to triangle inequality and the fact $\lambda \geq 1$, the second inequality holds due to the definition of $\bPsi$ and triangle inequality, and the last inequality is by $\|\bpsi(\xb^i)\|_2 \leq 1$ for any $1 \leq i \leq TK$.

\subsection{Verification of Remark \ref{remark:rkhsnorm}}\label{sec:RKHSnorm}
\CC{
Let $K(\cdot, \cdot)$ be the NTK kernel, then for $i,j \in [TK]$, we have $\Hb_{i,j} = K(\xb^i, \xb^j)$.
Suppose that $h \in \cH$, then $h$ can be decomposed as $h = h_{\Hb} + h_\perp$, where $h_{\Hb}(\xb) = \sum_{i=1}^{TK} \alpha_i K(\xb, \xb^i)$ is the projection of $h$ to the function space spanned by $\{K(\xb, \xb^i)\}_{i=1}^{TK}$ and $h_\perp$ is the orthogonal part. By definition we have $h(\xb^i) = h_\Hb(\xb^i)$ for $i \in [TK]$, thus 
\begin{align}
    \hb &= [h(\xb^1),\dots,h(\xb^{TK})]^\top \notag \\
    &= [h_{\Hb}(\xb^1),\dots,h_{\Hb}(\xb^{TK})]^\top \notag \\
    &= \bigg[\sum_{i=1}^{TK} \alpha_i K(\xb^1, \xb^i),\dots,\sum_{i=1}^{TK} \alpha_i K(\xb^{TK}, \xb^i)\bigg]^\top\notag \\
    & = \Hb\balpha,\notag
\end{align}
which implies that $\balpha = \Hb^{-1}\hb$. Thus, we have
\begin{align}
    \|h\|_\cH \geq \|h_{\Hb}\|_\cH = \sqrt{\balpha^\top\Hb\balpha} = \sqrt{\hb^\top\Hb^{-1}\Hb \Hb^{-1}\hb} = \sqrt{\hb^\top\Hb^{-1}\hb}.\notag
\end{align}
}

\subsection{Proof of Corollary \ref{coro:expectation}}
\begin{proof}[Proof of Corollary \ref{coro:expectation}]
Notice that $R_T \leq T$ since $0 \leq h(\xb) \leq 1$. Thus, with the fact that with probability at least $1-\delta$, \eqref{corollary:regret-1} holds, we can bound $\EE [R_T]$ as
\begin{align}
    \EE [R_T] &\leq (1-\delta)\bigg(3\sqrt{T}\sqrt{\tilde d \log (1+TK/\lambda) + 2}\bigg[\nu\sqrt{\tilde d \log (1+TK/\lambda) + 2-2\log\delta}\notag \\
     &\qquad + 2\sqrt{\lambda}S +  (\lambda+C_2TL)(1- \eta m \lambda)^{J/2} \sqrt{T/\lambda}\bigg]+ 1\bigg) + \delta T. 
\end{align}
Taking $\delta = 1/T$ completes the proof.
\end{proof}

\section{Proof of Lemmas in Section \ref{section:newtheoryproof}}\label{section:new2}

\subsection{Proof of Lemma \ref{lemma:equal}}
We start with the following lemma:
\begin{lemma}\label{lemma:ntkconverge}
Let $\Gb = [\gb(\xb^1; \btheta_0),\ldots,\gb(\xb^{TK}; \btheta_0)]/\sqrt{m} \in \RR^{p \times (TK)}$. Let $\Hb$ be the NTK matrix as defined in Definition \ref{def:ntk}. For any $\delta \in(0,1)$, if
\begin{align}
    m  = \Omega\bigg(\frac{L^6\log (TKL/\delta)}{\epsilon^4}\bigg),\notag
\end{align}
then with probability at least $1-\delta$,
we have 
\begin{align}
    \|\Gb^\top\Gb - \Hb\|_F \leq TK\epsilon.\notag
\end{align}
\end{lemma}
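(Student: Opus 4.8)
The plan is to establish the bound entrywise and then convert to the Frobenius norm. Since $\Gb^\top\Gb - \Hb$ is a $(TK)\times(TK)$ matrix, we have $\|\Gb^\top\Gb - \Hb\|_F \leq TK\cdot \max_{i,j\in[TK]}\big|[\Gb^\top\Gb]_{i,j} - \Hb_{i,j}\big|$, so it suffices to show that each entry $[\Gb^\top\Gb]_{i,j} = \la \gb(\xb^i;\btheta_0), \gb(\xb^j;\btheta_0)\ra/m$ concentrates to $\Hb_{i,j}$ within $\epsilon$, with failure probability at most $\delta/(TK)^2$; a union bound over the $(TK)^2$ pairs then gives the claim, the log factor $\log((TK)^2/\delta)$ being absorbed into $\log(TKL/\delta)$. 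The stated requirement $m = \Omega(L^6\epsilon^{-4}\log(TKL/\delta))$ is exactly what makes the per-entry error $\epsilon$ hold with this union-bounded probability.

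For the entrywise concentration, fix a pair $(i,j)$ and decompose the gradient inner product over layers, $\la \gb(\xb^i;\btheta_0),\gb(\xb^j;\btheta_0)\ra = \sum_{l=1}^L \la \nabla_{\Wb_l}f(\xb^i;\btheta_0), \nabla_{\Wb_l}f(\xb^j;\btheta_0)\ra$. The standard NTK argument (as in \citet{arora2019exact,cao2019generalization2}) proceeds in two passes. In the forward pass one shows that the normalized hidden-layer correlations $\tfrac1m\la \xb^{(l)}_i, \xb^{(l)}_j\ra$ concentrate around the population covariances $\bSigma^{(l)}_{i,j}$ of Definition~\ref{def:ntk}; conditioned on the previous layer, each such correlation is an average of $m$ (approximately) independent terms, so a Bernstein-type bound yields an error of order $1/\sqrt{m}$ per layer. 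In the backward pass one controls the normalized correlations of the backpropagated signals, which concentrate to the factor $\EE_{(u,v)\sim N(\zero,\Ab^{(l)}_{i,j})}[\sigma'(u)\sigma'(v)]$ appearing in the recursion for $\tilde\Hb^{(l)}$. Multiplying the forward and backward quantities layer by layer and summing reproduces the recursive NTK formula, and tracking the accumulation of the $1/\sqrt{m}$ errors across the $L$ layers, together with the Lipschitz and boundedness constants of ReLU, yields the per-entry deviation of order $\epsilon$ under the stated width.

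One point that needs care is the mirrored initialization used by \algname, where each $\Wb_l$ has the block form $\begin{pmatrix}\Wb & \zero\\ \zero & \Wb\end{pmatrix}$ with entries of $\Wb$ from $N(0,4/m)$ and $\Wb_L=(\wb^\top,-\wb^\top)$ from $N(0,2/m)$. This scheme guarantees $f(\xb^i;\btheta_0)=0$ but introduces correlations between coordinates, so off-the-shelf NTK convergence results (which assume i.i.d.\ $N(0,2/m)$ entries) do not apply verbatim. I would reconcile this by noting that the doubled variances exactly compensate for the halved number of independent Gaussians created by the mirror structure, so that the population recursion defining $\Hb$ is unchanged and the concentration argument applies to the $m/2$ independent blocks; under Assumption~\ref{assumption:input}, which enforces $[\xb^i]_j=[\xb^i]_{j+d/2}$, the two mirrored halves of the input feed the two blocks symmetrically.

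I expect the main obstacle to be the careful propagation of the concentration error through all $L$ layers in both passes while keeping the dependence on $L$ polynomial (here $L^6$) and on $\epsilon$ as $\epsilon^{-4}$. Each conditioning step perturbs the covariance matrix $\Ab^{(l)}_{i,j}$ fed into the next layer's Gaussian expectation, and bounding how these perturbations compound, using the smoothness of the maps $\bSigma^{(l)}\mapsto\bSigma^{(l+1)}$ and the Gaussian expectations of $\sigma,\sigma'$, is the technically delicate part; the remainder is bookkeeping.
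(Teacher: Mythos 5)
Your proposal follows essentially the same route as the paper: the paper's proof simply invokes Theorem~3.1 of \citet{arora2019exact} for the per-entry concentration $|\la \gb(\xb^i;\btheta_0),\gb(\xb^j;\btheta_0)\ra/m - \Hb_{i,j}|\leq\epsilon$, takes a union bound over the $(TK)^2$ pairs (whence the $\log(T^2K^2L/\delta)$ in the width requirement), and converts to the Frobenius norm via $\|\Gb^\top\Gb-\Hb\|_F=\sqrt{\sum_{i,j}|\cdot|^2}\leq TK\epsilon$. Your additional sketch of the forward/backward-pass argument and your remark about the mirrored initialization are reasonable elaborations of the black-boxed citation, but they do not change the structure of the argument.
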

We begin to prove Lemma \ref{lemma:equal}.
\begin{proof}[Proof of Lemma \ref{lemma:equal}]
By Assumption \ref{assumption:input}, we know that $\lambda_0>0$. By the choice of $m$, we have $m \geq \Omega(L^6 \log(TKL/\delta)/\epsilon^4)$, where $\epsilon = \lambda_0/(2TK)$. Thus, due to Lemma \ref{lemma:ntkconverge}, with probability at least $1-\delta$, we have $\|\Gb^\top\Gb - \Hb\|_F \leq TK\epsilon = \lambda_0/2$. That leads to
\begin{align}
    \Gb^\top\Gb \succeq \Hb - \|\Gb^\top\Gb - \Hb\|_F\Ib \succeq \Hb - \lambda_0\Ib/2 \succeq \Hb/2\succ 0,\label{eq:lemma:equal}
\end{align}
where the first inequality holds due to the triangle inequality, the third and fourth inequality holds due to $\Hb \succeq \lambda_0\Ib\succ0$. Thus, suppose the singular value decomposition of $\Gb$ is $\Gb = \Pb\Ab\Qb^\top$, $\Pb \in \RR^{p \times TK}, \Ab \in \RR^{TK \times TK}, \Qb \in \RR^{TK\times TK}$, we have $\Ab \succ 0$. Now we are going to show that $\btheta^* = \btheta_0 + \Pb\Ab^{-1}\Qb^\top\hb/\sqrt{m}$ satisfies \eqref{lemma:equal_0}. First, we have
\begin{align}
     \Gb^\top\sqrt{m}(\btheta^* - \btheta_0)& = \Qb\Ab\Pb^\top\Pb\Ab^{-1}\Qb^\top\hb = \hb,\notag
\end{align}
which suggests that for any $i$, $\la\gb(\xb^i; \btheta_0), \btheta^* - \btheta_0\ra = h(\xb^i)$. \CC{We also have
\begin{align}
    m\|\btheta^* - \btheta_0\|_2^2& =\hb^\top \Qb \Ab^{-2}\Qb^\top\hb = \hb^\top(\Gb^\top\Gb)^{-1}\hb \leq 2\hb^\top\Hb^{-1}\hb,\notag
\end{align}}
where the last inequality holds due to \eqref{eq:lemma:equal}.
This completes the proof. 
\end{proof}

\subsection{Proof of Lemma \ref{lemma:newinset}}
In this section we prove Lemma \ref{lemma:newinset}.
For simplicity, we define $\bar \Zb_t, \bar \bbb_t, \bar\gamma_t$ as follows:
\begin{align}
        \bar\Zb_t &= \lambda \Ib + \sum_{i=1}^t\gb(\xb_{i,a_i}; \btheta_{0})\gb(\xb_{i,a_i}; \btheta_{0})^\top/m,\notag \\
        \bar\bbb_t &= \sum_{i=1}^t r_{i,a_i} \gb(\xb_{i,a_i}; \btheta_{0})/\sqrt{m} ,\notag \\
        \bar\gamma_t &= \nu\sqrt{\log\frac{\det \bar\Zb_t}{\det \lambda \Ib}-2\log\delta} + \sqrt{\lambda}S.\notag
\end{align}

We need the following lemmas. The first lemma shows that the network parameter $\btheta_t$ at round $t$ can be well approximated by $\btheta_0 + \bar \Zb_t^{-1}\bar\bbb_t/\sqrt{m}$. 
\begin{lemma}\label{lemma:newboundreference}
There exist constants $\{\bar C_i\}_{i=1}^5>0$ such that for any $\delta > 0$, if for all $t \in [T]$, $\eta, m$ satisfy
\begin{align}
    &2\sqrt{t/(m\lambda)} \geq \bar C_1m^{-3/2}L^{-3/2}[\log(TKL^2/\delta)]^{3/2},\notag \\
    &2\sqrt{t/(m\lambda)} \leq \bar C_2\min\big\{ L^{-6}[\log m]^{-3/2},\big(m(\lambda\eta)^2L^{-6}t^{-1}(\log m)^{-1}\big)^{3/8} \big\},\notag \\
    &\eta \leq \bar C_3(m\lambda + tmL)^{-1},\notag \\
    &m^{1/6}\geq \bar C_4\sqrt{\log m}L^{7/2}t^{7/6}\lambda^{-7/6}(1+\sqrt{t/\lambda}),\notag
\end{align}
then with probability at least $1-\delta$,
we have that $\|\btheta_t -\btheta_0\|_2 \leq  2\sqrt{t/(m\lambda )}$ and
\begin{align}
   \|\btheta_t - \btheta_0 - \bar \Zb_t^{-1}\bar\bbb_t/\sqrt{m}\|_2  \leq (1- \eta m \lambda)^{J/2} \sqrt{t/(m\lambda)} + \bar C_5m^{-2/3}\sqrt{\log m}L^{7/2}t^{5/3}\lambda^{-5/3}(1+\sqrt{t/\lambda}).\notag
\end{align}
\end{lemma}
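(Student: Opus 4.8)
The plan is to recognize $\tbtheta := \btheta_0 + \bar\Zb_t^{-1}\bar\bbb_t/\sqrt{m}$ as the exact global minimizer of the quadratic surrogate loss
\[
\tilde{\cL}(\btheta) = \tfrac12\sum_{i=1}^t\big(\la \gb(\xb_{i,a_i};\btheta_0), \btheta-\btheta_0\ra - r_{i,a_i}\big)^2 + \tfrac{m\lambda}{2}\|\btheta-\btheta_0\|_2^2,
\]
obtained from $\cL$ by replacing each $f(\xb_{i,a_i};\btheta)$ with its first-order Taylor expansion at $\btheta_0$ (recall $f(\xb_{i,a_i};\btheta_0)=0$). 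Setting $\nabla\tilde{\cL}=\zero$ gives the normal equation $m\bar\Zb_t(\btheta-\btheta_0)=\sqrt m\,\bar\bbb_t$, whose solution is exactly $\tbtheta$. Since $\tilde{\cL}$ is $m\lambda$-strongly convex with Hessian $m\bar\Zb_t$, the whole argument reduces to comparing the gradient-descent iterates $\btheta^{(j)}$ run on the true loss $\cL$ against the fixed point $\tbtheta$ of the surrogate.

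The core step is a one-step contraction estimate. Writing $\btheta^{(j+1)}-\tbtheta = (\Ib-\eta m\bar\Zb_t)(\btheta^{(j)}-\tbtheta) - \eta\big(\nabla\cL(\btheta^{(j)})-\nabla\tilde{\cL}(\btheta^{(j)})\big)$ and using that the step-size condition $\eta\le\bar C_3(m\lambda+tmL)^{-1}$ makes $\|\Ib-\eta m\bar\Zb_t\|_2 = 1-\eta m\lambda$ (the top eigenvalue of $m\bar\Zb_t$ is controlled by the gradient-norm bound $\|\gb(\cdot;\btheta_0)\|_2 = O(\sqrt{mL})$), I obtain $\|\btheta^{(j+1)}-\tbtheta\|_2 \le (1-\eta m\lambda)\|\btheta^{(j)}-\tbtheta\|_2 + \eta\,\epsilon_g$, where $\epsilon_g$ uniformly bounds the gradient mismatch $\|\nabla\cL(\btheta)-\nabla\tilde{\cL}(\btheta)\|_2$ over the ball $\|\btheta-\btheta_0\|_2\le 2\sqrt{t/(m\lambda)}$. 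Unrolling and summing the geometric series gives $\|\btheta^{(J)}-\tbtheta\|_2 \le (1-\eta m\lambda)^{J}\|\btheta_0-\tbtheta\|_2 + \epsilon_g/(m\lambda)$. Strong convexity of $\tilde{\cL}$ together with $\tilde{\cL}(\btheta_0)=\tfrac12\sum_i r_{i,a_i}^2$ and boundedness of the rewards yields $\|\btheta_0-\tbtheta\|_2\le\sqrt{t/(m\lambda)}$, which both supplies the leading factor in the first term (after $(1-\eta m\lambda)^J\le(1-\eta m\lambda)^{J/2}$) and, combined with the just-proved closeness, gives $\|\btheta^{(J)}-\btheta_0\|_2\le 2\sqrt{t/(m\lambda)}$ by the triangle inequality.

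The main obstacle is bounding $\epsilon_g$ so that $\epsilon_g/(m\lambda)$ matches the advertised $\bar C_5 m^{-2/3}\sqrt{\log m}L^{7/2}t^{5/3}\lambda^{-5/3}(1+\sqrt{t/\lambda})$ term, i.e. $\epsilon_g=O\big(m^{1/3}\sqrt{\log m}\,\text{poly}(L,t,\lambda^{-1})\big)$. Writing the mismatch as
\[
\nabla\cL(\btheta)-\nabla\tilde{\cL}(\btheta) = \sum_{i=1}^t\big[(f(\xb_{i,a_i};\btheta)-r_{i,a_i})\gb(\xb_{i,a_i};\btheta) - (\la\gb(\xb_{i,a_i};\btheta_0),\btheta-\btheta_0\ra-r_{i,a_i})\gb(\xb_{i,a_i};\btheta_0)\big],
\]
I would control it with the two standard near-initialization estimates for over-parameterized ReLU networks (cf. \citet{cao2019generalization2}): the almost-linearity bound $|f(\xb;\btheta)-\la\gb(\xb;\btheta_0),\btheta-\btheta_0\ra|\le \tilde O(\tau^{4/3}L^3\sqrt m)$ and the gradient-perturbation bound $\|\gb(\xb;\btheta)-\gb(\xb;\btheta_0)\|_2\le\tilde O(\tau^{1/3}L^3\sqrt m)$, valid for $\|\btheta-\btheta_0\|_2\le\tau$. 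Substituting $\tau=2\sqrt{t/(m\lambda)}$ and tracking the powers of $t$, $L$, $\lambda$ and $m$ produces the stated exponents.

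Because these perturbation bounds are only valid inside the ball, the argument must be closed by an induction on $j$ that simultaneously certifies $\|\btheta^{(j)}-\btheta_0\|_2\le 2\sqrt{t/(m\lambda)}$ for every iterate; the width condition $m^{1/6}\ge\bar C_4\sqrt{\log m}L^{7/2}t^{7/6}\lambda^{-7/6}(1+\sqrt{t/\lambda})$ is exactly what forces the error term below $\sqrt{t/(m\lambda)}$, so the induction cannot escape the ball, while the remaining conditions on $2\sqrt{t/(m\lambda)}$ keep $\tau$ within the regime where the NTK estimates apply. Carefully matching these polynomial factors, rather than any single conceptual difficulty, is where the bulk of the work lies.
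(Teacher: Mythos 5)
Your proposal is correct in substance but takes a genuinely different (and somewhat leaner) route than the paper. The paper introduces an \emph{auxiliary gradient-descent sequence} $\tilde\btheta^{(j)}$ run on the linearized objective, proves $\|\btheta^{(j)}-\tilde\btheta^{(j)}\|_2\le\tau/2$ by a recursion whose error terms require both the Jacobian-perturbation bounds (its Lemma~\ref{lemma:jacob}) and a separate descent argument showing $\|\fb^{(j)}-\yb\|_2\le 2\sqrt t$ along the actual trajectory (its Lemma~\ref{lemma:linearconvergence}), and then invokes standard GD-on-ridge-regression convergence for $\tilde\btheta^{(j)}\to\btheta_0+\bar\Zb_t^{-1}\bar\bbb_t/\sqrt m$ (its Lemma~\ref{lemma:implicitbias}). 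You instead compare $\btheta^{(j)}$ directly to the fixed point $\tbtheta$ via an inexact-gradient-descent contraction, using $\nabla\tilde\cL(\btheta)=m\bar\Zb_t(\btheta-\tbtheta)$; this collapses the two-stage comparison into one recursion and dispenses with the auxiliary sequence and with the trajectory bound on $\|\fb^{(j)}-\yb\|_2$ entirely. The identity, the contraction factor $1-\eta m\lambda$, the geometric-series summation, the bound $\|\tbtheta-\btheta_0\|_2\le\sqrt{t/(m\lambda)}$, and the in-ball induction are all sound, and the gradient-mismatch decomposition uses exactly the NTK lemmas the paper relies on.

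Two places where your route pays a (benign) price. First, in the mismatch term $(\la\gb(\xb_{i,a_i};\btheta_0),\btheta-\btheta_0\ra-r_{i,a_i})(\gb(\xb_{i,a_i};\btheta)-\gb(\xb_{i,a_i};\btheta_0))$ you must bound the residual \emph{uniformly over the ball}, which gives $|\la\gb,\btheta-\btheta_0\ra-r|\le 1+O(\sqrt{tL/\lambda})$ rather than the paper's trajectory bound $\|\fb^{(j)}-\yb\|_2\le 2\sqrt t$; this costs roughly a factor $\sqrt L$ (and requires $t\gtrsim\lambda$) to fold the resulting $t^{7/6}\lambda^{-7/6}$ piece under the stated $t^{5/3}\lambda^{-5/3}(1+\sqrt{t/\lambda})$ envelope --- absorbable into $\bar C_5$ and the width condition, but worth stating. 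Second, since $\|\btheta^{(j)}-\tbtheta\|_2$ starts at $\|\btheta_0-\tbtheta\|_2\approx\sqrt{t/(m\lambda)}$, your triangle inequality yields $\|\btheta^{(j)}-\btheta_0\|_2\le 2\sqrt{t/(m\lambda)}+\epsilon_g/(m\lambda)$, so the induction closes only with radius $3\sqrt{t/(m\lambda)}$ rather than the stated $2\sqrt{t/(m\lambda)}$ (the paper gets $2$ because $\|\tilde\btheta^{(j)}-\btheta_0\|_2\le\sqrt{t/(m\lambda)}$ holds for every iterate of the auxiliary sequence by monotone descent). This only rescales constants downstream, but you should either adjust the radius or add the half-step comparison to recover the constant exactly.
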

Next lemma shows the error bounds for $\bar \Zb_t$ and $\Zb_t$. 
\begin{lemma}\label{lemma:newboundz}
There exist constants $\{\bar C_i\}_{i=1}^5>0$ such that for any $\delta > 0$, if $m$ satisfies that
\begin{align}
    \bar C_1m^{-3/2}L^{-3/2}[\log(TKL^2/\delta)]^{3/2}\leq2\sqrt{t/(m\lambda)} \leq \bar C_2 L^{-6}[\log m]^{-3/2},\ \forall t \in [T],\notag
\end{align}
then with probability at least $1-\delta$,
for any $t \in [T]$, we have
\begin{align}
& \|\Zb_t\|_2 \leq \lambda + \bar C_3tL,\notag\\
    &\|\bar\Zb_t -  \Zb_t\|_F \leq \bar C_4m^{-1/6}\sqrt{\log m}L^4t^{7/6}\lambda^{-1/6},\notag\\
   & \bigg|\log \frac{\det(\bar \Zb_t)}{\det(\lambda\Ib)} - \log\frac{ \det(\Zb_t)}{\det (\lambda\Ib)}\bigg| \leq \bar C_5m^{-1/6}\sqrt{\log m} L^4t^{5/3}\lambda^{-1/6}.\notag
\end{align}
\end{lemma}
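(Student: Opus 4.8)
The plan is to prove the three inequalities one at a time, in each case reducing the estimate to two near-initialization bounds for the network in \eqref{def:network} that are standard in the NTK analysis: a gradient-norm bound $\|\gb(\xb;\btheta)\|_2 \le \bar C\sqrt{mL}$ and a gradient-perturbation bound $\|\gb(\xb;\btheta)-\gb(\xb;\btheta_0)\|_2 \le \bar C\sqrt{\log m}\,\omega^{1/3}L^{3}\|\gb(\xb;\btheta_0)\|_2$, both valid whenever $\|\btheta-\btheta_0\|_2 \le \omega$ with $\omega \le \bar C L^{-6}(\log m)^{-3/2}$. This last restriction is exactly the upper-bound hypothesis $2\sqrt{t/(m\lambda)} \le \bar C_2 L^{-6}(\log m)^{-3/2}$, and the bounds apply along the entire trajectory because Lemma~\ref{lemma:newboundreference} gives $\|\btheta_{i-1}-\btheta_0\|_2 \le 2\sqrt{(i-1)/(m\lambda)} \le 2\sqrt{t/(m\lambda)}$ for every $i \le t$. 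Throughout, I write $\gb_i^{(0)} = \gb(\xb_{i,a_i};\btheta_0)$ and $\gb_i = \gb(\xb_{i,a_i};\btheta_{i-1})$, so that $\Zb_t = \lambda\Ib + \sum_{i=1}^t \gb_i\gb_i^\top/m$ and $\bar\Zb_t = \lambda\Ib + \sum_{i=1}^t \gb_i^{(0)}(\gb_i^{(0)})^\top/m$.

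For $\|\Zb_t\|_2$ I would apply the triangle inequality to the sum defining $\Zb_t$ and bound each rank-one term by $\|\gb_i\|_2^2/m \le \bar C^2 L$ using the gradient-norm bound; summing $t$ such terms yields $\|\Zb_t\|_2 \le \lambda + \bar C_3 tL$. For $\|\bar\Zb_t - \Zb_t\|_F$ I would compare the two sums term by term through the identity $\gb_i^{(0)}(\gb_i^{(0)})^\top - \gb_i\gb_i^\top = (\gb_i^{(0)}-\gb_i)(\gb_i^{(0)})^\top + \gb_i(\gb_i^{(0)}-\gb_i)^\top$, which gives $\|\gb_i^{(0)}(\gb_i^{(0)})^\top - \gb_i\gb_i^\top\|_F \le \|\gb_i^{(0)}-\gb_i\|_2(\|\gb_i^{(0)}\|_2 + \|\gb_i\|_2)$. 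Inserting the gradient-norm bound for the two factors on the right and the perturbation bound with $\omega = 2\sqrt{(i-1)/(m\lambda)}$, so that $\omega^{1/3} \propto (i/(m\lambda))^{1/6}$, the $i$-th term (after dividing by $m$) is of order $m^{-1/6}\sqrt{\log m}\,L^4 (i/\lambda)^{1/6}$; the $m^{-1/6}$ arises from the cancellation $m^{-1}\cdot m^{-1/6}\cdot(\sqrt m)^2$, and the power $L^4$ from $L^{3}\cdot L^{1/2}\cdot L^{1/2}$. Summing over $i$ with $\sum_{i=1}^t i^{1/6} = O(t^{7/6})$ produces the bound $\bar C_4 m^{-1/6}\sqrt{\log m}L^4 t^{7/6}\lambda^{-1/6}$.

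For the log-determinant difference, note first that the $\det(\lambda\Ib)$ terms cancel, so it suffices to control $|\log\det\Zb_t - \log\det\bar\Zb_t|$. I would use the integral mean-value identity $\log\det\Zb_t - \log\det\bar\Zb_t = \int_0^1 \tr\big[\Zb(s)^{-1}(\Zb_t-\bar\Zb_t)\big]\,ds$, where $\Zb(s) = (1-s)\bar\Zb_t + s\Zb_t$. Since $\Zb_t, \bar\Zb_t \succeq \lambda\Ib$, every interpolant satisfies $\Zb(s) \succeq \lambda\Ib$ and hence $\|\Zb(s)^{-1}\|_2 \le 1/\lambda$. The crucial observation is that $\Zb_t - \bar\Zb_t$ is a sum of $t$ differences of rank-one matrices, so it has rank at most $2t$ and therefore $\|\Zb_t-\bar\Zb_t\|_* \le \sqrt{2t}\,\|\Zb_t-\bar\Zb_t\|_F$. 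By spectral/nuclear-norm duality, $|\tr[\Zb(s)^{-1}(\Zb_t-\bar\Zb_t)]| \le \|\Zb(s)^{-1}\|_2\,\|\Zb_t-\bar\Zb_t\|_* \le \lambda^{-1}\sqrt{2t}\,\|\Zb_t-\bar\Zb_t\|_F$. Combining with the Frobenius bound from the previous step and simplifying $t^{1/2}\cdot t^{7/6} = t^{5/3}$ and $\lambda^{-7/6} \le \lambda^{-1/6}$ (since $\lambda \ge 1$) yields $\bar C_5 m^{-1/6}\sqrt{\log m}L^4 t^{5/3}\lambda^{-1/6}$.

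The main obstacle is precisely this last step. A naive bound $|\tr[\Zb(s)^{-1}(\Zb_t-\bar\Zb_t)]| \le \|\Zb(s)^{-1}\|_F\,\|\Zb_t-\bar\Zb_t\|_F$ would introduce $\|\Zb(s)^{-1}\|_F$, which scales like $\sqrt{p}$ and hence like $\sqrt{m}$, rendering the estimate vacuous in the overparameterized regime. The resolution is to exploit the rank-$2t$ structure of $\Zb_t - \bar\Zb_t$ and pass to the nuclear norm, so that $m$ enters only through the $m^{-1/6}$ factor inherited from the gradient-perturbation bound; this is the same mechanism that keeps the regret bound of Theorem~\ref{thm:newalgorithm} independent of $m$. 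The only remaining care is to confirm that the two near-initialization estimates hold uniformly over $i \le t$ along the iterate trajectory, which is guaranteed by the radius control $\|\btheta_{i-1}-\btheta_0\|_2 \le 2\sqrt{t/(m\lambda)}$ from Lemma~\ref{lemma:newboundreference} together with the hypotheses on $m$ and $\eta$.
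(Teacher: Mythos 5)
Your treatment of the first two inequalities is essentially the paper's: the triangle inequality plus the gradient-norm bound (Lemma~\ref{lemma:cao_boundgradient}) for $\|\Zb_t\|_2$, and the rank-one difference identity $\|\ab\ab^\top-\bbb\bbb^\top\|_F\le(\|\ab\|_2+\|\bbb\|_2)\|\ab-\bbb\|_2$ plus the gradient-perturbation bound (Lemma~\ref{lemma:cao_gradientdifference}) for the Frobenius estimate. The paper simply uses the uniform radius $\tau=2\sqrt{t/(m\lambda)}$ for every index $i$ rather than your per-index radius $2\sqrt{(i-1)/(m\lambda)}$; both give the $t^{7/6}$ rate, and the uniform choice is slightly safer since the perturbation lemmas also impose a \emph{lower} bound on the radius that your $\omega$ would violate for small $i$ (a cosmetic point, easily fixed by reverting to the uniform $\tau$). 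For the log-determinant bound you genuinely diverge. The paper first passes to $t\times t$ matrices via $\det(\Ib+\Ab\Ab^\top)=\det(\Ib+\Ab^\top\Ab)$, applies concavity of $\log\det$ at $\Ib+\Jb^\top\Jb/\lambda$, and uses $\|(\Ib+\Jb^\top\Jb/\lambda)^{-1}\|_F\le\sqrt{t}\,\|(\Ib+\Jb^\top\Jb/\lambda)^{-1}\|_2\le\sqrt{t}$ --- the dimension reduction is what removes the would-be $\sqrt{p}$ factor. You instead stay in $\RR^{p\times p}$, write the difference as $\int_0^1\tr\big[\Zb(s)^{-1}(\Zb_t-\bar\Zb_t)\big]\,ds$, and remove the $\sqrt{p}$ by exploiting that $\Zb_t-\bar\Zb_t$ has rank at most $2t$, so $\|\Zb_t-\bar\Zb_t\|_*\le\sqrt{2t}\,\|\Zb_t-\bar\Zb_t\|_F$, combined with spectral/nuclear duality and $\|\Zb(s)^{-1}\|_2\le\lambda^{-1}$. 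Both routes are valid and give the same $t^{5/3}$ rate; yours has the minor advantages of yielding the two-sided bound in one stroke (the paper proves one direction and asserts the reverse ``by the same method'') and of reusing the already-established Frobenius bound rather than separately estimating $\|\bar\Jb^\top\bar\Jb-\Jb^\top\Jb\|_F$, at the cost of an extra factor $\lambda^{-1}$ from $\|\Zb(s)^{-1}\|_2$, which you correctly absorb using $\lambda\ge 1$ (available from the hypotheses of Theorem~\ref{thm:newalgorithm}).
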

With above lemmas, we prove Lemma \ref{lemma:newinset} as follows.
\begin{proof}[Proof of Lemma \ref{lemma:newinset}]
By Lemma \ref{lemma:newboundreference} we know that $\|\btheta_t- \btheta_0\|_2\leq 2\sqrt{t/(m\lambda)}$. By Lemma \ref{lemma:equal}, with probability at least $1-\delta$, there exists $\btheta^*$ such that for any $1 \leq t \leq T$,
\begin{align}
    &h(\xb_{t,a_t}) = \la \gb(\xb_{t,a_t}; \btheta_0)/\sqrt{m}, ~ \sqrt{m}(\btheta^* - \btheta_0)\ra,\label{eq:52_1}\\
    &\sqrt{m}\|\btheta^* - \btheta_0\|_2 \leq \sqrt{2\hb^\top\Hb^{-1}\hb} \leq S,\label{eq:52_2}
\end{align}
where the second inequality holds since $S \geq \sqrt{2\hb^\top \Hb^{-1}\hb}$ in the statement of Lemma \ref{lemma:newinset}.
\CC{Thus, conditioned on \eqref{eq:52_1} and \eqref{eq:52_2}, by Theorem 2 in \citet{abbasi2011improved}}, with probability at least $1-\delta$, for any $1 \leq t \leq T$, $\btheta^*$ satisfies that
\begin{align}
    \|\sqrt{m}(\btheta^* - \btheta_0) - \bar \Zb_t^{-1}\bar\bbb_t\|_{\bar \Zb_t} \leq \bar\gamma_t.\label{eq:newinset_0}
\end{align}
We now prove that $\|\btheta^* - \btheta_t\|_{ \Zb_t} \leq \gamma_t/\sqrt{m}$.  From the triangle inequality,
\begin{align}
    \|\btheta^* - \btheta_t\|_{ \Zb_t} &\leq \underbrace{\|\btheta^* - \btheta_0 - \bar \Zb_t^{-1}\bar\bbb_t/\sqrt{m}\|_{ \Zb_t}}_{I_1} + \underbrace{\|\btheta_t - \btheta_0 - \bar \Zb_t^{-1}\bar\bbb_t/\sqrt{m}\|_{ \Zb_t}}_{I_2} \,. \label{eq:newinset_1}
\end{align}
We bound $I_1$ and $I_2$ separately. For $I_1$, we have
\begin{align}
    I_1^2 &= (\btheta^* - \btheta_0 -  \bar\Zb_t^{-1}\bar\bbb_t/\sqrt{m})^\top \Zb_t(\btheta^* - \btheta_0 - \bar\Zb_t^{-1}\bar\bbb_t/\sqrt{m})\notag \\
    & = (\btheta^* - \btheta_0 -  \bar\Zb_t^{-1}\bar\bbb_t/\sqrt{m})^\top \bar \Zb_t(\btheta^* - \btheta_0 -  \bar\Zb_t^{-1}\bar\bbb_t/\sqrt{m})\notag \\
    &\qquad + (\btheta^* - \btheta_0 -  \bar\Zb_t^{-1}\bar\bbb_t/\sqrt{m})^\top (\Zb_t - \bar \Zb_t)(\btheta^* - \btheta_0 -  \bar\Zb_t^{-1}\bar\bbb_t/\sqrt{m})\notag\\
    & \leq (\btheta^* - \btheta_0 -  \bar\Zb_t^{-1}\bar\bbb_t/\sqrt{m})^\top \bar \Zb_t(\btheta^* - \btheta_0 -  \bar\Zb_t^{-1}\bar\bbb_t/\sqrt{m})\notag \\
    &\qquad + \frac{\|\Zb_t - \bar\Zb_t\|_2}{\lambda}(\btheta^* - \btheta_0 -  \bar\Zb_t^{-1}\bar\bbb_t/\sqrt{m})^\top \bar \Zb_t(\btheta^* - \btheta_0 -  \bar\Zb_t^{-1}\bar\bbb_t/\sqrt{m})\notag \\
    &  \leq (1+ \|\Zb_t - \bar\Zb_t\|_2/\lambda)\bar\gamma_t^2/m,\label{eq:newinset_2}
\end{align}
where the first inequality holds due to the fact that $\xb^\top\Ab\xb \leq \xb^\top\Bb\xb\cdot \|\Ab\|_2/\lambda_{\text{min}}(\Bb)$ for some $\Bb \succ 0$ and the fact that $\lambda_{\text{min}}(\bar\Zb_t) \geq \lambda$, the second inequality holds due to \eqref{eq:newinset_0}. We have
\begin{align}
    \big\|\bar\Zb_t -  \Zb_t\big\|_2\leq \big\|\bar\Zb_t -  \Zb_t\big\|_F\leq  C_1m^{-1/6}\sqrt{\log m}L^4t^{7/6}\lambda^{-1/6},\label{eq:newinset_2.1}
\end{align}
where the first inequality holds due to the fact that $\|\Ab\|_2 \leq \|\Ab\|_F$, the second inequality holds due to Lemma \ref{lemma:newboundz}. We also have
\begin{align}
    \bar\gamma_t &= \nu\sqrt{\log\frac{\det \bar\Zb_t}{\det \lambda \Ib}-2\log\delta} + \sqrt{\lambda}S \notag \\
    & = 
    \nu\sqrt{\log\frac{\det \Zb_t}{\det \lambda \Ib} + \log\frac{\det \bar\Zb_t}{\det \lambda \Ib} - \log\frac{\det \Zb_t}{\det \lambda \Ib}-2\log\delta} + \sqrt{\lambda}S \notag \\
    & \leq \nu\sqrt{\log\frac{\det \Zb_t}{\det \lambda \Ib} + C_2m^{-1/6}\sqrt{\log m} L^4t^{5/3}\lambda^{-1/6}-2\log\delta} + \sqrt{\lambda}S,\label{eq:newinset_2.2}
\end{align}
where $C_1, C_2>0$ are two constants, the inequality holds due to Lemma \ref{lemma:newboundz}. Substituting \eqref{eq:newinset_2.1} and \eqref{eq:newinset_2.2} into \eqref{eq:newinset_2}, we have
\begin{align}
    I_1 &\leq \sqrt{1+ \|\Zb_t - \bar\Zb_t\|_2/\lambda}\bar\gamma_t/\sqrt{m}\notag \\
    & \leq \sqrt{1+C_1m^{-1/6}\sqrt{\log m}L^4t^{7/6}\lambda^{-7/6}}/\sqrt{m}\notag \\
    &\qquad \cdot \left(\nu\sqrt{\log\frac{\det \Zb_t}{\det \lambda \Ib} + C_2m^{-1/6}\sqrt{\log m} L^4t^{5/3}\lambda^{-1/6}-2\log\delta} + \sqrt{\lambda}S\right).\label{eq:newinset_4}
\end{align}
For $I_2$, we have
\begin{align}
    I_2& = 
    \|\btheta_t - \btheta_0 - \bar \Zb_t^{-1}\bar\bbb_t/\sqrt{m}\|_{ \Zb_t}\notag \\
    & \leq 
    \|\Zb_t\|_2\cdot\|\btheta_t - \btheta_0 - \bar \Zb_t^{-1}\bar\bbb_t/\sqrt{m}\|_2\notag \\
    &\leq (\lambda + C_3 t L)\|\btheta_t - \btheta_0 - \bar \Zb_t^{-1}\bar\bbb_t/\sqrt{m}\|_2 \notag \\
    & \leq (\lambda + C_3tL)\Big[(1- \eta m \lambda)^{J/2} \sqrt{t/(m\lambda)} + m^{-2/3}\sqrt{\log m}L^{7/2}t^{5/3}\lambda^{-5/3}(1+\sqrt{t/\lambda})\Big],\label{eq:newinset_5}
\end{align}
where $C_3>0$ is a constant, the first inequality  holds since for any vector $\ab$, the second inequality holds due to $\|\Zb_t\|_2 \leq \lambda + C_3 tL$ by Lemma \ref{lemma:newboundz}, the third inequality holds due to Lemma \ref{lemma:newboundreference}. Substituting \eqref{eq:newinset_4} and \eqref{eq:newinset_5} into \eqref{eq:newinset_1}, we obtain $\big\|\btheta^* - \btheta_t\big\|_{\Zb_t} \leq \gamma_t/\sqrt{m}$.  This completes the proof. 
\end{proof}

\subsection{Proof of Lemma \ref{lemma:newboundh}}
The proof starts with three lemmas that bound the error terms of the function value and gradient of neural networks. 
\begin{lemma}[Lemma 4.1, \citet{cao2019generalization2}]\label{lemma:cao_functionvalue}
There exist constants $\{\bar C_i\}_{i=1}^3 >0$ such that for any $\delta > 0$, if $\tau$ satisfies that
\begin{align}
    \bar C_1m^{-3/2}L^{-3/2}[\log(TKL^2/\delta)]^{3/2}\leq\tau \leq \bar C_2 L^{-6}[\log m]^{-3/2},\notag
\end{align}
then with probability at least $1-\delta$,
for all $\tilde\btheta, \hat\btheta$ satisfying $ \|\tilde\btheta - \btheta_0\|_2 \leq \tau, \|\hat\btheta - \btheta_0\|_2 \leq \tau$ and $j \in [TK]$ we have
\begin{align}
    \Big|f(\xb^j; \tilde\btheta) - f(\xb^j;  \hat\btheta) - \la \gb(\xb^j; 
    \hat\btheta),\tilde\btheta - \hat\btheta\ra\Big| \leq \bar C_3\tau^{4/3}L^3\sqrt{m \log m}.\notag
\end{align}
\end{lemma}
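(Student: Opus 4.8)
The plan is to reduce the claim to a Lipschitz-type bound on the gradient $\gb(\xb^j;\cdot)$ inside the radius-$\tau$ ball around $\btheta_0$, and then to prove that gradient bound by a perturbation analysis of the forward and backward passes of the network.

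First, fix $j$ and parametrize the segment by $\btheta(s) = \hat\btheta + s(\tilde\btheta - \hat\btheta)$, $s \in [0,1]$. Since $f(\xb^j; \cdot)$ is continuous, piecewise smooth, and differentiable for almost every $s$ along the segment, the fundamental theorem of calculus gives
\[
f(\xb^j; \tilde\btheta) - f(\xb^j; \hat\btheta) = \int_0^1 \la \gb(\xb^j; \btheta(s)), \tilde\btheta - \hat\btheta \ra\, ds.
\]
Subtracting the linear term $\la \gb(\xb^j; \hat\btheta), \tilde\btheta - \hat\btheta\ra$ and applying Cauchy-Schwarz, the remainder is at most $\|\tilde\btheta - \hat\btheta\|_2 \cdot \sup_{s\in[0,1]} \|\gb(\xb^j; \btheta(s)) - \gb(\xb^j; \hat\btheta)\|_2$. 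Since $\|\tilde\btheta - \hat\btheta\|_2 \le \|\tilde\btheta - \btheta_0\|_2 + \|\hat\btheta - \btheta_0\|_2 \le 2\tau$, and every $\btheta(s)$ also lies within $\tau$ of $\btheta_0$, it suffices to show that for all $\btheta$ with $\|\btheta - \btheta_0\|_2 \le \tau$ one has $\|\gb(\xb^j; \btheta) - \gb(\xb^j; \btheta_0)\|_2 = O(\tau^{1/3} L^3 \sqrt{m\log m})$; multiplying the two factors then yields the stated $O(\tau^{4/3} L^3 \sqrt{m\log m})$ rate.

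Second, to control the gradient difference I would establish the standard near-initialization estimates, which hold with probability at least $1-\delta$ over the Gaussian draw of $\btheta_0$ (the union bound over $j\in[TK]$ is what produces the $\log(TKL^2/\delta)$ term in the admissible range of $\tau$). Writing the per-layer gradient as a product of the layer input $\xb_{l-1}$ and a back-propagated signal $\sqrt m\, \Wb_L \mathbf{D}_{L-1}\Wb_{L-1}\cdots \mathbf{D}_l$, where $\mathbf{D}_l$ is the diagonal matrix of ReLU activation indicators at layer $l$, I would use the concentration bounds $\|\Wb_l\|_2 = O(\sqrt m)$ for each $l$, the hidden-output bound $\|\xb_l\|_2 = O(1)$, and the fact that when $\btheta$ is moved by $\tau$ the number of coordinates whose ReLU sign flips is $O(\tau^{2/3} m)$ per layer. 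These are precisely the activation-pattern and concentration lemmas from the over-parameterization analysis of \citet{allen2018convergence} and \citet{cao2019generalization2}. I would then propagate the perturbation through the $L$ layers: since the forward features and the backward amplification change only through the $O(\tau^{2/3} m)$ flipped neurons per layer, the resulting change in each gradient block scales like $\sqrt{(\text{flips})/m} = O(\tau^{1/3})$ relative to its typical magnitude, and accumulating the forward and backward perturbations across layers produces the $L^3\sqrt{m\log m}$ prefactor, giving the desired gradient bound. Combined with the first step this completes the argument.

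The main obstacle is exactly this propagation step: because ReLU is non-smooth, there is no classical second-order Taylor remainder, so the characteristic $\tau^{4/3}$ rate cannot come from bounding a Hessian. Instead it must be extracted from counting activation sign flips and tracking how they propagate simultaneously through the forward features and the backward signals over all $L$ layers, without losing the $m$-dependence. The upper bound $\tau \le \bar C_2 L^{-6}(\log m)^{-3/2}$ is precisely what keeps $\btheta$ in the near-initialization regime where all of these concentration estimates remain valid.
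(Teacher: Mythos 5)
The paper does not actually prove this lemma; it is imported verbatim as Lemma 4.1 of \citet{cao2019generalization2}, so there is no in-paper argument to compare against. Your reconstruction invokes the same underlying machinery as the cited source --- near-initialization concentration of the layer matrices, $O(\tau^{2/3}m)$ activation sign flips per layer, and propagation of sparse diagonal perturbations through the forward and backward passes, all from \citet{allen2018convergence} --- so the outline is sound, and the roles you assign to the two ends of the admissible range of $\tau$ (the union bound over $j\in[TK]$ at the lower end, staying in the near-initialization regime at the upper end) are correct.

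The one concrete gap is in the $L$ exponent produced by your first reduction. Reducing via the fundamental theorem of calculus to $\|\tilde\btheta-\hat\btheta\|_2\cdot\sup_{s}\|\gb(\xb^j;\btheta(s))-\gb(\xb^j;\hat\btheta)\|_2$ is legitimate (the ReLU network is piecewise multilinear in the weights, hence locally Lipschitz and differentiable a.e.\ along the segment), but the gradient-perturbation estimate you then need is the one this paper records as Lemma~\ref{lemma:cao_gradientdifference}, which is \emph{relative}: $\|\gb(\xb^j;\btheta)-\gb(\xb^j;\btheta_0)\|_2 \le O(\sqrt{\log m}\,\tau^{1/3}L^3)\,\|\gb(\xb^j;\btheta_0)\|_2$. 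Combined with $\|\gb(\xb^j;\btheta_0)\|_2 = O(\sqrt{mL})$ (Lemma~\ref{lemma:cao_boundgradient}; the extra $\sqrt{L}$ comes from summing $L$ per-layer gradient blocks, each of norm $O(\sqrt m)$), this gives a gradient difference of order $\tau^{1/3}L^{7/2}\sqrt{m\log m}$ and hence a final remainder of order $\tau^{4/3}L^{7/2}\sqrt{m\log m}$, not the stated $\tau^{4/3}L^{3}\sqrt{m\log m}$. Your claim that the gradient difference is $O(\tau^{1/3}L^3\sqrt{m\log m})$ therefore does not follow from the estimates you invoke. Recovering $L^3$ requires arguing as \citet{allen2018convergence} and \citet{cao2019generalization2} do: expand $f(\xb^j;\tilde\btheta)-f(\xb^j;\hat\btheta)$ directly through the product structure of the network and cancel the linear term term-by-term (their semi-smoothness decomposition), rather than passing through a uniform gradient-Lipschitz bound and the triangle inequality. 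This is a bookkeeping discrepancy in the polynomial dependence on $L$ rather than a structural flaw; everywhere the lemma is used downstream, the $L$ exponent is absorbed into $\mathrm{poly}(L)$ factors, so no conclusion of the paper would change.
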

\begin{lemma}[Theorem 5, \citet{allen2018convergence}]\label{lemma:cao_gradientdifference}
There exist constants $\{\bar C_i\}_{i=1}^3>0$ such that for any $\delta \in (0,1)$, if $\tau$ satisfies that
\begin{align}
    \bar C_1 m^{-3/2}L^{-3/2}\max \{\log^{-3/2}m, \log^{3/2} (TK/\delta) \}\leq\tau \leq \bar C_2 L^{-9/2}\log^{-3}m,\notag
\end{align}
then with probability at least $1-\delta$,
for all $\|\btheta - \btheta_0\|_2 \leq \tau$ and $j \in [TK]$ we have
\begin{align}
  \| \gb(\xb^j; \btheta) - \gb(\xb^j; \btheta_0)\|_2 \leq \bar C_3\sqrt{\log m}\tau^{1/3}L^3\|\gb(\xb^j;  \btheta_0)\|_2.\notag
\end{align}
\end{lemma}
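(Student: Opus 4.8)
The plan is to recognize this as a restatement of the gradient-stability result from the overparameterization literature and to reconstruct its proof via the now-standard toolkit for deep ReLU networks near Gaussian initialization \citep{allen2018convergence, cao2019generalization2}. The target is a \emph{relative} perturbation bound: moving the weights a distance $\tau$ from $\btheta_0$ perturbs the full gradient $\gb(\xb^j; \btheta)$ by at most a $\tilde O(\tau^{1/3} L^3)$ fraction of $\|\gb(\xb^j; \btheta_0)\|_2$. First I would record the layerwise structure of $\gb(\xb^j; \btheta) = \nabla_{\btheta} f(\xb^j; \btheta)$: writing out the forward pass through the ReLU layers, the block of $\gb$ corresponding to $\Wb_\ell$ factors as an outer product of a backpropagated vector and the layer-$(\ell-1)$ activation. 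Every factor in this product is built from (i) the forward activations, (ii) the diagonal $0/1$ matrices recording which ReLUs are active, and (iii) the weight matrices themselves, so controlling the gradient difference reduces to controlling how each of these three ingredients moves as $\btheta$ ranges over the ball of radius $\tau$ around $\btheta_0$.

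Second, I would establish the ``at initialization'' geometry that holds with high probability over the Gaussian draw of $\btheta_0$. Using concentration of $\chi^2$ variables together with the antisymmetric, doubled-width initialization the paper adopts (which is exactly what forces $f(\xb^i; \btheta_0) = 0$), one shows that the forward signals at every layer have norm $\Theta(1)$, the intermediate weight matrices have spectral norm $O(1)$ after the $\sqrt{m}$ scaling, and the backpropagation vectors are likewise two-sided bounded. These facts pin down $\|\gb(\xb^j; \btheta_0)\|_2$ up to constants and supply the denominator against which the relative bound is stated; they require a union bound over the $TK$ contexts, which is where the $\log(TK/\delta)$ in the lower range of $\tau$ enters.

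The core of the argument, and the main obstacle, is the third ingredient: the ReLU activation patterns change \emph{discontinuously} in $\btheta$, so $\gb$ is genuinely non-Lipschitz and a naive chain rule fails. The remedy is the sparse-sign-change phenomenon: within radius $\tau$ of $\btheta_0$, the number of neurons per layer whose active/inactive status flips is only $O(\tau^{2/3} m)$. This is proved by combining (a) the perturbation to each pre-activation is $O(\tau)$ after propagation through the layer via the spectral bounds above, with (b) anti-concentration of the Gaussian pre-activations, so that at most an $O(\tau^{2/3})$ fraction of neurons sit within the flip threshold. Since the flipped neurons enter the gradient difference only through their $\ell_2$ mass, their relative contribution scales like $\sqrt{\tau^{2/3}} = \tau^{1/3}$, which is the source of the exponent in the statement; the two continuous contributions (perturbed activations and perturbed weights) are Lipschitz and contribute only $O(\tau)$, hence are absorbed. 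Propagating these estimates across all $L$ layers without an exponential-in-$L$ blowup, via an inductive argument that keeps the accumulated spectral factors polynomial in $L$, yields the stated $L^3$ factor. A final union bound over $j \in [TK]$ and bookkeeping of the constants $\{\bar C_i\}$ completes the proof; the upper restriction $\tau \leq \bar C_2 L^{-9/2} \log^{-3} m$ is precisely what keeps the sparse-flip fraction and the spectral perturbations in the regime where these estimates remain valid.
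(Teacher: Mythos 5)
The paper does not actually prove this lemma: it is imported verbatim as Theorem~5 of \citet{allen2018convergence}, so there is no internal proof to compare against. Your reconstruction is a faithful high-level outline of the argument in that cited source. You correctly identify the three ingredients: the factorization of the gradient block for each $\Wb_l$ into a backpropagated vector times a forward activation; the with-high-probability control at Gaussian initialization of forward norms, spectral norms, and backward vectors (with the union bound over the $TK$ contexts accounting for the $\log(TK/\delta)$ term in the admissible range of $\tau$); and, crucially, the sparse sign-change phenomenon, by which only an $O(\tau^{2/3})$ fraction of ReLU activation patterns flip inside the ball of radius $\tau$, so that their $\ell_2$ mass contributes the $\tau^{1/3}$ factor while the continuous perturbations of weights and activations contribute only $O(\tau)$ and are absorbed. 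That is exactly the mechanism behind the cited theorem, and relative to the paper your approach buys self-containedness at the cost of re-deriving a known result. Two bookkeeping points you gloss over: the cited result is stated in terms of per-layer spectral/Frobenius perturbations, so one should note that $\|\btheta - \btheta_0\|_2 \leq \tau$ implies $\|\Wb_l - \Wb_l^{(0)}\|_2 \leq \tau$ for every layer before invoking it; and the technical heart of the argument --- the inductive propagation of the estimates across all $L$ layers without exponential blowup, and the precise exponents in the sign-change count --- is named in your sketch but not executed. As a blind reconstruction of an externally cited result, however, it identifies the right architecture.
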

\begin{lemma}[Lemma B.3, \citet{cao2019generalization2}]\label{lemma:cao_boundgradient}
There exist constants $\{\bar C_i\}_{i=1}^3>0$ such that for any $\delta > 0$, if $\tau$ satisfies that
\begin{align}
    \bar C_1m^{-3/2}L^{-3/2}[\log(TKL^2/\delta)]^{3/2}\leq\tau \leq \bar C_2 L^{-6}[\log m]^{-3/2},\notag
\end{align}
then with probability at least $1-\delta$,
for any $\|\btheta - \btheta_0\|_2 \leq \tau$ and $j \in[TK]$ 
we have $\|\gb(\xb^j; \btheta)\|_F\leq \bar C_3\sqrt{mL}$.
\end{lemma}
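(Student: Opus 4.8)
The plan is to bound $\|\gb(\xb^j;\btheta)\|_F$ directly by exploiting the layer-wise block structure of the gradient and controlling separately the norms of the forward activations and of the backpropagated signals. Fix a context $\xb^j$ and write the forward pass as $\xb^{(0)} = \xb^j$ and $\xb^{(l)} = \sigma(\Wb_l \xb^{(l-1)})$ for $1 \le l \le L-1$, and let $\mathbf{D}^{(l)} = \mathrm{diag}(\mathbf{1}\{\Wb_l\xb^{(l-1)} > 0\})$ be the diagonal $0$--$1$ matrix recording the ReLU activation pattern at layer $l$, all evaluated at $\btheta$. A standard backpropagation computation gives $\nabla_{\Wb_L} f = \sqrt m\, \xb^{(L-1)}$ and, for $1 \le l \le L-1$, $\nabla_{\Wb_l} f = \sqrt m\, \bbb^{(l)}(\xb^{(l-1)})^\top$, where the backpropagated vectors satisfy $\bbb^{(L-1)} = \mathbf{D}^{(L-1)}\Wb_L$ and $\bbb^{(l)} = \mathbf{D}^{(l)}\Wb_{l+1}^\top\bbb^{(l+1)}$. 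Since $\|\gb\|_F$ is the Euclidean norm of the concatenation of these blocks, I would record the identity
\[
\|\gb(\xb^j;\btheta)\|_F^2 = m\|\xb^{(L-1)}\|_2^2 + m\sum_{l=1}^{L-1}\|\bbb^{(l)}\|_2^2\,\|\xb^{(l-1)}\|_2^2,
\]
so the whole problem reduces to showing that every forward norm $\|\xb^{(l)}\|_2$ and every backward norm $\|\bbb^{(l)}\|_2$ is $O(1)$, uniformly over the ball $\|\btheta - \btheta_0\|_2 \le \tau$.

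Next I would establish the two families of norm bounds. For the forward activations, the $\Theta(1/m)$ entrywise variance of the Gaussian initialization makes each layer norm-preserving in expectation, and a union bound over the $TK$ contexts together with concentration of the relevant $\chi^2$-type sums yields $\|\xb^{(l)}\|_2 \in [1/2,\,2]$ (say) for every $l$ at $\btheta_0$ with probability at least $1-\delta/2$; the upper constraint $\tau \le \bar C_2 L^{-6}[\log m]^{-3/2}$ is exactly what guarantees that moving to a nearby $\btheta$ flips only a controlled fraction of ReLU gates, so these activation norms stay within a constant factor over the whole ball. For the backpropagated signals the same mechanism is at work: $\|\bbb^{(L-1)}\|_2 \le \|\Wb_L\|_2 = \Theta(1)$, and along the recursion $\bbb^{(l)} = \mathbf{D}^{(l)}\Wb_{l+1}^\top\bbb^{(l+1)}$ the factor $\Wb_{l+1}^\top$ inflates the norm by a constant while the ReLU mask $\mathbf{D}^{(l)}$ removes roughly half the coordinates, so the two effects cancel to leading order and $\|\bbb^{(l)}\|_2 = O(1)$ for all $l$. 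I would invoke the forward/backward norm estimates already developed in the overparameterization literature (\citet{allen2018convergence, cao2019generalization2}) rather than re-deriving them, again using the sign-flip counting to transfer them from $\btheta_0$ to every $\btheta$ in the ball.

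Finally I would combine: each summand in the displayed identity is $m\cdot O(1)\cdot O(1) = O(m)$, and there are $L$ of them, so $\|\gb(\xb^j;\btheta)\|_F^2 = O(mL)$ and hence $\|\gb(\xb^j;\btheta)\|_F \le \bar C_3\sqrt{mL}$ after a final union bound over $j\in[TK]$. I expect the main obstacle to be the backward estimate $\|\bbb^{(l)}\|_2 = O(1)$ holding simultaneously for all layers: the masks $\mathbf{D}^{(l)}$ and the vectors $\bbb^{(l+1)}$ depend on the same random weights, so the per-layer factors are not independent and a naive operator-norm argument would accumulate an uncontrolled $C^L$ factor. Taming this requires the decoupling and $\epsilon$-net machinery of the NTK-regime analyses, which is precisely why the statement is imported as Lemma~B.3 of \citet{cao2019generalization2}. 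One subtlety worth flagging is that routing the argument instead through the gradient-difference bound (Lemma~\ref{lemma:cao_gradientdifference}) would multiply $\|\gb(\xb^j;\btheta_0)\|_F$ by a factor polynomial in $L$ under the present range of $\tau$, so the direct layer-wise estimate is needed to obtain the clean $\sqrt{mL}$ scaling.
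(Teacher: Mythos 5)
The paper does not prove this lemma at all: it is imported verbatim as Lemma~B.3 of \citet{cao2019generalization2}, so there is no in-paper argument to match. Your sketch is a correct reconstruction of how the cited proof actually goes, and it is a genuinely more informative route than the paper's black-box citation. The layer-wise identity $\|\gb(\xb^j;\btheta)\|_F^2 = m\|\xb^{(L-1)}\|_2^2 + m\sum_{l=1}^{L-1}\|\bbb^{(l)}\|_2^2\|\xb^{(l-1)}\|_2^2$ is the right reduction (each block $\nabla_{\Wb_l} f$ is rank one), and with $\|\xb^{(l)}\|_2, \|\bbb^{(l)}\|_2 = O(1)$ uniformly over the $\tau$-ball, summing $L$ blocks of size $O(m)$ gives exactly the $\bar C_3\sqrt{mL}$ scaling, with the union bound over $j\in[TK]$ accounting for the $\log(TK\cdot)$ in the admissible range of $\tau$ (the lower bound on $\tau$ is just an artifact of how the failure probability is phrased in the source). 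Your two flagged subtleties are both accurate and are where the real work in \citet{allen2018convergence,cao2019generalization2} lives: the masks $\mathbf{D}^{(l)}$ are correlated with the weights they multiply, so a naive operator-norm recursion accumulates $C^L$ and must be replaced by the Gaussian-conditioning/$\epsilon$-net argument; and routing through Lemma~\ref{lemma:cao_gradientdifference} instead would only give $\|\gb(\xb^j;\btheta)\|_2 \leq (1+\bar C\sqrt{\log m}\,\tau^{1/3}L^3)\|\gb(\xb^j;\btheta_0)\|_2$, and since the stated upper range of $\tau$ only forces $\sqrt{\log m}\,\tau^{1/3}L^3 \lesssim L$, that route yields $O(\sqrt{m}\,L^{3/2})$ rather than $O(\sqrt{mL})$ --- your observation that the direct per-$\btheta$ estimate is needed for the clean constant is correct. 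The only blemish is notational: with $f(\xb;\btheta)=\sqrt{m}\,\Wb_L\sigma(\cdots)$ the top backward vector should read $\bbb^{(L-1)}=\mathbf{D}^{(L-1)}\Wb_L^\top$, and under this paper's initialization $\|\Wb_L\|_2=\Theta(1)$ as you use. Since you defer the forward/backward norm estimates to the same references the paper cites, your proposal is best read as a correct expansion of the citation rather than an independent proof, which is entirely appropriate here.
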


\begin{proof}[Proof of Lemma \ref{lemma:newboundh}]
\CC{We follow the regret bound analysis in \citet{abbasi2011improved, valko2013finite}}. Denote $a_t^* = \argmax_{a \in [K]}h(\xb_{t, a})$ and $\cC_t = \{\btheta: \|\btheta - \btheta_{t}\|_{\Zb_{t}} \leq \gamma_{t}/\sqrt{m}\}$. 
By Lemma \ref{lemma:newinset}, for all $1 \leq t \leq T$, we have $\|\btheta_t - \btheta_0\|_2 \leq 2\sqrt{t/(m\lambda)}$ and $\btheta^*  \in \cC_t$. By the choice of $m$, Lemmas \ref{lemma:cao_functionvalue}, \ref{lemma:cao_gradientdifference} and \ref{lemma:cao_boundgradient} hold. Thus, $h(\xb_{t,a_t^*}) - h(\xb_{t,a_t})$ can be bounded as follows:
\begin{align}
     &h(\xb_{t,a_t^*}) - h(\xb_{t,a_t}) \notag \\
     & = \la \gb(\xb_{t,a_t^*}; \btheta_0), \btheta^* - \btheta_0\ra - \la \gb(\xb_{t,a_t}; \btheta_0), \btheta^* - \btheta_0\ra\notag \\
     & \leq \la \gb(\xb_{t,a_t^*}; \btheta_{t-1}), \btheta^* - \btheta_0\ra - \la \gb(\xb_{t,a_t}; \btheta_{t-1}), \btheta^* - \btheta_0\ra \notag \\
     &\qquad + \|\btheta^* - \btheta_0\|_2(\|\gb(\xb_{t,a_t^*}; \btheta_{t-1}) - \gb(\xb_{t,a_t^*}; \btheta_{0})\|_2 + \|\gb(\xb_{t,a_t}; \btheta_{t-1}) - \gb(\xb_{t,a_t}; \btheta_{0})\|_2)\notag \\
     & \leq \la \gb(\xb_{t,a_t^*}; \btheta_{t-1}), \btheta^* - \btheta_0\ra - \la \gb(\xb_{t,a_t}; \btheta_{t-1}), \btheta^* - \btheta_0\ra  + C_1\sqrt{\hb^\top\Hb^{-1}\hb}m^{-1/6}\sqrt{\log m}t^{1/6}\lambda^{-1/6}L^{7/2}\notag \\
     & \leq \underbrace{\max_{\btheta \in \cC_{t-1}}\la \gb(\xb_{t,a_t^*}; \btheta_{t-1}), \btheta - \btheta_0\ra - \la \gb(\xb_{t,a_t}; \btheta_{t-1}), \btheta^* - \btheta_0\ra}_{I_1}  + C_1\sqrt{\hb^\top\Hb^{-1}\hb}m^{-1/6}\sqrt{\log m}t^{1/6}\lambda^{-1/6}L^{7/2} ,\label{eq:newalgorithm_0}
\end{align}
where the equality holds due to Lemma \ref{lemma:equal}, the first inequality holds due to triangle inequality, the second inequality holds due to Lemmas \ref{lemma:equal}, \ref{lemma:cao_gradientdifference}, \ref{lemma:cao_boundgradient}, the third inequality holds due to $\btheta^* \in \cC_{t-1}$. Denote 
\begin{align}
    \tilde U_{t,a} = \la \gb(\xb_{t,a}; \btheta_{t-1}), \btheta_{t-1} - \btheta_0\ra  + \gamma_{t-1}\sqrt{\gb(\xb_{t,a}; \btheta_{t-1})^\top\Zb_{t-1}^{-1}\gb(\xb_{t,a}; \btheta_{t-1})/m},\notag
\end{align}
then we have $\tilde U_{t,a} = \max_{\btheta \in \cC_{t-1}} \la \gb(\xb_{t,a}; \btheta_{t-1}), \btheta - \btheta_0\ra$ due to the fact that
\begin{align}
    \max_{\xb:\|\xb - \bbb\|_{\Ab} \leq c} \la \ab, \xb\ra = \la \ab, \bbb\ra + c\sqrt{\ab^\top\Ab^{-1}\ab} .\notag
\end{align}
Recall the definition of $U_{t,a}$ from Algorithm \ref{algorithm:2}, we also have
\begin{align}
    |U_{t,a} - \tilde U_{t,a}|  &= \big|f(\xb_{t,a}; \btheta_{t-1}) - \la \gb(\xb_{t,a}; \btheta_{t-1}), \btheta_{t-1} - \btheta_0\ra \big|\notag \\
    & = \big|f(\xb_{t,a}; \btheta_{t-1}) - f(\xb_{t,a}; \btheta_{0}) - \la \gb(\xb_{t,a}; \btheta_{t-1}), \btheta_{t-1} - \btheta_0\ra \big|\notag \\
    &\leq C_2m^{-1/6}\sqrt{\log m} t^{2/3}\lambda^{-2/3}L^3,\label{eq:newalgorithm_0.1}
\end{align}
where $C_2>0$ is a constant, the second equality holds due to $f(\xb^j;\btheta_0) = 0$ by the random initialization of $\btheta_0$, the inequality holds due to Lemma \ref{lemma:cao_functionvalue} with the fact $\|\btheta_{t-1} -\btheta_0\|_2\leq 2\sqrt{t/(m\lambda)})$.
Since $\btheta^* \in \cC_{t-1}$, then $I_1$ in \eqref{eq:newalgorithm_0} can be bounded as
\begin{align}
    &\max_{\btheta \in \cC_{t-1}}\la \gb(\xb_{t,a_t^*}; \btheta_{t-1}), \btheta - \btheta_0\ra - \la \gb(\xb_{t,a_t}; \btheta_{t-1}), \btheta^* - \btheta_0\ra \notag \\
    & =  \tilde U_{t, a_t^*} - \la \gb(\xb_{t,a_t}; \btheta_{t-1}), \btheta^* - \btheta_0\ra \notag \\
    &\leq  U_{t, a_t^*}-\la \gb(\xb_{t,a_t}; \btheta_{t-1}), \btheta^* - \btheta_0\ra + C_2m^{-1/6}\sqrt{\log m} t^{2/3}\lambda^{-2/3}L^3\notag \\
    & \leq U_{t, a_t}-\la \gb(\xb_{t,a_t}; \btheta_{t-1}), \btheta^* - \btheta_0\ra + C_2m^{-1/6}\sqrt{\log m} t^{2/3}\lambda^{-2/3}L^3\notag \\
    & \leq \tilde U_{t, a_t}-\la \gb(\xb_{t,a_t}; \btheta_{t-1}), \btheta^* - \btheta_0\ra + 2C_2m^{-1/6}\sqrt{\log m} t^{2/3}\lambda^{-2/3}L^3,\label{eq:newalgorithm_2}
\end{align}
where the first inequality holds due to \eqref{eq:newalgorithm_0.1}, the second inequality holds since $a_t = \argmax_{a} U_{t,a}$, the third inequality holds due to \eqref{eq:newalgorithm_0.1}. Furthermore, 
\begin{align}
   &\tilde U_{t, a_t}-\la \gb(\xb_{t,a_t}; \btheta_{t-1}), \btheta^* - \btheta_0\ra \notag \\
   &= \max_{\btheta \in \cC_{t-1}} \la \gb(\xb_{t,a_t}; \btheta_{t-1}), \btheta - \btheta_0\ra - \la \gb(\xb_{t,a_t}; \btheta_{t-1}), \btheta^* - \btheta_0\ra\notag \\
   & = \max_{\btheta \in \cC_{t-1}} \la \gb(\xb_{t,a_t}; \btheta_{t-1}), \btheta - \btheta_{t-1}\ra - \la \gb(\xb_{t,a_t}; \btheta_{t-1}), \btheta^* - \btheta_{t-1}\ra\notag\\
   & \leq \max_{\btheta \in \cC_{t-1}}\big\|\btheta - \btheta_{t-1}\big\|_{\Zb_{t-1}} \|\gb(\xb_{t,a_t}; \btheta_{t-1})\|_{\Zb_{t-1}^{-1}} + \big\|\btheta^* - \btheta_{t-1}\big\|_{\Zb_{t-1}} \|\gb(\xb_{t,a_t}; \btheta_{t-1})\|_{\Zb_{t-1}^{-1}}\notag \\
   & \leq 2\gamma_{t-1}\|\gb(\xb_{t,a_t}; \btheta_{t-1})/\sqrt{m}\|_{\Zb_{t-1}^{-1}},\label{eq:newalgorithm_3}
\end{align}
where the first inequality holds due to H\"{o}lder inequality, the second inequality holds due to Lemma \ref{lemma:newinset}. Combining \eqref{eq:newalgorithm_0}, \eqref{eq:newalgorithm_2} and \eqref{eq:newalgorithm_3}, we have
\begin{align}
    &h(\xb_{t,a_t^*}) - h(\xb_{t,a_t})\notag \\
    & \leq 2\gamma_{t-1}\|\gb(\xb_{t,a_t}; \btheta_{t-1})/\sqrt{m}\|_{\Zb_{t-1}^{-1}} + C_1\sqrt{\hb^\top\Hb^{-1}\hb}m^{-1/6}\sqrt{\log m}t^{1/6}\lambda^{-1/6}L^{7/2}\notag \\
    & \qquad + 2C_2m^{-1/6}\sqrt{\log m} t^{2/3}\lambda^{-2/3}L^3\notag \\
    & \leq \min\bigg\{2\gamma_{t-1}\|\gb(\xb_{t,a_t}; \btheta_{t-1})/\sqrt{m}\|_{\Zb_{t-1}^{-1}} + C_1\sqrt{\hb^\top\Hb^{-1}\hb}m^{-1/6}\sqrt{\log m}t^{1/6}\lambda^{-1/6}L^{7/2}\notag \\
    & \qquad + 2C_2m^{-1/6}\sqrt{\log m} t^{2/3}\lambda^{-2/3}L^3, 1\bigg\}\notag \\
    & \leq \min\bigg\{2\gamma_{t-1}\|\gb(\xb_{t,a_t}; \btheta_{t-1})/\sqrt{m}\|_{\Zb_{t-1}^{-1}}, 1\bigg\} + C_1\sqrt{\hb^\top\Hb^{-1}\hb}m^{-1/6}\sqrt{\log m}t^{1/6}\lambda^{-1/6}L^{7/2}\notag \\
    & \qquad + 2C_2m^{-1/6}\sqrt{\log m} t^{2/3}\lambda^{-2/3}L^3\notag \\
    & \leq 2\gamma_{t-1}\min\bigg\{\|\gb(\xb_{t,a_t}; \btheta_{t-1})/\sqrt{m}\|_{\Zb_{t-1}^{-1}}, 1\bigg\} + C_1\sqrt{\hb^\top\Hb^{-1}\hb}m^{-1/6}\sqrt{\log m}t^{1/6}\lambda^{-1/6}L^{7/2}\notag \\
    & \qquad + 2C_2m^{-1/6}\sqrt{\log m} t^{2/3}\lambda^{-2/3}L^3,\label{eq:newalgorithm_4}
\end{align}
where the second inequality holds due to the fact that 
 $0 \leq h(\xb_{t,a_t^*}) - h(\xb_{t,a_t}) \leq 1$, the third inequality holds due to the fact that $\min\{a+b,1\} \leq \min\{a,1\}+b$, the fourth inequality holds due to the fact $\gamma_{t-1} \geq \sqrt{\lambda}S \geq 1$. Finally, by the fact that $\sqrt{2\hb\Hb^{-1}\hb} \leq S$, the proof completes.
\end{proof}

\subsection{Proof of Lemma \ref{lemma:newboundregret}}
In this section we prove Lemma \ref{lemma:newboundregret}, we need the following lemma from \citet{abbasi2011improved}.
\begin{lemma}[Lemma 11, \citet{abbasi2011improved}]\label{lemma:oriself}
We have the following inequality:
\begin{align}
    \sum_{t=1}^T \min\bigg\{\|\gb(\xb_{t,a_t}; \btheta_{t-1})/\sqrt{m}\|_{\Zb_{t-1}^{-1}}^2,1\bigg\} \leq 2 \log \frac{\det \Zb_T}{\det \lambda \Ib}.\notag
\end{align}
\end{lemma}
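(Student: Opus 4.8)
The plan is to prove this as the standard elliptical-potential (log-determinant) bound, relating each summand to a per-round increment of $\log\det\Zb_t$ and then telescoping. Write $\vb_t = \gb(\xb_{t,a_t};\btheta_{t-1})/\sqrt{m}$, so that the update in Algorithm~\ref{algorithm:2} becomes the rank-one perturbation $\Zb_t = \Zb_{t-1} + \vb_t\vb_t^\top$. Since $\Zb_0 = \lambda\Ib \succ 0$ and each added term is positive semidefinite, every $\Zb_t$ is positive definite, so $\Zb_{t-1}^{-1}$ and the Mahalanobis norms $\|\vb_t\|_{\Zb_{t-1}^{-1}}^2 = \vb_t^\top\Zb_{t-1}^{-1}\vb_t$ are well-defined. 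Note that this lemma is purely linear-algebraic and does not interact with the neural-network structure: the $\vb_t$ are treated as arbitrary vectors.

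First I would establish the per-round determinant identity via the matrix determinant lemma applied to the rank-one update:
\begin{align}
\det\Zb_t &= \det\big(\Zb_{t-1} + \vb_t\vb_t^\top\big)\notag\\
&= \det\Zb_{t-1}\cdot\big(1 + \vb_t^\top\Zb_{t-1}^{-1}\vb_t\big)\notag\\
&= \det\Zb_{t-1}\cdot\big(1 + \|\vb_t\|_{\Zb_{t-1}^{-1}}^2\big).\notag
\end{align}
Taking logarithms and summing over $t \in [T]$ telescopes to
\begin{align}
\sum_{t=1}^T \log\big(1 + \|\vb_t\|_{\Zb_{t-1}^{-1}}^2\big) = \log\frac{\det\Zb_T}{\det\Zb_0} = \log\frac{\det\Zb_T}{\det\lambda\Ib},\notag
\end{align}
where the final equality uses $\Zb_0 = \lambda\Ib$.

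The only remaining ingredient is the elementary scalar inequality $\min\{x,1\} \le 2\log(1+x)$, valid for all $x \ge 0$, applied with $x = \|\vb_t\|_{\Zb_{t-1}^{-1}}^2$. For $x \ge 1$ the left-hand side is $1$ while $2\log(1+x) \ge 2\log 2 > 1$; for $0 \le x \le 1$ the function $g(x) = 2\log(1+x) - x$ has $g(0) = 0$ and $g'(x) = 2/(1+x) - 1 \ge 0$, so $x \le 2\log(1+x)$. Summing this bound over $t$ and substituting the telescoped identity gives
\begin{align}
\sum_{t=1}^T \min\big\{\|\vb_t\|_{\Zb_{t-1}^{-1}}^2, 1\big\} \le 2\sum_{t=1}^T \log\big(1 + \|\vb_t\|_{\Zb_{t-1}^{-1}}^2\big) = 2\log\frac{\det\Zb_T}{\det\lambda\Ib},\notag
\end{align}
which is exactly the claim. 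There is no substantive obstacle here; the two points meriting care are the factor $2$ in the scalar inequality, which is precisely what makes it hold uniformly for all $x \ge 0$ rather than only for small $x$, and the bookkeeping that $\Zb_t$ remains positive definite throughout so that the determinant identity and the Mahalanobis norms are legitimate.
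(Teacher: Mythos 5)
Your proof is correct and is precisely the standard argument behind this lemma: the paper gives no proof of its own but imports it as Lemma~11 of \citet{abbasi2011improved}, whose proof is exactly your combination of the rank-one determinant identity $\det\Zb_t = \det\Zb_{t-1}\big(1+\|\vb_t\|_{\Zb_{t-1}^{-1}}^2\big)$, telescoping from $\Zb_0 = \lambda\Ib$, and the scalar bound $\min\{x,1\}\leq 2\log(1+x)$ for $x\geq 0$. Both your determinant identity and your verification of the scalar inequality check out, so there is nothing to add.
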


\begin{proof}[Proof of Lemma \ref{lemma:newboundregret}]
First by the definition of $\gamma_t$, we know that $\gamma_t$ is a monotonic function w.r.t. $\det \Zb_t$. By the definition of $\Zb_t$, we know that $\Zb_T \succeq \Zb_t$, which implies that $\det \Zb_t \leq \det \Zb_T$. Thus, $\gamma_t \leq \gamma_T$. 
Second, by Lemma \ref{lemma:oriself} we know that 
\begin{align}
     &\sum_{t=1}^T \min\bigg\{\|\gb(\xb_{t,a_t}; \btheta_{t-1})/\sqrt{m}\|_{\Zb_{t-1}^{-1}}^2,1\bigg\} \notag \\
     &\leq 2 \log \frac{\det \Zb_T}{\det \lambda \Ib}\notag \\
     & \leq 2 \log \frac{\det \bar \Zb_T}{\det \lambda \Ib} + C_1m^{-1/6}\sqrt{\log m} L^4T^{5/3}\lambda^{-1/6},\label{selfnormal:1}
\end{align}
where the second inequality holds due to Lemma \ref{lemma:newboundz}. Next we are going to bound $\log \det \bar\Zb_T$. Denote $\Gb = [\gb(\xb^1; \btheta_0)/\sqrt{m},\dots,\gb(\xb^{TK}; \btheta_0)/\sqrt{m}] \in \RR^{p \times (TK)}$, then we have
\begin{align}
    \log \frac{\det \bar \Zb_T}{\det \lambda \Ib} 
    & = \log \det \bigg(\Ib + \sum_{t=1}^T \gb(\xb_{t,a_t}; \btheta_0)\gb(\xb_{t,a_t}; \btheta_0)^\top/(m\lambda) \bigg)\notag \\
    & \leq \log \det \bigg(\Ib + \sum_{i=1}^{TK} \gb(\xb^i; \btheta_0)\gb(\xb^i; \btheta_0)^\top/(m\lambda) \bigg)\notag \\
    & = \log \det \bigg(\Ib + \Gb\Gb^\top/\lambda \bigg)\notag \\
    & = \log \det \bigg(\Ib + \Gb^\top\Gb/\lambda \bigg),\label{selfnormal:2}
\end{align}
where the inequality holds naively, the third equality holds since for any matrix $\Ab \in \RR^{p \times TK}$, we have $\det (\Ib + \Ab\Ab^\top) = \det (\Ib + \Ab^\top\Ab)$.  We can further bound \eqref{selfnormal:2} as follows:
\CC{
\begin{align}
    \log \det \bigg(\Ib + \Gb^\top\Gb/\lambda \bigg) &= \log \det \bigg(\Ib +\Hb/\lambda + (\Gb^\top\Gb - \Hb)/\lambda \bigg)\notag \\
    & \leq \log \det \bigg(\Ib +\Hb/\lambda\bigg) + \la (\Ib + \Hb/\lambda)^{-1},(\Gb^\top\Gb - \Hb)/\lambda \ra \notag \\
    & \leq \log \det \bigg(\Ib +\Hb/\lambda\bigg) + \|(\Ib + \Hb/\lambda)^{-1}\|_F\|\Gb^\top\Gb - \Hb\|_F/\lambda\notag \\
    & \leq \log \det \bigg(\Ib +\Hb/\lambda\bigg) + \sqrt{TK}\|\Gb^\top\Gb - \Hb\|_F\notag \\
    & \leq \log \det \bigg(\Ib +\Hb/\lambda\bigg) +1\notag \\
    & = \tilde d\log(1+TK/\lambda) +1,\label{selfnormal:3}
\end{align}
}
where the first inequality holds due to the concavity of $\log \det (\cdot)$, the second inequality holds due to the fact that $\la \Ab, \Bb\ra \leq \|\Ab\|_F \|\Bb\|_F$, the third inequality holds due to the facts that $\Ib + \Hb/\lambda \succeq \Ib$, $\lambda \geq 1$ and $\|\Ab\|_F \leq \sqrt{TK}\|\Ab\|_2$ for any $\Ab \in \RR^{TK \times TK}$, the fourth inequality holds by Lemma \ref{lemma:ntkconverge} with the choice of $m$, 
the fifth inequality holds by the definition of effective dimension in Definition \ref{def:effective_dimension}, and the last inequality holds due to the choice of $\lambda$. Substituting \eqref{selfnormal:3} into \eqref{selfnormal:2}, we obtain that
\begin{align}
     \log \frac{\det \bar \Zb_T}{\det \lambda \Ib} \leq \tilde d\log(1+TK/\lambda) +1.\label{selfnormal:3.1}
\end{align}
Substituting \eqref{selfnormal:3.1} into \eqref{selfnormal:1}, we have
\begin{align}
    \sum_{t=1}^T \min\bigg\{\|\gb(\xb_{t,a_t}; \btheta_{t-1})/\sqrt{m}\|_{\Zb_{t-1}^{-1}}^2,1\bigg\} \leq 2\tilde d\log(1+TK/\lambda) +2 + C_1m^{-1/6}\sqrt{\log m} L^4T^{5/3}\lambda^{-1/6}. \label{selfnormal:3.2}
\end{align}
We now bound $\gamma_T$, which is 
\begin{align}
    \gamma_T &= \sqrt{1+C_1m^{-1/6}\sqrt{\log m}L^4T^{7/6}\lambda^{-7/6}}\notag \\
    &\qquad \cdot \bigg(\nu\sqrt{\log\frac{\det \Zb_T}{\det \lambda \Ib} + C_2m^{-1/6}\sqrt{\log m} L^4T^{5/3}\lambda^{-1/6}-2\log\delta} + \sqrt{\lambda}S\bigg)\notag \\
    &\qquad +  (\lambda +C_3TL)\Big[(1- \eta m \lambda)^{J/2} \sqrt{T/(m\lambda)} + m^{-2/3}\sqrt{\log m}L^{7/2}T^{5/3}\lambda^{-5/3}(1+\sqrt{T/\lambda})\Big]\notag\\
    & \leq \sqrt{1+C_1m^{-1/6}\sqrt{\log m}L^4T^{7/6}\lambda^{-7/6}}\notag \\
    &\qquad \cdot \bigg(\nu\sqrt{\log\frac{\det \bar \Zb_T}{\det \lambda \Ib} + 2C_2m^{-1/6}\sqrt{\log m} L^4T^{5/3}\lambda^{-1/6}-2\log\delta} + \sqrt{\lambda}S\bigg)\notag \\
    &\qquad +  (\lambda +C_3TL)\Big[(1- \eta m \lambda)^{J/2} \sqrt{T/(m\lambda)} + m^{-2/3}\sqrt{\log m}L^{7/2}T^{5/3}\lambda^{-5/3}(1+\sqrt{T/\lambda})\Big],\label{selfnormal:100}
\end{align}
where the inequality holds due to Lemma \ref{lemma:newboundz}. Finally, we have
\begin{align}
     &\sqrt{\sum_{t=1}^T\gamma_{t-1}^2\min\bigg\{\|\gb(\xb_{t,a_t}; \btheta_{t-1})/\sqrt{m}\|_{\Zb_{t-1}^{-1}}^2, 1\bigg\}} \notag \\
     & \leq \gamma_T\sqrt{\sum_{t=1}^T\min\bigg\{\|\gb(\xb_{t,a_t}; \btheta_{t-1})/\sqrt{m}\|_{\Zb_{t-1}^{-1}}^2, 1\bigg\}}\notag \\
     &\leq  \sqrt{\log\frac{\det \bar\Zb_T}{\det \lambda\Ib} + C_1m^{-1/6}\sqrt{\log m} L^4T^{5/3}\lambda^{-1/6}}\bigg[\sqrt{1+C_1m^{-1/6}\sqrt{\log m}L^4T^{7/6}\lambda^{-7/6}}\notag \\
    &\qquad \cdot \bigg(\nu\sqrt{\log\frac{\det \bar \Zb_T}{\det \lambda \Ib} + 2C_2m^{-1/6}\sqrt{\log m} L^4T^{5/3}\lambda^{-1/6}-2\log\delta} + \sqrt{\lambda}S\bigg)\notag \\
    &\qquad +  (\lambda +C_3TL)\Big[(1- \eta m \lambda)^{J/2} \sqrt{T/(m\lambda)} + m^{-3/2}\sqrt{\log m}L^{7/2}T^{5/3}\lambda^{-5/3}(1+\sqrt{T/\lambda})\Big]\bigg]\notag \\
     & \leq  \sqrt{\tilde d\log(1+TK/\lambda) +1 + C_1m^{-1/6}\sqrt{\log m} L^4T^{5/3}\lambda^{-1/6}}\bigg[\sqrt{1+C_1m^{-1/6}\sqrt{\log m}L^4T^{7/6}\lambda^{-7/6}}\notag \\
    &\qquad \cdot \bigg(\nu\sqrt{\tilde d\log(1+TK/\lambda) +1 + 2C_2m^{-1/6}\sqrt{\log m} L^4T^{5/3}\lambda^{-1/6}-2\log\delta} + \sqrt{\lambda}S\bigg)\notag \\
    &\qquad +  (\lambda +C_3TL)\Big[(1- \eta m \lambda)^{J/2} \sqrt{T/(m\lambda)} + m^{-3/2}\sqrt{\log m}L^{7/2}T^{5/3}\lambda^{-5/3}(1+\sqrt{T/\lambda})\Big]\bigg]\notag,
\end{align}
where the first inequality holds due to the fact that $\gamma_{t-1} \leq \gamma_T$, the second inequality holds due to \eqref{selfnormal:3.2} and \eqref{selfnormal:100}, the third inequality holds due to \eqref{selfnormal:3.1}. This completes our proof. 
\end{proof}

\section{Proofs of Technical Lemmas in Appendix \ref{section:new2}}\label{section:new3}

\subsection{Proof of Lemma \ref{lemma:ntkconverge}}
In this section we prove Lemma \ref{lemma:ntkconverge}, we need the following lemma from \citet{arora2019exact}:
\begin{lemma}[Theorem 3.1, \citet{arora2019exact}]\label{lemma:ntkconcen}
Fix $\epsilon>0$ and $\delta \in (0,1)$. Suppose that 
\begin{align}
    m  = \Omega\bigg(\frac{L^6\log (L/\delta)}{\epsilon^4}\bigg),\notag
\end{align}
then for any $i,j \in [TK]$, with probability at least $1-\delta$ over random initialization of $\btheta_0$, we have
\begin{align}
    |\la \gb(\xb^i; \btheta_0), \gb(\xb^j;\btheta_0)\ra/m - \Hb_{i,j}| \leq \epsilon.\label{lemma:ntkconcen_1}
\end{align}
\end{lemma}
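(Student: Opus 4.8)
The plan is to exploit the recursive, layer-by-layer structure of the network in Definition~\ref{def:ntk} and to establish the entrywise concentration by a careful induction on depth; this is exactly the strategy underlying Theorem~3.1 of \citet{arora2019exact}, which I would re-derive here rather than merely cite.

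First I would decompose the empirical kernel by layers. Writing the forward signals $\mathbf{x}^{(0)}_i = \xb^i$ and $\mathbf{x}^{(l)}_i = \sigma(\Wb_l \mathbf{x}^{(l-1)}_i)$, and the backward sensitivity vectors $\mathbf{b}^{(l)}_i$ obtained by backpropagation, the gradient of $f$ with respect to $\Wb_l$ evaluated at $\xb^i$ is an outer product $\mathbf{b}^{(l)}_i (\mathbf{x}^{(l-1)}_i)^\top$. Consequently
\[
\la \gb(\xb^i;\btheta_0), \gb(\xb^j;\btheta_0)\ra/m = \sum_{l=1}^L \frac{\la \mathbf{b}^{(l)}_i, \mathbf{b}^{(l)}_j\ra}{m}\cdot \frac{\la \mathbf{x}^{(l-1)}_i, \mathbf{x}^{(l-1)}_j\ra}{m}.
\]
This reduces the task to controlling the forward Gram quantities $\la \mathbf{x}^{(l)}_i,\mathbf{x}^{(l)}_j\ra/m$ and the backward Gram quantities $\la \mathbf{b}^{(l)}_i,\mathbf{b}^{(l)}_j\ra/m$ separately.

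Second, I would prove forward concentration by induction on $l$. Conditioned on the layer-$(l-1)$ features, each coordinate of the pre-activation $\Wb_l\mathbf{x}^{(l-1)}$ is centered Gaussian, so the pair of pre-activations for inputs $i,j$ is jointly Gaussian with covariance determined by the empirical layer-$(l-1)$ inner products. The quantity $\la \mathbf{x}^{(l)}_i,\mathbf{x}^{(l)}_j\ra/m$ is then an average of $m$ i.i.d.\ sub-exponential terms $\sigma(u)\sigma(v)$ whose mean equals $\bSigma^{(l)}_{i,j}$ up to the Lipschitz dependence of the half-normal functional on its argument covariance. A Bernstein bound gives $|\la \mathbf{x}^{(l)}_i,\mathbf{x}^{(l)}_j\ra/m - \bSigma^{(l)}_{i,j}| = O(\epsilon/L)$ once $m \ge c\,L^2\log(L/\delta)/\epsilon^2$ for a suitable constant $c$, and the induction propagates the perturbed covariance forward using the $O(1)$-Lipschitzness of the functionals $\EE[\sigma(u)\sigma(v)]$ and $\EE[\sigma'(u)\sigma'(v)]$ appearing in Definition~\ref{def:ntk}. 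The same argument, run from the output layer downward, controls the backward Grams $\la \mathbf{b}^{(l)}_i,\mathbf{b}^{(l)}_j\ra/m$, which are Gaussian chaoses in the weights $\Wb_{l+1},\dots,\Wb_L$ and the diagonal activation-derivative matrices, concentrating around the derivative covariances.

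Finally, I would recombine via the layer decomposition: each summand is a product of two $O(1)$-bounded factors that are each within $O(\epsilon/L)$ of their population values, hence within $O(\epsilon/L)$ of the matching term of $\Hb_{i,j}$, and summing over the $L$ layers yields $|\la \gb(\xb^i;\btheta_0),\gb(\xb^j;\btheta_0)\ra/m - \Hb_{i,j}| \le \epsilon$ on the intersection of the per-layer good events, which has probability $1-\delta$ after a union bound over $O(L)$ layers. The main obstacle is the error bookkeeping needed to reach the stated width $m = \Omega(L^6\log(L/\delta)/\epsilon^4)$: the covariance perturbation must be tracked through all $L$ layers with Lipschitz constants that can grow polynomially in depth (the source of the $L$-powers), while the entries enter the kernel \emph{quadratically} as products of a forward and a backward factor, so that $\epsilon$-accuracy in the product forces $\epsilon^2$-scale accuracy in each factor and hence the $\epsilon^{-4}$ sample-size dependence. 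A secondary subtlety is that the forward and backward signals share the same random weights; one must either freeze the ReLU activation patterns and argue they are perturbed negligibly, or decouple the two factors through a Lipschitz argument, to avoid spurious dependence when invoking the concentration inequalities.
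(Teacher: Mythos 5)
The paper does not actually prove this lemma: it is imported verbatim as Theorem~3.1 of \citet{arora2019exact}, so the relevant comparison is with that source's proof, whose architecture your sketch correctly reproduces --- decompose $\la \gb(\xb^i;\btheta_0),\gb(\xb^j;\btheta_0)\ra/m$ layer by layer into products of forward and backward Gram factors, prove per-layer concentration by induction, and union bound over the $O(L)$ layers. (One small point: the high-probability statement here is for a fixed pair $(i,j)$; the union over all $T^2K^2$ pairs is taken later, in Lemma~\ref{lemma:ntkconverge}, which is why only $\log(L/\delta)$ appears in the width requirement --- your union over layers alone is consistent with that.)

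However, your error bookkeeping contains a genuine flaw. You assert that both functionals $\EE[\sigma(u)\sigma(v)]$ and $\EE[\sigma'(u)\sigma'(v)]$ are $O(1)$-Lipschitz in the covariance. This is true for the first (in the correlation $\rho$ its derivative is the bounded arc-cosine kernel), but false for the second: for standardized inputs, $\EE[\sigma'(u)\sigma'(v)] = (\pi - \arccos\rho)/(2\pi)$, whose derivative blows up like $1/\sqrt{1-\rho^2}$ as $\rho \to \pm 1$, so it is only H\"older-$1/2$ near parallel inputs --- and the diagonal entries $i=j$ sit exactly at $\rho = 1$. With genuinely Lipschitz propagation plus Bernstein, your accounting would deliver $m = \Omega(\mathrm{poly}(L)\epsilon^{-2})$, contradicting the stated $\epsilon^{-4}$; the discrepancy signals that the induction step fails precisely at the $\sigma'$-functional. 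Relatedly, your explanation of the $\epsilon^{-4}$ is incorrect: for $O(1)$-bounded factors one has $|ab - a_0b_0| \leq |a|\,|b-b_0| + |b_0|\,|a-a_0|$, so taking products costs only a constant, not a squaring of accuracy. The true source of $\epsilon^{-4}$ is the H\"older-$1/2$ continuity above: $\epsilon$-accuracy in the NTK recursion forces $\epsilon^2$-accuracy in the forward covariances. Finally, the forward--backward dependence is the technical crux, and neither of your proposed remedies is adequate as stated: at initialization there is no perturbation against which to ``freeze'' the ReLU patterns, and a Lipschitz decoupling runs into the same discontinuity of $\sigma'$; the cited proof instead conditions on the forward computation and decomposes each weight matrix into a determined low-rank projection plus an independent fresh Gaussian, which is the missing ingredient your sketch would need to make the backward concentration rigorous.
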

\begin{proof}[Proof of Lemma \ref{lemma:ntkconverge}]
Taking union bound over $i,j \in [TK]$, we have that if 
\begin{align}
    m  = \Omega\bigg(\frac{L^6\log (T^2K^2L/\delta)}{\epsilon^4}\bigg),\notag
\end{align}
then with probability at least $1-\delta$, \eqref{lemma:ntkconcen_1} holds for all $(i,j) \in [TK] \times [TK]$. Therefore, we have
\begin{align}
    \|\Gb^\top\Gb - \Hb\|_F& = \sqrt{\sum_{i=1}^{TK}\sum_{j=1}^{TK} |\la \gb(\xb^i; \btheta_0), \gb(\xb^j;\btheta_0)\ra/m - \Hb_{i,j}|^2} \leq TK\epsilon.\notag
\end{align}

\end{proof}

\subsection{Proof of Lemma \ref{lemma:newboundreference}}
In this section we prove Lemma \ref{lemma:newboundreference}. 
During the proof, for simplicity, we omit the subscript $t$ by default. We define the following quantities:
\begin{align}
    &\Jb^{(j)} = 
    \Big(\gb(\xb_{1,a_1};\btheta^{(j)}),
    \dots,
    \gb(\xb_{t,a_t};\btheta^{(j)})\Big)
     \in \RR^{ (md+m^2(L-2)+m)\times t},\notag \\
    &\Hb^{(j)} = [\Jb^{(j)}]^\top\Jb^{(j)} \in \RR^{t \times t},\notag \\
    &\fb^{(j)} = (f(\xb_{1,a_1};\btheta^{(j)}),\dots,f(\xb_{t,a_t}; \btheta^{(j)}))^\top \in \RR^{t \times 1},\notag \\
    & \yb = (r_{1,a_1},\dots,r_{t,a_t}) \in \RR^{t \times 1}.\notag
\end{align}
Then the update rule of $\btheta^{(j)}$ can be written as follows:
\begin{align}
    \btheta^{(j+1)} &= \btheta^{(j)} - \eta \big[\Jb^{(j)}( \fb^{(j)} - \yb) + m\lambda(\btheta^{(j)} - \btheta^{(0)})\big].\label{lemma:in_the_ball_-1}
\end{align}
We also define the following auxiliary sequence $\{\tilde\btheta^{(k)}\}$ during the proof:
\begin{align}
     \tilde \btheta^{(0)} = \btheta^{(0)},\ 
     \tilde \btheta^{(j+1)} = \tilde \btheta^{(j)} - \eta\big[\Jb^{(0)}([\Jb^{(0)}]^\top(\tilde \btheta^{(j)} - \tilde \btheta^{(0)}) - \yb ) +  m\lambda(\tilde\btheta^{(j)} - \tilde\btheta^{(0)})\big].\notag
\end{align}

Next lemma provides perturbation bounds for $\Jb^{(j)}, \Hb^{(j)}$ and $\|\fb^{(j+1)} - \fb^{(j)} - [\Jb^{(j)}]^\top(\btheta^{(j+1)} - \btheta^{(j)})\|_2$.
\begin{lemma}\label{lemma:jacob}
There exist constants $\{\bar C_i\}_{i=1}^6>0$ such that for any $\delta > 0$, if $\tau$ satisfies that
\begin{align}
    \bar C_1m^{-3/2}L^{-3/2}[\log(TKL^2/\delta)]^{3/2}\leq\tau \leq \bar C_2 L^{-6}[\log m]^{-3/2},\notag
\end{align}
then with probability at least $1-\delta$,
if for any $j \in [J]$, $\|\btheta^{(j)} -\btheta^{(0)}\|_2\leq \tau$,  we have the following inequalities for any $j,s \in[J]$,
\begin{align}
     &\big\|\Jb^{(j)}\big\|_F \leq \bar C_4 \sqrt{tmL},\label{lemma:jacobcc1}\\
     &\|\Jb^{(j)} - \Jb^{(0)}\|_F  \leq \bar C_5\sqrt{tm\log m}\tau^{1/3}L^{7/2},\label{lemma:jacobcc2}\\
&\big\|\fb^{(s)} - \fb^{(j)} - [\Jb^{(j)}]^\top(\btheta^{(s)} - \btheta^{(j)})\big\|_2 \leq 
    \bar C_6 \tau^{4/3}L^3 \sqrt{tm \log m}, \label{lemma:jacobcc3} \\
    &\|\yb\|_2 \leq \sqrt{t}.\label{lemma:jacobcc4}
\end{align}
\end{lemma}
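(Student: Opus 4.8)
The plan is to prove each of the four inequalities by a columnwise (or componentwise) reduction to the three off-the-shelf overparameterization lemmas already recorded: Lemma~\ref{lemma:cao_functionvalue} (the Taylor remainder of the network value), Lemma~\ref{lemma:cao_gradientdifference} (gradient stability), and Lemma~\ref{lemma:cao_boundgradient} (the gradient-norm bound). The observation enabling all of these is that $\Jb^{(j)}$ has the per-context gradients $\gb(\xb_{i,a_i};\btheta^{(j)})$ as its $t$ columns, so each Frobenius norm in \eqref{lemma:jacobcc1}–\eqref{lemma:jacobcc2} splits as a sum of $t$ squared column norms, and the $i$-th entry of the vector appearing in \eqref{lemma:jacobcc3} is exactly a per-context Taylor remainder. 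Since \text{TrainNN} is initialized at $\btheta^{(0)} = \btheta_0$ and the hypothesis places every iterate in the ball $\{\btheta: \|\btheta - \btheta^{(0)}\|_2 \le \tau\}$, and since the played contexts satisfy $\{\xb_{i,a_i}\}_{i=1}^t \subseteq \{\xb^i\}_{i=1}^{TK}$, all three cited lemmas apply to the relevant columns; I would first take a union bound over them and shrink $\bar C_1, \bar C_2$ so that the single $\tau$-window in the statement sits inside the admissible window of each.

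For \eqref{lemma:jacobcc1}, I would write $\|\Jb^{(j)}\|_F^2 = \sum_{i=1}^t \|\gb(\xb_{i,a_i};\btheta^{(j)})\|_2^2$ and apply Lemma~\ref{lemma:cao_boundgradient} to each column, giving $\|\gb(\xb_{i,a_i};\btheta^{(j)})\|_2 = O(\sqrt{mL})$ and hence $\|\Jb^{(j)}\|_F = O(\sqrt{tmL})$. For \eqref{lemma:jacobcc2}, I would similarly expand $\|\Jb^{(j)} - \Jb^{(0)}\|_F^2 = \sum_{i=1}^t \|\gb(\xb_{i,a_i};\btheta^{(j)}) - \gb(\xb_{i,a_i};\btheta^{(0)})\|_2^2$, bound each column difference by Lemma~\ref{lemma:cao_gradientdifference} as $O(\sqrt{\log m}\,\tau^{1/3}L^3\,\|\gb(\xb_{i,a_i};\btheta_0)\|_2)$, and then invoke Lemma~\ref{lemma:cao_boundgradient} at $\btheta_0$ to replace $\|\gb(\xb_{i,a_i};\btheta_0)\|_2$ by $O(\sqrt{mL})$; summing the $t$ terms yields the claimed $\sqrt{tm\log m}\,\tau^{1/3}L^{7/2}$ scaling.

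For \eqref{lemma:jacobcc3}, I would note that the $i$-th coordinate of $\fb^{(s)} - \fb^{(j)} - [\Jb^{(j)}]^\top(\btheta^{(s)} - \btheta^{(j)})$ is $f(\xb_{i,a_i};\btheta^{(s)}) - f(\xb_{i,a_i};\btheta^{(j)}) - \la \gb(\xb_{i,a_i};\btheta^{(j)}), \btheta^{(s)} - \btheta^{(j)}\ra$, which is exactly the quantity controlled by Lemma~\ref{lemma:cao_functionvalue} with $\tilde\btheta = \btheta^{(s)}$ and $\hat\btheta = \btheta^{(j)}$ (both in the ball by hypothesis). Each coordinate is then $O(\tau^{4/3}L^3\sqrt{m\log m})$, and aggregating $t$ squared coordinates contributes the extra $\sqrt{t}$ factor. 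Finally, \eqref{lemma:jacobcc4} is immediate from $\yb = (r_{1,a_1},\dots,r_{t,a_t})$ and boundedness of the rewards in $[0,1]$, which gives $\|\yb\|_2^2 = \sum_{i=1}^t r_{i,a_i}^2 \le t$.

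I do not expect any deep obstacle here: the heavy lifting is entirely absorbed into the three cited NTK lemmas, and this lemma is essentially a repackaging of their columnwise consequences together with the elementary $\sqrt{t}$ aggregation. The only genuine care required is bookkeeping—verifying that the stated $\tau$-window is simultaneously admissible for all three lemmas, so that their differing endpoints can be folded into $\bar C_1, \bar C_2$, and tracking how the per-context exponents of $L$ and $\tau$ combine. In particular, the $L^{7/2}$ in \eqref{lemma:jacobcc2} should emerge as $L^3$ (from the gradient-difference lemma) times $L^{1/2}$ (from the $\sqrt{mL}$ gradient-norm bound), a detail I would double-check when assembling the final constants.
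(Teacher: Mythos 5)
Your proposal is correct and follows essentially the same route as the paper's proof: a columnwise/coordinatewise reduction of each Frobenius or vector norm to Lemmas~\ref{lemma:cao_boundgradient}, \ref{lemma:cao_gradientdifference}, and \ref{lemma:cao_functionvalue}, followed by a $\sqrt{t}$ aggregation (the paper uses $\|\cdot\|_F \le \sqrt{t}\max_i\|\cdot\|_2$ where you sum squares, which is the same up to constants), and the same $L^{7/2} = L^3\cdot L^{1/2}$ accounting for \eqref{lemma:jacobcc2}.
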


Next lemma gives an upper bound for $\|\fb^{(j)} - \yb\|_2$.
\begin{lemma}\label{lemma:linearconvergence}
There exist constants $\{\bar C_i\}_{i=1}^4>0$ such that for any $\delta > 0$, if $\tau, \eta$ satisfy that
    \begin{align}
    &\bar C_1m^{-3/2}L^{-3/2}[\log(TKL^2/\delta)]^{3/2}\leq\tau \leq \bar C_2 L^{-6}[\log m]^{-3/2},\notag ,\\
    &\eta \leq \bar C_3(m\lambda + tmL)^{-1},\notag \\
    &\tau^{8/3} \leq \bar C_4m(\lambda\eta)^2L^{-6}t^{-1}(\log m)^{-1},\notag
\end{align}
then with probability at least $1-\delta$,
if for any $j \in [J]$, $\|\btheta^{(j)}-\btheta^{(0)}\|_2\leq \tau$,  we have that for any $j \in [J]$, $\|\fb^{(j)} - \yb\|_2\leq 2\sqrt{t}$.

\end{lemma}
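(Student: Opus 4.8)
The plan is to control the true trajectory $\{\btheta^{(j)}\}$ by comparing it against the auxiliary linearized trajectory $\{\tilde\btheta^{(j)}\}$, for which the residual admits a clean convexity bound, and then to bound the gap between the two trajectories using the perturbation estimates of Lemma~\ref{lemma:jacob}.

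First I would dispose of the auxiliary sequence. Observe that $\{\tilde\btheta^{(j)}\}$ is precisely gradient descent on the convex quadratic $\tilde\cL(\btheta) = \frac12\|[\Jb^{(0)}]^\top(\btheta-\btheta^{(0)}) - \yb\|_2^2 + \frac{m\lambda}{2}\|\btheta-\btheta^{(0)}\|_2^2$, whose Hessian $m\lambda\Ib + \Jb^{(0)}[\Jb^{(0)}]^\top$ has eigenvalues in $[m\lambda,\ \|\Jb^{(0)}\|_2^2 + m\lambda]$. Since \eqref{lemma:jacobcc1} gives $\|\Jb^{(0)}\|_2 \le \|\Jb^{(0)}\|_F \le \bar C_4\sqrt{tmL}$, the condition $\eta \le \bar C_3(m\lambda + tmL)^{-1}$ (with $\bar C_3$ small) ensures $\eta$ is below the reciprocal smoothness, so $\tilde\cL$ is monotonically nonincreasing along the iterates. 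As $\tilde\btheta^{(0)} = \btheta^{(0)}$ and $f(\xb^i;\btheta^{(0)}) = 0$, we have $\tilde\cL(\tilde\btheta^{(0)}) = \frac12\|\yb\|_2^2 \le t/2$ by \eqref{lemma:jacobcc4}, whence $\frac12\|[\Jb^{(0)}]^\top(\tilde\btheta^{(j)} - \btheta^{(0)}) - \yb\|_2^2 \le \tilde\cL(\tilde\btheta^{(j)}) \le t/2$. Thus the linearized residual satisfies $\|[\Jb^{(0)}]^\top(\tilde\btheta^{(j)} - \btheta^{(0)}) - \yb\|_2 \le \sqrt t$ for every $j \in [J]$.

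Next I would bound the gap $\btheta^{(j)} - \tilde\btheta^{(j)}$. Subtracting the two update rules, writing $\eb^{(j)} = \fb^{(j)} - [\Jb^{(0)}]^\top(\btheta^{(j)}-\btheta^{(0)})$ for the linearization error of $f$ around $\btheta^{(0)}$ (controlled via \eqref{lemma:jacobcc3} with $\fb^{(0)}=\zero$), and regrouping the regularization term with the zeroth-order curvature, yields
\begin{align}
\btheta^{(j+1)} - \tilde\btheta^{(j+1)} = \big(\Ib - \eta(m\lambda\Ib + \Jb^{(0)}[\Jb^{(0)}]^\top)\big)(\btheta^{(j)} - \tilde\btheta^{(j)}) - \eta\big[(\Jb^{(j)} - \Jb^{(0)})(\fb^{(j)} - \yb) + \Jb^{(0)}\eb^{(j)}\big].\notag
\end{align}
Because $m\lambda\Ib + \Jb^{(0)}[\Jb^{(0)}]^\top \succeq m\lambda\Ib$ and $\eta$ is below the reciprocal smoothness, the propagation matrix has spectral norm at most $1-\eta m\lambda$, a \emph{strict} contraction. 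This is the decisive point: it turns the potential $J$-fold accumulation of per-step errors into a geometric series summing to a factor $(\eta m\lambda)^{-1}\cdot\eta = (m\lambda)^{-1}$, independent of $J$.

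Finally I would run a coupled induction on $j$ maintaining simultaneously (i) $\|\fb^{(j)} - \yb\|_2 \le 2\sqrt t$ and (ii) a smallness bound on $\|\btheta^{(j)} - \tilde\btheta^{(j)}\|_2$; both hold at $j=0$ since $\btheta^{(0)}=\tilde\btheta^{(0)}$ and $\|\fb^{(0)} - \yb\|_2 = \|\yb\|_2 \le \sqrt t$. Using (i) together with $\|\Jb^{(j)} - \Jb^{(0)}\|_2 \le \bar C_5\sqrt{tm\log m}\,\tau^{1/3}L^{7/2}$ from \eqref{lemma:jacobcc2}, $\|\Jb^{(0)}\|_2 \le \bar C_4\sqrt{tmL}$, and $\|\eb^{(j)}\|_2 \le \bar C_6\tau^{4/3}L^3\sqrt{tm\log m}$, summing the geometric series bounds $\|\btheta^{(j)} - \tilde\btheta^{(j)}\|_2$ by $(m\lambda)^{-1}$ times this per-step error. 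Feeding this back through $\|\fb^{(j)} - \yb\|_2 \le \|[\Jb^{(0)}]^\top(\tilde\btheta^{(j)} - \btheta^{(0)}) - \yb\|_2 + \|\eb^{(j)}\|_2 + \|\Jb^{(0)}\|_2\|\btheta^{(j)} - \tilde\btheta^{(j)}\|_2 \le \sqrt t + (\text{error terms})$ closes the induction, provided each error term is at most $\sqrt t/2$. The condition $\tau \le \bar C_2 L^{-6}(\log m)^{-3/2}$ controls the trajectory-gap contribution, while $\tau^{8/3} \le \bar C_4 m(\lambda\eta)^2 L^{-6}t^{-1}(\log m)^{-1}$ is precisely calibrated so that $\|\eb^{(j)}\|_2$ is of order $m\lambda\eta$ and the $\Jb^{(0)}\eb^{(j)}$ contribution stays below $\sqrt t/2$; both reduce to requiring $m$ large. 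The main obstacle is exactly this last step: controlling the error accumulated over the possibly very large number of iterations $J$, which is feasible only because the regularizer supplies the $(1-\eta m\lambda)$ contraction, after which one must verify that the stated $\tau$-conditions make every residual term negligible relative to $\sqrt t$.
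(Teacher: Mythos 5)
Your route is genuinely different from the paper's. The paper proves this lemma by a direct descent argument on the regularized nonlinear loss $\cL(\btheta) = \frac{1}{2}\|\fb(\btheta)-\yb\|_2^2 + \frac{m\lambda}{2}\|\btheta-\btheta^{(0)}\|_2^2$: using approximate smoothness and approximate strong convexity, with the linearization error $\eb$ from Lemma~\ref{lemma:jacob} as the only perturbation, it derives the recursion $\cL(\btheta^{(j+1)})-\cL(\btheta^{(0)}) \le (1-m\lambda\eta/2)\big[\cL(\btheta^{(j)})-\cL(\btheta^{(0)})\big] + m\lambda\eta t$ and unrolls it, so that $\|\fb^{(j)}-\yb\|_2^2 \le 2\cL(\btheta^{(j)}) = O(t)$ falls out without ever touching the auxiliary sequence. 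You instead couple the true trajectory to the linearized one and transfer the residual bound across the gap --- essentially the machinery the paper reserves for Lemma~\ref{lemma:newboundreference}. Your coupled induction is the right device for avoiding circularity (the paper's proof of Lemma~\ref{lemma:newboundreference} invokes the present lemma, so you cannot simply cite the trajectory-gap bound), and the subtracted update identity and the $(1-\eta m\lambda)$ contraction are correct.

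There is, however, a concrete gap in the closing step. You need $\|\Jb^{(0)}\|_2\,\|\btheta^{(j)}-\tilde\btheta^{(j)}\|_2 \le \sqrt t/2$, and the geometric series gives $\|\btheta^{(j)}-\tilde\btheta^{(j)}\|_2 \le (m\lambda)^{-1}\big[\|\Jb^{(j)}-\Jb^{(0)}\|_2\cdot 2\sqrt t + \|\Jb^{(0)}\|_2\|\eb^{(j)}\|_2\big]$. Multiplying the first contribution by $\|\Jb^{(0)}\|_2 = O(\sqrt{tmL})$ produces a term of order $t^{3/2}L^4\sqrt{\log m}\,\tau^{1/3}/\lambda$, so you need $\tau^{1/3} \le \lambda/(C t L^4\sqrt{\log m})$. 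This is \emph{not} implied by the three stated hypotheses: even taking $\eta$ at its maximal allowed value, the $\tau^{8/3}$ condition only yields $\tau = O\big(\lambda^{3/4}m^{-3/8}t^{-9/8}L^{-3}(\log m)^{-3/8}\big)$, which forces the needed bound only when $m = \Omega\big(\lambda^{-6}t^5L^{24}(\log m)^3\big)$ --- a largeness requirement on $m$ that the lemma statement does not impose. The paper's loss-descent argument sidesteps this entirely because the only error it must absorb is $\|\eb\|_2^2/(m\lambda\eta) \le m\lambda\eta t/2$, which is exactly what the stated $\tau^{8/3}$ condition delivers. So your plan does establish the conclusion in the regime where the lemma is actually deployed ($\tau = 2\sqrt{t/(m\lambda)}$ with $m$ polynomially huge), but not under the hypotheses as written; to repair it you would either add the missing condition on $m$ or switch to the direct argument on $\cL$.
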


Next lemma gives an upper bound of the distance between auxiliary sequence $\|\tilde \btheta^{(j)} -  \btheta^{(0)}\|_2$.
\begin{lemma}\label{lemma:implicitbias}
There exist constants $\{\bar C_i\}_{i=1}^3>0$ such that for any $\delta \in (0,1)$, if $\tau, \eta$ satisfy that
\begin{align}
    &\bar C_1m^{-3/2}L^{-3/2}[\log(TKL^2/\delta)]^{3/2}\leq\tau \leq \bar C_2 L^{-6}[\log m]^{-3/2},\notag ,\\
    &\eta \leq \bar C_3(tmL + m\lambda)^{-1} ,\notag
\end{align}
then with probability at least $1-\delta$, we have that for any $j \in[J]$,
\begin{align}
    &\big\|\tilde\btheta^{(j)} - \btheta^{(0)}\big\|_2 \leq \sqrt{t/(m\lambda)},\notag \\ &\big\|\tilde\btheta^{(j)} - \btheta^{(0)} - \bar\Zb^{-1}\bar \bbb/\sqrt{m}\big\|_2 \leq (1 - \eta m \lambda)^{j/2} \sqrt{t/(m\lambda)}\notag
\end{align}
\end{lemma}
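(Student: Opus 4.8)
The plan is to recognize the auxiliary sequence $\{\tilde\btheta^{(j)}\}$ as \emph{exact} gradient descent on a ridge-regularized least-squares objective built from the \emph{frozen} initialization feature map $\Jb^{(0)}$, so that its iterates converge linearly to the associated ridge-regression solution; both claimed bounds then reduce to controlling the norm of that solution. The first move is to rewrite the recursion for the shifted iterate $\tilde\btheta^{(j)} - \btheta^{(0)}$. Using the two exact identifications $\Jb^{(0)}[\Jb^{(0)}]^\top/m + \lambda\Ib = \bar\Zb$ and $\Jb^{(0)}\yb/\sqrt{m} = \bar\bbb$ (immediate from the definitions of $\bar\Zb_t,\bar\bbb_t$ and $\Jb^{(0)}$, recalling $\btheta^{(0)}=\btheta_0$), the update collapses to the affine iteration
\begin{align}
\tilde\btheta^{(j+1)} - \btheta^{(0)} = (\Ib - \eta m\bar\Zb)\big(\tilde\btheta^{(j)} - \btheta^{(0)}\big) + \eta\sqrt{m}\,\bar\bbb,\notag
\end{align}
whose unique fixed point is exactly $\bar\Zb^{-1}\bar\bbb/\sqrt{m}$. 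Since $\tilde\btheta^{(0)} = \btheta^{(0)}$, subtracting the fixed-point relation gives the clean identities $\tilde\btheta^{(j)} - \btheta^{(0)} - \bar\Zb^{-1}\bar\bbb/\sqrt{m} = -(\Ib - \eta m\bar\Zb)^j\,\bar\Zb^{-1}\bar\bbb/\sqrt{m}$ and $\tilde\btheta^{(j)} - \btheta^{(0)} = \big(\Ib - (\Ib - \eta m\bar\Zb)^j\big)\,\bar\Zb^{-1}\bar\bbb/\sqrt{m}$.

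Next I would establish the spectral condition $\zero \preceq \eta m\bar\Zb \preceq \Ib$. The lower bound is immediate from $\bar\Zb \succeq \lambda\Ib \succ \zero$; for the upper bound I use $\|\bar\Zb\|_2 \leq \lambda + \|\Jb^{(0)}\|_F^2/m \leq \lambda + \bar C tL$, via the initialization bound $\|\Jb^{(0)}\|_F \leq \bar C_4\sqrt{tmL}$ from Lemma \ref{lemma:jacob}, together with $\eta \leq \bar C_3(tmL + m\lambda)^{-1}$ for a small enough $\bar C_3$. Crucially this bound and $\|\yb\|_2 \leq \sqrt{t}$ concern only $\Jb^{(0)}$ and the data, so no ball condition on the trained iterates is needed, which is why the present lemma is stated unconditionally. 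The spectral condition yields $\|\Ib - \eta m\bar\Zb\|_2 \leq 1-\eta m\lambda$ and $\|\Ib - (\Ib - \eta m\bar\Zb)^j\|_2 \leq 1$ for every $j$; feeding these into the two identities above gives
\begin{align}
\big\|\tilde\btheta^{(j)} - \btheta^{(0)} - \bar\Zb^{-1}\bar\bbb/\sqrt{m}\big\|_2 &\leq (1-\eta m\lambda)^j\,\big\|\bar\Zb^{-1}\bar\bbb/\sqrt{m}\big\|_2,\notag\\
\big\|\tilde\btheta^{(j)} - \btheta^{(0)}\big\|_2 &\leq \big\|\bar\Zb^{-1}\bar\bbb/\sqrt{m}\big\|_2.\notag
\end{align}
The target exponent $j/2$ in the second assertion of the lemma follows because $1-\eta m\lambda \in [0,1]$.

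Both bounds now hinge on the single tight estimate $\|\bar\Zb^{-1}\bar\bbb/\sqrt{m}\|_2 \leq \sqrt{t/(m\lambda)}$, which is the main obstacle: the naive estimate $\|\bar\Zb^{-1}\|_2\|\bar\bbb\|_2/\sqrt{m}$ loses a factor of order $\sqrt{tL/\lambda}$ and is far too weak for the downstream analysis. Writing $\Ab = \Jb^{(0)}/\sqrt{m}$ so that $\bar\Zb = \lambda\Ib + \Ab\Ab^\top$ and $\bar\bbb = \Ab\yb$, I would first use $\bar\Zb^{-2} \preceq \bar\Zb^{-1}/\lambda$ (from $\bar\Zb \succeq \lambda\Ib$) to obtain $\|\bar\Zb^{-1}\bar\bbb/\sqrt{m}\|_2^2 = \bar\bbb^\top\bar\Zb^{-2}\bar\bbb/m \leq \bar\bbb^\top\bar\Zb^{-1}\bar\bbb/(m\lambda)$, and then the push-through identity $\Ab^\top(\lambda\Ib + \Ab\Ab^\top)^{-1}\Ab = (\lambda\Ib + \Ab^\top\Ab)^{-1}\Ab^\top\Ab \preceq \Ib$ to conclude $\bar\bbb^\top\bar\Zb^{-1}\bar\bbb = \yb^\top\Ab^\top\bar\Zb^{-1}\Ab\yb \leq \|\yb\|_2^2 \leq t$, the last step using $\|\yb\|_2 \leq \sqrt{t}$ from Lemma \ref{lemma:jacob}. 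Combining gives $\|\bar\Zb^{-1}\bar\bbb/\sqrt{m}\|_2 \leq \sqrt{t/(m\lambda)}$, which finishes both claims. Everything except this final algebraic bound is a routine linear-iteration contraction argument; the genuine content is the exact identification of the auxiliary iteration with ridge regression and the sharp control of its fixed point.
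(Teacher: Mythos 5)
Your proof is correct, and while it rests on the same core identification as the paper's --- the auxiliary sequence is gradient descent on the frozen-feature ridge objective $\tfrac12\|[\Jb^{(0)}]^\top(\btheta-\btheta^{(0)})-\yb\|_2^2 + \tfrac{m\lambda}{2}\|\btheta-\btheta^{(0)}\|_2^2$, whose minimizer is $\btheta^{(0)}+\bar\Zb^{-1}\bar\bbb/\sqrt{m}$ --- the execution is genuinely different. The paper argues via convex optimization: it bounds $\|\tilde\btheta^{(j)}-\btheta^{(0)}\|_2$ by the monotone decrease of the loss along GD iterates (so $\tfrac{m\lambda}{2}\|\tilde\btheta^{(j)}-\btheta^{(0)}\|_2^2 \le \tilde\cL(\tilde\btheta^{(j)}) \le \tilde\cL(\tilde\btheta^{(0)}) = \|\yb\|_2^2/2 \le t/2$), and invokes the standard linear-convergence rate of GD for $m\lambda$-strongly-convex, $O(tmL+m\lambda)$-smooth functions to get the $(1-\eta m\lambda)^{j/2}$ factor from the function-value gap. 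You instead solve the affine recursion in closed form, bound the contraction factor spectrally, and control the fixed point directly via $\bar\Zb^{-2}\preceq\bar\Zb^{-1}/\lambda$ and the push-through identity $\Ab^\top(\lambda\Ib+\Ab\Ab^\top)^{-1}\Ab\preceq\Ib$. Your route is more elementary and self-contained (no appeal to quoted GD convergence theorems), makes the unconditional nature of the lemma transparent (only $\|\Jb^{(0)}\|_F$ and $\|\yb\|_2$ are needed, both available at initialization), and in fact yields the sharper rate $(1-\eta m\lambda)^{j}$ before you relax it to the stated $(1-\eta m\lambda)^{j/2}$; the paper's route generalizes more readily to settings where the iteration is not exactly affine. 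Both correctly use $\|\Jb^{(0)}\|_F\le \bar C\sqrt{tmL}$ and $\|\yb\|_2\le\sqrt t$ from Lemma~\ref{lemma:jacob} and the step-size condition to ensure the contraction.
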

With above lemmas, we prove Lemma \ref{lemma:newboundreference} as follows.
\begin{proof}[Proof of Lemma \ref{lemma:newboundreference}]
Set $\tau = 2\sqrt{t/(m\lambda)}$. First we assume that $\|\btheta^{(j)}-\btheta^{(0)}\|_2\leq\tau$ for all $0 \leq j \leq J$. Then with this assumption and the choice of $m,\tau$, we have that Lemma \ref{lemma:jacob},  \ref{lemma:linearconvergence} and \ref{lemma:implicitbias} hold. Then we have
\begin{align}
\big\|\btheta^{(j+1)} - \tilde \btheta^{(j+1)}\big\|_2 &= \big\|\btheta^{(j)} - \tilde \btheta^{(j)} - \eta(\Jb^{(j)} - \Jb^{(0)})( \fb^{(j)} - \yb)-\eta m \lambda (\btheta^{(j)} - \tilde \btheta^{(j)}) \notag \\
& \qquad -  \eta \Jb^{(0)} ( \fb^{(j)} -  [\Jb^{(0)}]^\top(\tilde\btheta^{(j)} - \btheta^{(0)}))\big\|_2\notag \\
&= \Big\|(1-\eta m\lambda)(\btheta^{(j)} - \tilde \btheta^{(j)}) - \eta(\Jb^{(j)} - \Jb^{(0)})(  \fb^{(j)} - \yb) \notag \\
&\qquad -  \eta \Jb^{(0)} \Big[  \fb^{(j)}   -  [\Jb^{(0)}]^\top (\btheta^{(j)} - \btheta^{(0)})+  [\Jb^{(0)}]^\top(\btheta^{(j)} - \tilde\btheta^{(j)})\Big]\Big\|_2\notag \\
& \leq  \underbrace{\eta\big\|(\Jb^{(j)} - \Jb^{(0)})(  \fb^{(j)} - \yb)\big\|_2}_{I_1} + \underbrace{\eta\|\Jb^{(0)}\|_2\big\|  \fb^{(j)}  -  [\Jb^{(0)}] (\btheta^{(j)} - \btheta^{(0)})\big\|_2}_{I_2} \notag \\
&\qquad + \underbrace{\big\|\big[\Ib -\eta (m \lambda \Ib+ \Hb^{(0)})\big](\tilde\btheta^{(j)} - \btheta^{(j)})\big\|_2}_{I_3}, \label{lemma:in_the_ball_-53.10}
\end{align}
where the inequality holds due to triangle inequality.
We now bound $I_1, I_2$ and $I_3$ separately. For $I_1$, we have
\begin{align}
    I_1 &\leq \eta\big\|\Jb^{(j)} - \Jb^{(0)}\big\|_2\| \fb^{(j)}- \yb\|_2 \leq \eta C_2t\sqrt{m\log m}\tau^{1/3}L^{7/2} ,\label{lemma:in_the_ball_-53.999}
\end{align}
where $C_2>0$ is a constant, the first inequality holds due to the definition of matrix spectral norm and the second inequality holds due to \eqref{lemma:jacobcc2} in Lemma \ref{lemma:jacob} and Lemma \ref{lemma:linearconvergence}.
For $I_2$, we have
\begin{align}
    I_2 & \leq \eta\big\|\Jb^{(0)}\big\|_2 \Big\|\fb^{(j)}  - \Jb^{(0)}(\btheta^{(j)} - \btheta^{(0)})\Big\|_2 \leq \eta C_3 tmL^{7/2}\tau^{4/3}\sqrt{\log m},\label{lemma:in_the_ball_-53.11}
\end{align}
where $C_3>0$, the first inequality holds due to matrix spectral norm, the second inequality holds due to \eqref{lemma:jacobcc1} and \eqref{lemma:jacobcc3} in Lemma \ref{lemma:jacob} and the fact that $\fb^{(0)} = \zero$ by random initialization over $\btheta^{(0)}$. For $I_3$, we have
\begin{align}
    I_3 \leq \big\|\Ib -\eta (m \lambda \Ib+ \Hb^{(0)})\big\|_2\big\|\tilde\btheta^{(j)} - \btheta^{(j)}\big\|_2 \leq (1-\eta m\lambda)\big\|\tilde\btheta^{(j)} - \btheta^{(j)}\big\|_2,\label{lemma:in_the_ball_-53.111}
\end{align}
where the first inequality holds due to spectral norm inequality, the second inequality holds since 
\begin{align}
    \eta (m \lambda \Ib+ \Hb^{(0)}) = \eta(m \lambda \Ib+ [\Jb^{(0)}]^\top\Jb^{(0)}) \preceq \eta(m \lambda \Ib+ C_1tmL\Ib) \preceq \Ib,\notag
\end{align}
for some $C_1>0$, the first inequality holds due to  \eqref{lemma:jacobcc1} in Lemma \ref{lemma:jacob}, the second inequality holds due to the choice of $\eta$. 

Substituting \eqref{lemma:in_the_ball_-53.999}, \eqref{lemma:in_the_ball_-53.11} and \eqref{lemma:in_the_ball_-53.111} into \eqref{lemma:in_the_ball_-53.10}, we obtain
\begin{align}
    &\big\|\btheta^{(j+1)} - \tilde\btheta^{(j+1)}\big\|_2 \leq (1-\eta m \lambda)\big\|\btheta^{(j)} - \tilde\btheta^{(j)}\big\|_2 + C_4\big(\eta t\sqrt{m\log m}\tau^{1/3}L^{7/2} +\eta  tmL^{7/2}\tau^{4/3}\sqrt{\log m}\big),\label{lemma:in_the_ball_-53.104}
\end{align}
where $C_4>0$ is a constant. 
By recursively applying  \eqref{lemma:in_the_ball_-53.104} from $0$ to $j$, we have
\begin{align}
    \big\|\btheta^{(j+1)} - \tilde\btheta^{(j+1)}\big\|_2
    & \leq C_4\frac{\eta t\sqrt{m\log m}\tau^{1/3}L^{7/2} + \eta  tmL^{7/2}\tau^{4/3}\sqrt{\log m}}{ \eta m \lambda}\notag \\
    & =  C_5m^{-2/3}\sqrt{\log m}L^{7/2}t^{5/3}\lambda^{-5/3}(1+\sqrt{t/\lambda})\notag \\
    & \leq \frac{\tau}{2},\label{lemma:in_the_ball_-53.1041}
\end{align}
where $C_5>0$ is a constant, the equality holds by the definition of $\tau$, the last inequality holds due to the choice of $m$, where
\begin{align}
m^{1/6}\geq C_6\sqrt{\log m}L^{7/2}t^{7/6}\lambda^{-7/6}(1+\sqrt{t/\lambda}),\notag
\end{align}
and $C_6>0$ is a constant.
Thus, for any $j \in[J]$, we have
\begin{align}
    \|\btheta^{(j)} - \btheta^{(0)}\|_2 \leq \|\tilde\btheta^{(j)} - \btheta^{(0)}\|_2 + \|\btheta^{(j)}-\tilde\btheta^{(j)} \|_2 \leq \sqrt{t/(m\lambda)} + \tau/2 = \tau,\label{lemma:in_the_ball_-53.1055}
\end{align}
where the first inequality holds due to triangle inequality, the second inequality holds due to Lemma \ref{lemma:implicitbias}. \eqref{lemma:in_the_ball_-53.1055} suggests that our assumption $\|\btheta^{(j)} -\btheta^{(0)}\|_2\leq \tau$ holds for any $j$. Note that we have the following inequality by Lemma \ref{lemma:implicitbias}:
\begin{align}
    \big\|\tilde\btheta^{(j)} - \btheta^{(0)} - (\bar\Zb)^{-1}\bar \bbb/\sqrt{m}\big\|_2 \leq  (1- \eta m \lambda)^j \sqrt{t/(m\lambda)}.\label{lemma:in_the_ball_-53.105}
\end{align}
Using \eqref{lemma:in_the_ball_-53.1041} and \eqref{lemma:in_the_ball_-53.105}, we have
\begin{align}
    \big\|\btheta^{(j)} - \btheta^{(0)} - \bar\Zb^{-1}\bar \bbb/\sqrt{m}\big\|_2 \leq (1- \eta m \lambda)^{j/2} \sqrt{t/(m\lambda)} + C_5m^{-2/3}\sqrt{\log m}L^{7/2}t^{5/3}\lambda^{-5/3}(1+\sqrt{t/\lambda}).\notag
\end{align}
This completes the proof.
\end{proof}

\subsection{Proof of Lemma \ref{lemma:newboundz}}
In this section we prove Lemma \ref{lemma:newboundz}.

\begin{proof}[Proof of Lemma \ref{lemma:newboundz}]
Set $\tau = 2\sqrt{t/(m\lambda)}$. By Lemma \ref{lemma:newboundreference} we have that $\|\btheta_i - \btheta_0\|_2 \leq \tau$ for $i \in [t]$. 
$\|\Zb_t\|_2$ can be bounded as follows.
\begin{align}
    \|\Zb_t\|_2 &= \bigg\|\lambda \Ib + \sum_{i=1}^t\gb(\xb_{i,a_i}; \btheta_{i-1})\gb(\xb_{i,a_i}; \btheta_{i-1})^\top/m\bigg\|_2\notag \\
    & \leq \lambda +  \bigg\|\lambda \Ib + \sum_{i=1}^t\gb(\xb_{i,a_i}; \btheta_{i-1})\gb(\xb_{i,a_i}; \btheta_{i-1})^\top/m\bigg\|_2\notag \\
    & \leq \lambda + \sum_{i=1}^t\big\|\gb(\xb_{i,a_i}; \btheta_{i-1})\big\|_2^2/m \notag \\
    & \leq \lambda + C_0tL,\notag
\end{align}
where $C_0>0$ is a constant, the first inequality holds due to the fact that $\|\ab\ab^\top\|_F = \|\ab\|_2^2$, the second inequality holds due to Lemma \ref{lemma:cao_boundgradient} with the fact that $\|\btheta_i -\btheta_0\|_2\leq \tau$.
We bound $\|\Zb_t - \bar \Zb_t\|_2$ as follows. We have
\begin{align}
    \|\Zb_t - \bar \Zb_t\|_F &= \bigg\|\sum_{i=1}^t\Big(\gb(\xb_{i,a_i}; \btheta_{0})\gb(\xb_{i,a_i}; \btheta_{0})^\top -  \gb(\xb_{i,a_i}; \btheta_{i})\gb(\xb_{i,a_i}; \btheta_{i})^\top\Big)/m\bigg\|_F\notag \\
    & \leq \sum_{i=1}^t\Big\|\gb(\xb_{i,a_i}; \btheta_{0})\gb(\xb_{i,a_i}; \btheta_{0})^\top -  \gb(\xb_{i,a_i}; \btheta_{i})\gb(\xb_{i,a_i}; \btheta_{i})^\top\Big\|_F/m\notag \\
    & \leq \sum_{i=1}^t\Big(\big\|\gb(\xb_{i,a_i}; \btheta_{0})\big\|_2 + \big\|\gb(\xb_{i,a_i}; \btheta_{i})\big\|_2\Big)\big\|\gb(\xb_{i,a_i}; \btheta_{0}) - \gb(\xb_{i,a_i}; \btheta_{i})\big\|_2/m,\label{eq:newboundz_0}
\end{align}
where the first inequality holds due to triangle inequality, the second inequality holds the fact that $\|\ab\ab^\top - \bbb\bbb^\top\|_F \leq (\|\ab\|_2 + \|\bbb\|_2)\|\ab - \bbb\|_2$ for any vectors $\ab, \bbb$. To bound \eqref{eq:newboundz_0}, we have
\begin{align}
    \big\|\gb(\xb_{i,a_i}; \btheta_{0})\big\|_2, \big\|\gb(\xb_{i,a_i}; \btheta_{i})\big\|_2 \leq C_1\sqrt{mL},\label{eq:newboundz_1}
\end{align}
where $C_1>0$ is a constant, the inequality holds due to Lemma \ref{lemma:cao_boundgradient} with the fact that $\|\btheta_i -\btheta_0\|_2\leq \tau$. We also have
\begin{align}
    \big\|\gb(\xb_{i,a_i}; \btheta_{0}) - \gb(\xb_{i,a_i}; \btheta_{i})\big\|_2 \leq C_2\sqrt{\log m}\tau^{1/3}L^3\|\gb(\xb_j;  \btheta_0)\|_2 \leq C_3\sqrt{m\log m}\tau^{1/3}L^{7/2},\label{eq:newboundz_2}
\end{align}
where $C_2, C_3>0$ are constants, the first inequality holds due to Lemma \ref{lemma:cao_gradientdifference} with the fact that $\|\btheta_i -\btheta_0\|_2\leq \tau$, the second inequality holds due to Lemma \ref{lemma:cao_boundgradient}. Substituting \eqref{eq:newboundz_1} and \eqref{eq:newboundz_2} into \eqref{eq:newboundz_0}, we have
\begin{align}
    \|\Zb_t - \bar \Zb_t\|_F \leq C_4t\sqrt{\log m}\tau^{1/3}L^4,\notag
\end{align}
where $C_4>0$ is a constant. We now bound $\log\det \bar \Zb_t - \log\det \Zb_t$. 
It is easy to verify that $\bar \Zb_t = \lambda\Ib + \bar\Jb\bar\Jb^\top$, $\Zb_t = \lambda\Ib + \Jb\Jb^\top$, where
\begin{align}
&\bar \Jb =  \Big(\gb(\xb_{1,a_1};\btheta_0),
    \dots,
    \gb(\xb_{t,a_t};\btheta_0)\Big)/\sqrt{m},\notag \\
    &\Jb = \Big(\gb(\xb_{1,a_1};\btheta_0),
    \dots,
    \gb(\xb_{t,a_t};\btheta_{t-1})\Big)/\sqrt{m}.\notag
\end{align}
We have the following inequalities: 
\begin{align}
    \log \frac{\det(\bar \Zb_t)}{\det(\lambda\Ib)} - \log\frac{ \det(\Zb_t)}{\det (\lambda\Ib)}& = \log \det(\Ib + \bar\Jb\bar\Jb^\top/\lambda) - \log \det(\Ib + \Jb\Jb^\top/\lambda)\notag \\
    & = \log \det(\Ib + \bar\Jb^\top\bar\Jb/\lambda) - \log \det(\Ib + \Jb^\top\Jb/\lambda)\notag \\
      &\leq  \la (\Ib + \Jb^\top\Jb/\lambda)^{-1}, \bar\Jb^\top\bar\Jb - \Jb^\top\Jb\ra \notag \\
     & \leq \|(\Ib + \Jb^\top\Jb/\lambda)^{-1}\|_F\|\bar\Jb^\top\bar\Jb - \Jb^\top\Jb\|_F\notag \\
     & \leq \sqrt{t}\|(\Ib + \Jb^\top\Jb/\lambda)^{-1}\|_2\|\bar\Jb^\top\bar\Jb - \Jb^\top\Jb\|_F\notag \\
     & \leq \sqrt{t}\|\bar\Jb^\top\bar\Jb - \Jb^\top\Jb\|_F,\label{eq:newboundz_4}
\end{align}
where the second equality holds due to the fact that $\det (\Ib + \Ab\Ab^\top) = \det (\Ib + \Ab^\top\Ab)$, the first inequality holds due to the fact that $\log\det$ function is convex, the second inequality hold due to the fact that $\la\Ab, \Bb\ra \leq \|\Ab\|_F\|\Bb\|_F$, the third inequality holds since $\Ib + \Jb^\top\Jb/\lambda$ is a $t$-dimension matrix, the fourth inequality holds since $\Ib + \Jb^\top\Jb/\lambda \succeq \Ib$. We have
\begin{align}
    &\|\bar\Jb^\top\bar\Jb - \Jb^\top\Jb\|_F \notag \\
    & \leq t\max_{1 \leq i,j \leq t}
    \Big|\gb(\xb_{i,a_i}; \btheta_{0})^\top\gb(\xb_{j,a_j}; \btheta_{0}) -  \gb(\xb_{i,a_i}; \btheta_{i})^\top\gb(\xb_{j,a_j}; \btheta_{j})\Big|/m\notag \\
    & \leq t\max_{1 \leq i,j \leq t}
    \big\|\gb(\xb_{i,a_i}; \btheta_{0}) - \gb(\xb_{i,a_i}; \btheta_{i})\big\|_2\big\|\gb(\xb_{j,a_j}; \btheta_{j})\big\|_2/m \notag \\
    &\qquad + \big\|\gb(\xb_{j,a_j}; \btheta_{0}) -  \gb(\xb_{j,a_j}; \btheta_{j})\big\|_2\big\|\gb(\xb_{i,a_i}; \btheta_{0})\big\|_2/m\notag\\
    & \leq C_5t\sqrt{\log m} \tau^{1/3}L^4,\label{eq:newboundz_5}
\end{align}
where $C_5>0$ is a constant, the first inequality holds due to the fact that $\|\Ab\|_F \leq t\max|\Ab_{i,j}|$ for any $\Ab \in \RR^{t \times t}$, the second inequality holds due to the fact $|\ab^\top\ab' - \bbb^\top\bbb'| \leq \|\ab-\bbb\|_2\|\bbb'\|_2 +\|\ab'-\bbb'\|_2\|\ab\|_2$, the third inequality holds due to \eqref{eq:newboundz_1} and \eqref{eq:newboundz_2}. Substituting \eqref{eq:newboundz_5} into \eqref{eq:newboundz_4}, we obtain 
\begin{align}
    \log \frac{\det(\bar\Zb_t)}{\det(\lambda\Ib)} - \log\frac{ \det( \Zb_t)}{\det (\lambda\Ib)} \leq C_5t^{3/2}\sqrt{\log m}\tau^{1/3}L^4.\notag
\end{align}
Using the same method, we also have
\begin{align}
    \log \frac{\det(\Zb_t)}{\det(\lambda\Ib)} - \log\frac{ \det(\bar \Zb_t)}{\det (\lambda\Ib)} \leq C_5t^{3/2}\sqrt{\log m}\tau^{1/3}L^4.\notag
\end{align}
This completes our proof.

\end{proof}

\section{Proofs of Lemmas in Appendix \ref{section:new3}}

\subsection{Proof of Lemma \ref{lemma:jacob}}
In this section we give the proof of Lemma \ref{lemma:jacob}.
\begin{proof}[Proof of Lemma \ref{lemma:jacob}]
It can be verified that $\tau$ satisfies the conditions of Lemmas \ref{lemma:cao_functionvalue}, \ref{lemma:cao_gradientdifference} and \ref{lemma:cao_boundgradient}. Thus, Lemmas \ref{lemma:cao_functionvalue}, \ref{lemma:cao_gradientdifference} and \ref{lemma:cao_boundgradient} hold.
We will show that for any $j \in [J]$, the following inequalities hold. 
First, we have
\begin{align}
    \big\|\Jb^{(j)}\big\|_F \leq \sqrt{t}\max_{i \in [t]}\big\|\gb(\xb_{i,a_i}; \btheta^{(j)})\big\|_2 \leq C_1 \sqrt{tmL},\label{lemma:jacob_-1}
\end{align}
where $C_1>0$ is a constant, the first inequality holds due to the fact that $\|\Jb^{(j)}\|_F \leq \sqrt{t}\|\Jb^{(j)}\|_{2, \infty}$, the second inequality holds due to Lemma \ref{lemma:cao_boundgradient}.

We also have
\begin{align}
    \|\Jb^{(j)} - \Jb^{(0)}\|_F &\leq C_2\sqrt{\log m}\tau^{1/3}L^3\|\Jb^{(0)}\|_F \leq C_3\sqrt{tm\log m}\tau^{1/3}L^{7/2},\label{lemma:jacob_0}
\end{align}
where $C_2, C_3>0$ are constants, the first inequality holds due to Lemma \ref{lemma:cao_gradientdifference} with the assumption that $\|\btheta^{(j)}-\btheta^{(0)}\|_2\leq \tau$, the second inequality holds due to \eqref{lemma:jacob_-1}. 

We also have 
\begin{align}
    &\big\|\fb^{(s)} - \fb^{(j)} - [\Jb^{(j)}]^\top(\btheta^{(s)} - \btheta^{(j)})\big\|_2 \notag \\
    & \leq \max_{i \in [t]}\sqrt{t}\big|f(\xb_{i,a_i};\btheta^{(s)}) - f(\xb_{i,a_i};\btheta^{(j)}) - \la \gb(\xb_{i,a_i};\btheta^{(j)}), \btheta^{(s)} - \btheta^{(j)}\ra\big|\notag \\
    &\leq 
    C_4 \tau^{4/3}L^3 \sqrt{tm \log m},\notag
\end{align}
where $C_4>0$ is a constant, the first inequality holds due to the the fact that $\|\xb \|_2 \leq \sqrt{t}\max|x_i|$ for any $\xb \in \RR^t$, the second inequality holds due to Lemma \ref{lemma:cao_functionvalue} with the assumption that $\|\btheta^{(j)} - \btheta^{(0)}\|_2 \leq \tau,  \|\btheta^{(s)} - \btheta^{(0)}\|_2 \leq \tau$. 

For $\|\yb\|_2$, we have $\|\yb\|_2 \leq \sqrt{t}\max_{1 \leq i \leq t}|r(\xb_{i, a_i})| \leq \sqrt{t}$. This completes our proof.

\end{proof}

\subsection{Proof of Lemma \ref{lemma:linearconvergence}}
\begin{proof}[Proof of Lemma \ref{lemma:linearconvergence}]
It can be verified that $\tau$ satisfies the conditions of Lemma \ref{lemma:jacob}, thus Lemma \ref{lemma:jacob} holds.
Recall that the loss function $L$ is defined as
\begin{align}
    L(\btheta) = \frac{1}{2}\|\fb(\btheta) - \yb\|_2^2 + \frac{m\lambda}{2}\|\btheta - \btheta^{(0)}\|_2^2.\notag
\end{align}
We define $\Jb(\btheta)$ and $\fb(\btheta)$ as follows:
\begin{align}
    &\Jb(\btheta) = 
    \Big(\gb(\xb_{1,a_1};\btheta),
    \dots,
    \gb(\xb_{t,a_t};\btheta)\Big)
     \in \RR^{ (md+m^2(L-2)+m)\times t},\notag \\
    &\fb(\btheta) = (f(\xb_{1,a_1};\btheta),\dots,f(\xb_{t,a_t}; \btheta))^\top \in \RR^{t \times 1}.\notag 
\end{align}
Suppose $\|\btheta -\btheta^{(0)}\|_2\leq \tau$.
Then by the fact that $\|\cdot\|_2^2/2$ is $1$-strongly convex and $1$-smooth, we have the following inequalities: 
\begin{align}
    &L(\btheta') - L(\btheta) \notag \\
    &\leq \la \fb(\btheta) - \yb, \fb(\btheta')-  \fb(\btheta) \ra + \frac{1}{2}\big\|\fb(\btheta')-  \fb(\btheta)\big\|_2^2 + m\lambda \la \btheta - \btheta^{(0)}, \btheta' - \btheta\ra + \frac{m\lambda}{2}\big\|\btheta' - \btheta\big\|_2^2\notag \\
    & = \la \fb(\btheta) - \yb, [\Jb(\btheta)]^\top(\btheta' - \btheta)  + \eb\ra + \frac{1}{2}\big\|[\Jb(\btheta)]^\top(\btheta' - \btheta) + \eb\big\|_2^2 \notag \\
    &\qquad + m\lambda \la \btheta - \btheta^{(0)}, \btheta' - \btheta\ra + \frac{m\lambda}{2}\big\|\btheta' - \btheta\big\|_2^2\notag \\
    & =  \la \Jb(\btheta)(\fb(\btheta) - \yb) + m\lambda (\btheta - \btheta^{(0)}),\btheta' - \btheta \ra + \la  \fb(\btheta) - \yb, \eb\ra \notag \\
    &\qquad + \frac{1}{2}\big\|[\Jb(\btheta)]^\top(\btheta' - \btheta) + \eb\big\|_2^2 + \frac{m\lambda}{2}\big\|\btheta' - \btheta\big\|_2^2\notag \\
    & = \la \nabla L(\btheta), \btheta' - \btheta\ra + \underbrace{\la  \fb(\btheta) - \yb, \eb\ra + \frac{1}{2}\big\|[\Jb(\btheta)]^\top(\btheta' - \btheta) + \eb\big\|_2^2 + \frac{m\lambda}{2}\big\|\btheta' - \btheta\big\|_2^2}_{I_1},\label{eq:linearconvergence_0}
\end{align}
where $\eb = \fb(\btheta') - \fb(\btheta) - \Jb(\btheta)^\top(\btheta' - \btheta)$. 
$I_1$ can be bounded as follows:
\begin{align}
    I_1 &\leq \|\fb(\btheta) - \yb\|_2\|\eb\|_2 + \|\Jb(\btheta)\|_2^2 \|\btheta' - \btheta\|_2^2 + \|\eb\|_2^2 + \frac{m\lambda}{2}\big\|\btheta' - \btheta\big\|_2^2\notag \\
    & \leq \frac{C_1}{2}\bigg( (m\lambda + tmL)\big\|\btheta' - \btheta\big\|_2^2\bigg) + \|\fb(\btheta) - \yb\|_2\|\eb\|_2 + \|\eb\|_2^2,\label{eq:linearconvergence_1}
\end{align}
where the first inequality holds due to Cauchy-Schwarz inequality, the second inequality holds due to the fact that $\|\Jb(\btheta)\|_2 \leq C_2\sqrt{tmL}$ with $\|\btheta - \btheta^{(0)}\|_2\leq \tau$ by \eqref{lemma:jacobcc1} in Lemma \ref{lemma:jacob}. Substituting \eqref{eq:linearconvergence_1} into \eqref{eq:linearconvergence_0}, we obtain
\begin{align}
    L(\btheta') - L(\btheta) \leq \la\nabla L(\btheta), \btheta' - \btheta\ra + \frac{C_1}{2}\bigg( (m\lambda + tmL)\big\|\btheta' - \btheta\big\|_2^2\bigg) +\|\fb(\btheta) - \yb\|_2\|\eb\|_2 + \|\eb\|_2^2. \label{eq:linearconvergence_2}
\end{align}
Taking $\btheta' = \btheta - \eta\nabla L(\btheta)$, then by \eqref{eq:linearconvergence_2}, we have
\begin{align}
    L\big(\btheta - \eta \nabla L(\btheta)\big) - L(\btheta) \leq -\eta\|\nabla L(\btheta)\|_2^2\big[1 - C_1(m\lambda + tmL)\eta\big] + \|\fb(\btheta) - \yb\|_2\|\eb\|_2 + \|\eb\|_2^2.\label{eq:linearconvergence_2.5}
\end{align}
By the $1$-strongly convexity of $\|\cdot\|_2^2$, we further have
\begin{align}
    &L(\btheta') - L(\btheta)\notag \\
    & \geq \la \fb(\btheta) - \yb, \fb(\btheta')-  \fb(\btheta) \ra + m\lambda \la \btheta - \btheta^{(0)}, \btheta' - \btheta\ra + \frac{m\lambda}{2}\big\|\btheta' - \btheta\big\|_2^2\notag \\
    & = \la \fb(\btheta) - \yb, [\Jb(\btheta)]^\top(\btheta' - \btheta)  + \eb\ra  + m\lambda \la \btheta - \btheta^{(0)}, \btheta' - \btheta\ra + \frac{m\lambda}{2}\big\|\btheta' - \btheta\big\|_2^2\notag \\
    & = \la \nabla L(\btheta), \btheta' - \btheta\ra + \frac{m\lambda}{2}\big\|\btheta' - \btheta\big\|_2^2 + \la \fb(\btheta) - \yb, \eb\ra\notag \\
    & \geq \la\nabla L(\btheta), \btheta' - \btheta\ra + \frac{m\lambda}{2}\big\|\btheta' - \btheta\big\|_2^2 - \|\fb(\btheta) - \yb\|_2\|\eb\|_2\notag \\
    & \geq -\frac{\|\nabla L(\btheta)\|_2^2}{2m\lambda}- \|\fb(\btheta) - \yb\|_2\|\eb\|_2,\label{eq:linearconvergence_3}
\end{align}
where the second inequality holds due to Cauchy-Schwarz inequality, the last inequality holds due to the fact that $\la\ab, \xb\ra + c \|\xb\|_2^2 \geq -\|\ab\|_2^2/(4c)$ for any vectors $\ab, \xb$ and $c>0$. Substituting \eqref{eq:linearconvergence_3} into \eqref{eq:linearconvergence_2.5}, we obtain 
\begin{align}
    &L\big(\btheta - \eta \nabla L(\btheta)\big) - L(\btheta)\notag \\
    & \leq 2m\lambda\eta(1 - C_1(m\lambda + tmL)\eta)\big[L(\btheta') - L(\btheta) +\|\fb(\btheta) - \yb\|_2\|\eb\|_2 \big] + \|\fb(\btheta) - \yb\|_2\|\eb\|_2 + \|\eb\|_2^2\notag \\
    & \leq m\lambda\eta \big[L(\btheta') - L(\btheta) +\|\fb(\btheta) - \yb\|_2\|\eb\|_2 \big] + \|\fb(\btheta) - \yb\|_2\|\eb\|_2 + \|\eb\|_2^2\notag \\
    & \leq m\lambda\eta \big[L(\btheta') - L(\btheta) +\|\fb(\btheta) - \yb\|_2^2/8 + 2\|\eb\|_2^2 \big] + m\lambda\eta\|\fb(\btheta) - \yb\|_2^2/8 + 2\|\eb\|_2^2/(m\lambda\eta) + \|\eb\|_2^2\notag \\
    & \leq m\lambda\eta\big(L(\btheta') - L(\btheta)/2\big) + \|\eb\|_2^2\big(1 + 2 m \lambda\eta + 2/(m \lambda\eta)\big),\label{eq:linearconvergence_4}
\end{align}
where the second inequality holds due to the choice of $\eta$, 
third inequality holds due to Young's inequality, fourth inequality holds due to the fact that $\|\fb(\btheta) - \yb\|_2^2 \leq 2L(\btheta)$. Now taking $\btheta = \btheta^{(j)}$ and $\btheta' = \btheta^{(0)}$, rearranging \eqref{eq:linearconvergence_4}, with the fact that $\btheta^{(j+1)} = \btheta^{(j)} - \eta \nabla L(\btheta^{(j)})$, we have 
\begin{align}
    &L(\btheta^{(j+1)}) - L(\btheta^{(0)})\notag \\
    & \leq (1-m\lambda\eta/2)\big[L(\btheta^{(j)}) - L(\btheta^{(0)})\big] + m\lambda\eta/2L(\btheta^{(0)}) + \|\eb\|_2^2\big(1 + 2 m \lambda\eta + 2/(m \lambda\eta)\big)\notag \\
    & \leq (1-m\lambda\eta/2)\big[L(\btheta^{(j)}) - L(\btheta^{(0)})\big] +  m\lambda\eta/2\cdot t + m\lambda\eta/2\cdot t\notag\\
    & \leq (1-m\lambda\eta/2)\big[L(\btheta^{(j)}) - L(\btheta^{(0)})\big] + m\lambda\eta t,\label{eq:linearconvergence_5}
\end{align}
where the second inequality holds due to the fact that $L(\btheta^{(0)}) = \|\fb(\btheta^{(0)}) - \yb\|_2^2/2 = \|\yb\|_2^2/2 \leq t$, and 
\begin{align}
    \big(1 + 2 m \lambda\eta + 2/(m \lambda\eta)\big)\|\eb\|_2^2 \leq 3/(m\lambda \eta)\cdot  C_2 \tau^{8/3}L^6 tm\log m \leq tm\lambda\eta/2,
\end{align}
where the first inequality holds due to \eqref{lemma:jacobcc3} in Lemma \ref{lemma:jacob}, the second inequality holds due to the choice of $\tau$. 
Recursively applying  \eqref{eq:linearconvergence_5} for $u$ times, we have
\begin{align}
    L(\btheta^{(j+1)}) - L(\btheta^{(0)}) \leq \frac{m\lambda\eta t}{m\lambda\eta/2} = 2t,\notag
\end{align}
which implies that $\|\fb^{(j+1)} - \yb\|_2 \leq 2\sqrt{t}$.
This completes our proof. 
\end{proof}

\subsection{Proof of Lemma \ref{lemma:implicitbias}}
In this section we prove Lemma \ref{lemma:implicitbias}.
\begin{proof}[Proof of Lemma \ref{lemma:implicitbias}]
It can be verified that $\tau$ satisfies the conditions of Lemma \ref{lemma:jacob}, thus Lemma \ref{lemma:jacob} holds. It is worth noting that $\tilde\btheta^{(j)}$ is the sequence generated by applying gradient descent on the following problem:
\begin{align}
    \min_{\btheta}\tilde \cL(\btheta) = \frac{1}{2}\|[\Jb^{(0)}]^\top(\btheta - \btheta^{(0)}) - \yb\|_2^2 + \frac{m\lambda}{2}\big\|\btheta - \btheta^{(0)}\big\|_2^2.\notag
\end{align}
Then $\|\btheta^{(0)} - \tilde\btheta^{(j)}\|_2$ can be bounded as
\begin{align}
    \frac{m\lambda}{2}\|\btheta^{(0)} - \tilde\btheta^{(j)}\|_2^2 &\leq \frac{1}{2}\|[\Jb^{(0)}]^\top(\tilde\btheta^{(j)} - \btheta^{(0)}) - \yb\|_2^2 + \frac{m\lambda}{2}\big\|\tilde\btheta^{(j)} - \btheta^{(0)}\big\|_2^2\notag \\
    & \leq \frac{1}{2}\|[\Jb^{(0)}]^\top(\tilde\btheta^{(0)} - \btheta^{(0)}) - \yb\|_2^2 + \frac{m\lambda}{2}\big\|\tilde\btheta^{(0)} - \btheta^{(0)}\big\|_2^2\notag \\
    & \leq t/2,\notag
\end{align}
where the first inequality holds trivially, the second inequality holds due to the monotonic decreasing property brought by gradient descent, the third inequality holds due to \eqref{lemma:jacobcc4} in Lemma \ref{lemma:jacob}.
It is easy to verify that $\tilde \cL$ is a $m\lambda$-strongly convex and function and $C_1(tmL + m\lambda)$-smooth function, since 
\begin{align}
    \nabla^2 \tilde \cL \preceq \big(\big\|\Jb^{(0)}\big\|_2^2 + m\lambda\big)\Ib \preceq C_1(tmL + m\lambda),\notag
\end{align}
where the first inequality holds due to the definition of $\tilde \cL$, the second inequality holds due to \eqref{lemma:jacobcc1} in Lemma \ref{lemma:jacob}. Since we choose $\eta \leq C_2(tmL + m\lambda)^{-1}$ for some small enough $C_2>0$, then by standard results of gradient descent on ridge linear regression, $\tilde\btheta^{(j)}$ converges to $\btheta^{(0)} + (\bar\Zb)^{-1}\bar\bbb/\sqrt{m}$ with the convergence rate
\begin{align}
    \big\|\tilde\btheta^{(j)} - \btheta^{(0)} - \bar\Zb^{-1}\bbb/\sqrt{m}\big\|_2^2 &\leq (1 - \eta m \lambda)^j\cdot \frac{2}{m\lambda}(\cL(\btheta^{(0)}) - \cL\big(\btheta^{(0)} + \bar\Zb^{-1}\bbb/\sqrt{m} \big))\notag \\
    &\leq \frac{2(1 - \eta m \lambda)^j}{m\lambda}\cL(\btheta^{(0)})\notag \\
    & =\frac{2(1 - \eta m \lambda)^j}{m\lambda}\cdot \frac{\|\yb\|_2^2}{2}\notag \\
    & \leq (1 - \eta m \lambda)^jt,\notag
\end{align}  
where the first inequality holds due to the convergence result for gradient descent and the fact that $\btheta^{(0)} + (\bar\Zb)^{-1}\bar\bbb/\sqrt{m}$ is the minimal solution to $\cL$, the second inequality holds since $\cL \geq 0$, the last inequality holds due to Lemma \ref{lemma:jacob}.

\end{proof}

\section{A Variant of $\algname$}\label{sec:original}
In this section, we present a variant of $\algname$ called $\algname_0$. Compared with Algorithm~\ref{algorithm:2}, 
The main differences between \algname\ and $\algname_0$ are as follows:
\algname\ uses gradient descent to train a deep neural network to learn the reward function $h(\xb)$ based on observed contexts and rewards.  In contrast, $\algname_0$ uses matrix inversions to obtain parameters in closed forms.
At each round, \algname\ uses the current DNN parameters ($\btheta_t$) to compute an upper confidence bound.  In contrast, $\algname_0$ computes the UCB using the initial parameters ($\btheta_0$).

\begin{algorithm}
	\caption{$\algname_0$}\label{alg:old}
	\begin{algorithmic}[1]
	\STATE \textbf{Input:} number of rounds $T$, regularization parameter $\lambda$, exploration parameter $\nu$, confidence parameter $\delta$, norm parameter $S$, network width $m$, network depth $L$
	\STATE \textbf{Initialization:} Generate each entry of $\Wb_l$ independently from $N(0, 2/m)$ for $1 \leq l \leq L-1$, and each entry of $\Wb_L$ independently from $N(0, 1/m)$.  Define $\bphi(\xb)  = \gb(\xb; \btheta_0)/\sqrt{m}$, where
 $\btheta_0 = [\text{vec}(\Wb_1)^\top,\dots,\text{vec}(\Wb_L)^\top]^\top \in \RR^{p}$
	\STATE $\Zb_0 =  \lambda\Ib, \,\, \bbb_0 = \zero$
    \FOR{$t = 1,\dots,T$}
    \STATE Observe $\{\xb_{t,a}\}_{a=1}^K$ and compute
    \begin{align}
        (a_t, \tilde \btheta_{t,a_t}) = \argmax_{a \in [K], \btheta \in \cC_{t-1}} \la \bphi(\xb_{t,a}), \btheta - \btheta_0\ra\label{alg:1}
    \end{align}
    \STATE Play $a_t$ and receive reward $r_{t,a_t}$
    \STATE Compute
    \begin{align}
        \Zb_t = \Zb_{t-1} + \bphi(\xb_{t,a_t})\bphi(\xb_{t,a_t})^\top \in \RR^{p\times p},\ \,\,
        \bbb_t = \bbb_{t-1} + r_{t,a_t} \bphi(\xb_{t,a_t}) \in \RR^{p}\notag
    \end{align}
    \STATE Compute $\btheta_t = \Zb_t^{-1}\bbb_t  + \btheta_0\in \RR^{p}$
    \STATE Construct $\cC_{t}$ as
    \begin{align}
        \cC_{t} = \{\btheta:\|\btheta_t - \btheta\|_{\Zb_t} \leq \gamma_{t} \}, \quad\text{where}\quad
        \gamma_t = \nu\sqrt{\log\frac{\det \Zb_t}{\det \lambda \Ib}-2\log\delta} + \sqrt{\lambda}S\label{alg:4}
    \end{align}
        \ENDFOR
	\end{algorithmic}
\end{algorithm}

\end{document}
